\documentclass[10pt]{article} 
\usepackage[preprint]{tmlr}


\usepackage{amsmath, amsfonts, bm, amssymb, amsthm}
\usepackage{stmaryrd}








\def\eqref#1{equation~\ref{#1}}









\def\1{\bm{1}}











\DeclareMathAlphabet{\mathsfit}{\encodingdefault}{\sfdefault}{m}{sl}
\SetMathAlphabet{\mathsfit}{bold}{\encodingdefault}{\sfdefault}{bx}{n}











\newcommand{\E}{\mathbb{E}}

\newcommand{\R}{\mathbb{R}}

\newcommand{\Var}{\mathrm{Var}}



\DeclareMathOperator*{\argmin}{arg\,min}

\DeclareMathOperator{\sign}{sign}


\newtheorem{fact}{Fact}
\newtheorem{definition}{Definition}
\newtheorem{assumption}{Assumption}
\newtheorem{proposition}{Proposition}
\newtheorem{theorem}{Theorem}
\newtheorem{lemma}{Lemma}
\newtheorem{corollary}{Corollary}

\newcommand{\setint}[1]{\llbracket #1 \rrbracket}

\newtheoremstyle{myrem}%
{2pt}
{2pt}
{}
{}
{\bfseries}
{.}
{.5em}
{}%
\theoremstyle{myrem}

\renewcommand{\eqref}[1]{(\ref{#1})}

\newcommand{\N}{\mathbb{N}}

\newcommand{\cB}{\mathcal{B}}

\newcommand{\cO}{\mathcal O}
\newcommand{\cI}{\mathcal I}

\newcommand{\cL}{\mathcal{L}}

\newcommand{\Proba}{\mathbb{P}}

\DeclareMathOperator{\diam}{diam}

\newcommand{\ind}[1]{\mathbf 1_{#1}}

\DeclareMathOperator{\tv}{TV}

\DeclareMathOperator{\ber}{Bernoulli}

\newcommand{\binomial}{\mathrm{Bin}}

\newcommand{\bX}{\boldsymbol X}

\newcommand{\wh}{\widehat}

\DeclareFontFamily{U}{mathx}{\hyphenchar\font45}
\DeclareFontShape{U}{mathx}{m}{n}{
      <5> <6> <7> <8> <9> <10>
      <10.95> <12> <14.4> <17.28> <20.74> <24.88>
      mathx10
      }{}
\DeclareSymbolFont{mathx}{U}{mathx}{m}{n}
\DeclareFontSubstitution{U}{mathx}{m}{n}
\DeclareMathAccent{\widecheck}{0}{mathx}{"71}

\usepackage{hyperref}
\usepackage{url}
\usepackage{todonotes}
\usepackage{subcaption}
\usepackage[linesnumbered,ruled,vlined]{algorithm2e}
\usepackage{diagbox}
\usepackage{tablefootnote}
\usepackage{array}
\usepackage{makecell}
\usepackage{circledsteps}
\usepackage{booktabs}
\usepackage{enumitem}
\usepackage{esvect}
\usepackage{graphicx, wrapfig}

\title{Robust Stochastic Optimization via Gradient Quantile Clipping}


\author{\name Ibrahim Merad \email imerad@lpsm.paris \\
      \addr LPSM, UMR 8001 \\
      Universit\'e Paris Cité, Paris, France \\
      \AND
      \name St\'ephane Ga\"iffas \email stephane.gaiffas@lpsm.paris \\
      \addr LPSM, UMR 8001 \\
      Universit\'e Paris Cité, Paris, France \\
      DMA, École normale supérieure
      }



\begin{document}

\maketitle

\begin{abstract}
We introduce a clipping strategy for Stochastic Gradient Descent (SGD) which uses quantiles of the gradient norm as clipping thresholds. We prove that this new strategy provides a robust and efficient optimization algorithm for smooth objectives (convex or non-convex), that tolerates heavy-tailed samples (including infinite variance) and a fraction of outliers in the data stream akin to Huber contamination. Our mathematical analysis leverages the connection between constant step size SGD and Markov chains and handles the bias introduced by clipping in an original way. For strongly convex objectives, we prove that the iteration converges to a concentrated distribution and derive high probability bounds on the final estimation error. In the non-convex case, we prove that the limit distribution is localized on a neighborhood with low gradient. We propose an implementation of this algorithm using rolling quantiles which leads to a highly efficient optimization procedure with strong robustness properties, as confirmed by our numerical experiments.%
\end{abstract}


\section{Introduction}
Stochastic gradient descent (SGD)~\citep{robbins1951stochastic} is the core optimization algorithm at the origin of most stochastic optimization procedures~\citep{Kingma2014adam,defazio2014saga, johnson2013accelerating}. 
SGD and its variants are ubiquitously employed in machine learning in order to train most models~\citep{kushner2003, benveniste2012adaptive, lan2020first, shalev2007pegasos, bottou2018optimization, ma2018power}. 
The convergence properties of SGD are therefore subjects of major interest. 

Early studies of SGD convergence generally relied on strong assumptions such as bounded domain~\citep{shalev2009stochastic} or uniformly bounded gradient variance~\citep{rakhlin2011making} and obtained error bounds in expectation. 
With the recent resurgence of interest for robust statistics~\citep{hsu2016loss, diakonikolas2019robust, lecue2017mom, Prasad2018RobustEV}, variants of SGD based on clipping are shown to be robust to heavy-tailed gradients~\citep{gorbunov2020stochastic,tsai2022heavy}, where the gradient samples are only required to have a finite variance. 
The latter requirement has been further weakened to the existence of a $q$-th moment for some $q> 1$ in~\citep{sadiev2023high, nguyen2023high}.
In this paper, we go further and show that another variant of clipped SGD with proper thresholds is robust both to heavy tails \emph{and} outliers in the data stream. 

Robust statistics appeared in the 60s with the pioneering works of Huber, Tukey and others~\citep{tukey1960survey, huber1992robust, huber19721972, rousseeuw2011robust, hampel1971general}. 
More recently, the field found new momentum thanks to a series of works about robust scalar mean estimation~\citep{catoni2012challenging, alon1996space, jerrum1986random, 10.1214/20-AOS1961} and the more challenging multidimensional case~\citep{Hopkins2018MeanEW, catoni2018dimension, lugosi2019sub, minsker2015gmed, pmlr-v99-cherapanamjeri19b,Depersin2019RobustSE, lei2020fast, diakonikolas2020outlier}. 
These paved the way to the elaboration of a host of robust learning algorithms~\citep{pmlr-v97-holland19a, Prasad2018RobustEV, lecue2017mom, liu2020high, pensia2020robust} which have to date overwhelmingly focused on the batch learning setting.
We consider the setting of streaming stochastic optimization~\citep{NIPS2003_9fb7b048, bottou2005lecun, mcmahan2013ad}, which raises an additional difficulty coming from the fact that algorithms can see each sample only once and must operate under an $\cO(d)$ memory and complexity constraint for $d$-dimensional optimization problems. 
A limited number of papers~\citep{tsai2022heavy, nazin2019algorithms, diakonikolas2022streaming} propose theoretical guarantees for robust algorithms learning from streaming data.

This work introduces such an algorithm that learns from data on the fly and is robust both to heavy tails and outliers, with minimal computational overhead and sound theoretical guarantees.

We consider the problem of minimizing a smooth objective
\begin{equation}\label{eq:optim_problem}
    \min_{\theta \in \R^d} \cL(\theta) := \E_{\zeta}[\ell(\theta, \zeta)]
\end{equation}
using observations $G(\theta, \zeta_t)$ of the unknown gradient $\nabla \cL(\theta)$, based on samples $(\zeta_t)_{t \geq 0}$ received sequentially that include corruptions with probability $\eta < 1/2$. 
Formulation~\eqref{eq:optim_problem} is common to numerous machine learning problems where $\ell$ is a loss function evaluating the fit of a model with parameters $\theta$ on a sample $\zeta$, the expectation $\E$ is w.r.t the unknown uncorrupted sample distribution.

We introduce quantile-clipped SGD (QC-SGD) which uses the iteration
\begin{equation}
    \label{eq:clipped_sgd}
    \theta_{t+1} \!=\! \theta_t - \alpha_{\theta_t} \beta G(\theta_t, \zeta_t) \:\: \text{ with } \:\: \alpha_{\theta_t} \!=\! \min\Big(1, \frac{\tau_{\theta_t}}{\|G(\theta_t, \zeta_t)\|}\Big),
\end{equation}
where $\beta > 0$ is a constant step size and $\alpha_{\theta_t}$ is the clipping factor with threshold chosen as the $p$-th quantile $\tau_{\theta_t} = Q_p(\|\widetilde{G}(\theta_t, \zeta_t)\|)$ with $\widetilde{G}(\theta_t, \zeta_t)$ an uncorrupted sample of $\nabla\cL(\theta_t)$ and $p \in (0, 1)$ (details will follow).
Quantiles are a natural choice of clipping threshold which allows to handle heavy tails~\citep{rothenberg1964, Bloch1966ANO} and corrupted data.
For instance, the trimmed mean offers a robust and computationally efficient estimator of a scalar expectation~\citep{10.1214/20-AOS1961}.
Since the quantile $Q_p(\|\widetilde{G}(\theta_t, \zeta_t)\|)$ is non-observable, we introduce a method based on rolling quantiles in Section~\ref{sec:exp} which keeps the procedure $\cO(d)$ both memory and complexity-wise. The main benefit of QC-SGD~\ref{eq:clipped_sgd} is to grant robustness to the presence of a proportion $\eta < 1/2$ of corruptions in the stream of gradient samples. This could not be achieved by previous clipped SGD methods~\citep{gorbunov2020stochastic, tsai2022heavy, sadiev2023high, nguyen2023high}. We also show that iteration~\eqref{eq:clipped_sgd} is adaptive to heavy-tailed gradient variance and converges to a limit distribution with strong concentration properties.

\paragraph{Contributions.}

Our main contributions are as follows:
\begin{itemize}
    \item For small enough $\eta$ and well-chosen $p$, we show that, whenever the optimization objective is smooth and strongly convex, QC-SGD converges \emph{geometrically} to a limit distribution such that the deviation around the optimum achieves the \emph{optimal} dependence on $\eta.$
    \item In the non-corrupted case $\eta = 0$ and with a strongly convex objective, we prove that a coordinated choice of $\beta$ and $p$ ensures that the limit distribution is sub-Gaussian with constant of order $\cO(\sqrt{\beta}).$ In the corrupted case $\eta > 0,$ the limit distribution is sub-exponential.
    \item For a smooth objective (non-convex) whose gradient satisfies an identifiability condition, we prove that the total variation distance between QC-SGD iterates and its limit distribution vanishes sub-linearly.
    In this case, the limit distribution is such that the deviation of the objective gradient is optimally controlled in terms of $\eta.$
    \item Finally, we provide experiments to demonstrate that QC-SGD can be easily and efficiently implemented by estimating $Q_p(\|\widetilde{G}(\theta_t, \zeta_t)\|)$ with rolling quantiles. 
    In particular, we show that the iteration is indeed robust to heavy tails and corruption on multiple stochastic optimization tasks.
\end{itemize}
Our theoretical results are derived thanks to a modelling through Markov chains and hold under an $L_q$ assumption on the gradient distribution with $q > 1$.

\paragraph{Related works.} 

Convergence in distribution of the Markov chain generated by constant step size SGD, relatively to the Wasserstein metric, was first established in~\citep{10.1214/19-AOS1850}.
Another geometric convergence result was derived in~\citep{yu2021analysis} for non-convex, non-smooth, but quadratically growing objectives, where a convergence statement relatively to a weighted total variation distance is given and a CLT is established.
These papers do not consider robustness to heavy tails or outliers.
Early works proposed stochastic optimization and parameter estimation algorithms which are robust to a wide class of noise of distributions~\citep{martin1975robust, polyak1979adaptive, polyak1980robust, price1979robust, stankovic1986analysis, chen1987convergence, chen1989robustness, nazin1992optimal}, where asymptotic convergence guarantees are stated for large sample sizes. 
Initial evidence of the robustness of clipped SGD to heavy tails was given by~\citep{zhang2020adaptive} who obtained results in expectation. 
Subsequent works derived high-confidence sub-Gaussian performance bounds under a finite variance assumption~\citep{gorbunov2020stochastic, tsai2022heavy} and later under an $L_q$ assumption~\citep{sadiev2023high, nguyen2023high} with $q > 1$. A similar SGD clipping scheme to~\eqref{eq:clipped_sgd} is presented in~\citep{seetharaman2020autoclip}, however, in contrast to our work, they do not consider the robust setting and focus on experimental study while we also provide theoretical guarantees.

Robust versions of Stochastic Mirror Descent (SMD) are introduced in~\citep{nazin2019algorithms, juditsky2023sparse}.
For a proper choice of the mirror map, SMD is shown to handle infinite variance gradients without any explicit clipping~\citep{nemirovskij1983problem, vural2022mirror}. 
Finally,~\citep{diakonikolas2022streaming} study heavy-tailed and outlier robust streaming estimation algorithms of the expectation and covariance. 
On this basis, robust algorithms for linear and logistic regression are derived.
However, the involved filtering procedure is hard to implement in practice and no numerical evaluation of the considered approach is proposed.

\paragraph{Agenda.} 

In Section~\ref{sec:prel} we set notations, state the assumptions required by our theoretical results and provide some necessary background on continuous state Markov chains. 
In Section~\ref{sec:strongly_convex}, we state our results for strongly convex objectives including geometric ergodicity of QC-SGD (Theorem~\ref{thm:geo_ergodicity}), characterizations of the limit distribution and deviation bounds on the final estimate. 
In Section~\ref{sec:smooth}, we remove the convexity assumption and obtain a weaker ergodicity result (Theorem~\ref{thm:sublin_ergodicity}) and characterize the limit distribution in terms of the deviations of the objective gradient. 
Finally, we present a rolling quantile procedure in Section~\ref{sec:exp} and demonstrate its performance through a few numerical experiments on synthetic and real data.

\section{Preliminaries}
\label{sec:prel}



The model parameter space is $\R^d$ endowed with the Euclidean norm $\|\cdot\|$, $\mathcal{B}(\R^d)$ is the Borel $\sigma$-algebra of $\R^d$ and we denote by $\mathcal{M}_1(\R^d)$ the set of probability measures over $\R^d.$ 
We assume throughout the paper that the objective $\cL$ is smooth.
\begin{assumption}
\label{asm:lipsmooth}
The objective $\cL$ is $L$-Lipschitz-smooth\textup, namely
\begin{equation*}
    \cL(\theta') \leq \cL(\theta) + \langle \nabla \cL(\theta) ,\theta' - \theta\rangle + \frac{L}{2}\|\theta - \theta'\|^2
\end{equation*}
with $L < +\infty$ for all $\theta, \theta' \in \R^d$.
\end{assumption}
The results from Section~\ref{sec:strongly_convex} below use the following
\begin{assumption}
\label{asm:strongconvex}
The objective $\cL$ is $\mu$-strongly convex\textup, namely \begin{equation*}
    \cL(\theta') \geq \cL(\theta) + \langle \nabla \cL(\theta) ,\theta' - \theta\rangle + \frac{\mu}{2}\|\theta - \theta'\|^2
\end{equation*}
 with $\mu > 0$ for all $\theta, \theta' \in \R^d$.
\end{assumption}
An immediate consequence of Assumption~\ref{asm:strongconvex} is the existence of a unique minimizer $\theta^\star = \argmin_{\theta \in \R^d} \cL(\theta).$
The next assumption formalizes our corruption model.
\begin{assumption}[$\eta$-corruption]
    \label{asm:corruption}
    The gradients $(G(\theta_t, \zeta_t))_{t\geq 0}$ used in Iteration~\eqref{eq:clipped_sgd} are sampled as $G(\theta_t, \zeta_t) = U_t \widecheck{G}(\theta_t) + (1 - U_t) \widetilde{G}(\theta_t, \zeta_t)$ where $U_t$ are i.i.d Bernoulli random variables with parameter $\eta < 1/2,$  $\widecheck{G}(\theta_t) \sim \mathcal{D}_{\cO}(\theta_t)$ with $\mathcal{D}_{\cO}(\theta_t)$ an arbitrary distribution and $\widetilde{G}(\theta_t, \zeta_t) \sim \mathcal{D}_{\cI}(\theta_t)$ follows the true gradient distribution and is independent from the past given $\theta_t$.
\end{assumption}
Assumption~\ref{asm:corruption} is an online analog of the Huber contamination model~\citep{huber1965robust, huber1992robust} where corruptions occur with probability $\eta$ and where the distribution of corrupted samples $\mathcal{D}_{\mathcal{O}}(\theta_t)$ (outliers) is not fixed and may depend on the current iterate $\theta_t$. On the other hand, $\mathcal{D}_{\mathcal{I}}(\theta_t)$ denotes the distribution of inliers. This notation and dichotomy between inliers and outliers follows the example of~\citep{lecue2017mom, lecue2019learning}. Assumption~\ref{asm:corruption} corresponds to \emph{additive} contamination~\cite[Section 1.2.2]{diakonikolas2023algorithmic} where corruptions are only added to the data. A more general TV-contamination model allowing for true samples to be adversely removed is used in~\citep{diakonikolas2022streaming}. Note however, that the latter mainly focuses on mean estimation. Note also that additive contamination remains realistic since it accounts for invalid entries occurring in the data stream even if it doesn't support entries being targeted and censored.
The next assumption requires the true gradient distribution to be unbiased and diffuse.
\begin{assumption}
    \label{asm:gradient_dist}
    For all $\theta$, non-corrupted gradient samples $\widetilde{G}(\theta, \zeta) \sim \mathcal{D}_{\cI}(\theta)$ are such that
    \begin{equation}
        \widetilde{G}(\theta, \zeta) = \nabla\cL(\theta) + \varepsilon_{\theta},
    \end{equation}
    where $\varepsilon_{\theta}$ is a centered noise $\E[ \varepsilon_{\theta}\vert \theta ] = 0$ with distribution $\delta \nu_{\theta,1} + (1-\delta)\nu_{\theta, 2}$ where $\delta > 0$ and $\nu_{\theta, 1}, \nu_{\theta, 2}$ are distributions over $\R^d$ such that $\nu_{\theta, 1}$ admits a density $h_{\theta}$ w.r.t. the Lebesgue measure satisfying
    \begin{equation*}
        \inf_{\|\omega\| \leq R}h_{\theta}(\omega) > \varkappa(R) > 0
    \end{equation*}
     for all $R> 0$, where $\varkappa(\cdot)$ is independent of $\theta.$
\end{assumption}
In addition to the unbiased property, Assumption~\ref{asm:gradient_dist} imposes that the noise distribution be expressible as the combination of two components, one of which must be diffuse with density satisfying a minorization inequality. Note that this is a weak constraint since it is satisfied, for example, as soon as the noise $\varepsilon_{\theta}$ admits a density w.r.t. Lebesgue's measure which is positive everywhere. This condition is similar to~\cite[Assumption 2.3]{yu2021analysis} since both find their origin in Markov chain minorization conditions~\cite[Section 5.2]{meyn2012markov}. These ensure that a chain properly explores its state space and are a common way to prove Markov chain convergence~\citep{rosenthal1995minorization, rosenthal1995convergence, douc2004quantitative, meyn1994computable, baxendale2005renewal}. 
Our last assumption formalizes the requirement of a finite moment for the gradient error.
\begin{assumption}
    \label{asm:grad_moments}
    There is $q > 1$ such that for $\widetilde{G}(\theta, \zeta)\sim \mathcal{D}_{\mathcal{I}}(\theta),$ we have
    \begin{equation}
          \E\big[ \|\varepsilon_{\theta} \|^q \:\vert\: \theta \big]^{1/q} = \E\big[ \big\|\widetilde{G}(\theta, \zeta) - \nabla \cL (\theta)\big\|^q \:\vert\: \theta \big]^{1/q} \leq A_q \|\theta - \theta^\star\| + B_q \label{eq:finite_moment}
    \end{equation}
    for all $\theta\in \R^d,$ where $A_q, B_q > 0$. When $\cL$ is not strongly convex\textup, we further assume that $A_q = 0.$
\end{assumption}
The bound~\eqref{eq:finite_moment} captures the case of arbitrarily high noise magnitude through the dependence on $\|\theta - \theta^\star\|.$ This is consistent with convex optimization problems with $L$-Lipschitz-smooth objectives (Assumption~\ref{asm:lipsmooth}) where the norm of the gradient $\|\nabla\cL(\theta)\|$ is bounded by $L \cdot \|\theta - \theta^\star\|.$ Assumption~\ref{asm:grad_moments} improves upon the conditions used in~\citep{gorbunov2020stochastic, tsai2022heavy, gorbunov2023high, nguyen2023high} since these either required a uniformly constant upperbound (independent of $\theta$) or only considered the case $q=2$ (finite variance).
For non-strongly convex $\cL$, we require $A_q = 0$ since $\theta^\star$ may not exist. 
\begin{definition}
    \label{def:subgauss_subexp}
    If $X$ is a real random variable\textup, we say that $X$ is $K$-sub-Gaussian for $K > 0$ if
    \begin{equation}
        \E\exp(\lambda^2 X^2) \leq e^{\lambda^2K^2}\quad \text{for} \quad |\lambda| \leq 1/K.
    \end{equation}
    We say that $X$ is $K$-sub-exponential for $K>0$ if
    \begin{equation}
        \E\exp(\lambda |X|) \leq \exp(\lambda K) \quad \text{ for all}\quad 0\leq \lambda \leq 1/K.
    \end{equation}
\end{definition}

The convergence results presented in this paper use the following formalism of continuous state Markov chains. 
Given a step size $\beta > 0$ and a quantile $p\in (0, 1),$ we denote by $P_{\beta, p}$ the Markov transition kernel governing the Markov chain $(\theta_t)_{t\geq 0}$ generated by QC-SGD, so that
\begin{equation*}
    \Proba(\theta_{t+1} \in A \:\vert\: \theta_t) = P_{\beta, p}(\theta_t, A)
\end{equation*}
for $t\geq 0$ and $A\in \mathcal{B}(\R^d)$.
The transition kernel $P_{\beta, p}$ acts on probability distributions $\nu \in \mathcal{M}_1(\R^d)$ through the mapping $\nu \to \nu P_{\beta, p}$ which is defined, for all $A\in\mathcal{B}(\R^d)$, by $\nu P_{\beta, p}(A) = \int_A P_{\beta, p}(\theta, A)d\nu(\theta) = \Proba(\theta_{t+1}\in A \:\vert\: \theta_t \sim \nu).$ 
For $n \geq 1,$ we similarly define the multi-step transition kernel $P_{\beta, p}^n$ which is such that $P^n_{\beta, p}(\theta_t, A) = \Proba(\theta_{t+n} \in A \:\vert\: \theta_t)$ and acts on probability distributions $\nu \in \mathcal{M}_1(\R^d)$ through $\nu P_{\beta, p}^n = (\nu P_{\beta, p})P_{\beta, p}^{n-1}.$ 
Finally, we define the total variation (TV) norm of a signed measure $\nu$ as 
\begin{equation}\label{eq:tv_def}
    2\|\nu\|_{\tv} = \sup_{f:|f|\leq 1} \int f(\theta)\nu(d\theta) = \sup_{A \in \mathcal{B}(\R^d)} \nu (A) - \inf_{A \in \mathcal{B}(\R^d)} \nu (A).
\end{equation}
In particular, we recover the TV \emph{distance} between $\nu_1, \nu_2 \in \mathcal{M}_1(\R^d)$ as \begin{equation*}
    d_{\tv}(\nu_1, \nu_2) = \|\nu_1 - \nu_2\|_{\tv} = \sup_{A\in\cB(\R^d)} \big|\nu_1(A) - \nu_2(A)\big|.
\end{equation*} 
The second equality reflects the fact that the TV distance between two probability measures corresponds to the largest absolute difference between the probabilities they assign to the same event. The TV distance is a broadly used metric to quantify the convergence of Markov chains~\citep{levin2017markov, baxendale2005renewal, meyn2012markov, rosenthal1995convergence} besides the Wasserstein distance~\citep{10.1214/19-AOS1850}.

In the next section, we will prove that the Markov chain defined by iteration~\eqref{eq:clipped_sgd} converges to unique invariant distribution in TV distance. This convergence mode will allow us to extrapolate the properties of the limit distribution on the iterates $\theta_t$ and thus derive non-asymptotic concentration bounds for them, see Corollaries~\ref{cor:high_conf_corrupt} and~\ref{cor:subgauss_no_corrupt} below.

\section{Strongly Convex Objectives}
\label{sec:strongly_convex}

We are ready to state our convergence result for the stochastic optimization of a strongly convex objective using QC-SGD with $\eta$-corrupted samples.
\begin{theorem}[Geometric ergodicity]
    \label{thm:geo_ergodicity}
    Let Assumptions~\ref{asm:lipsmooth}-\ref{asm:grad_moments} hold and assume there is a quantile $p \in [\eta, 1-\eta]$ such that
    \begin{equation}\label{eq:convergence_condition}
        \kappa := (1-\eta)p\mu -\eta L - (1-p)^{-\frac{1}{q}}A_q(1 -p(1-\eta)) > 0.
    \end{equation}
    Then, for a step size $\beta$ satisfying
    \begin{equation}\label{eq:step_condition}
        \beta < \frac{1}{4}\frac{\kappa}{\mu^2 + 24\eta L^2 +28 A_q^2} \wedge \frac{2}{\mu + L}, 
    \end{equation}
    the Markov chain $(\theta_t)_{t\geq 0}$ generated by QC-SGD with parameters $\beta$ and $p$ converges geometrically to a unique invariant measure $\pi_{\beta, p}$\textup: for any initial $\theta_0 \in \R^d,$ there is $\rho < 1$ and $M < \infty$ such that after $T$ iterations
    \begin{equation*}
        \big\|\delta_{\theta_0} P_{\beta, p}^T - \pi_{\beta,p} \big\|_{\tv} \leq M \rho^T \big(1+\|\theta_0 - \theta^\star\|^2\big),
    \end{equation*}    
    where $\delta_{\theta_0}$ is the Dirac measure located at $\theta_0.$
\end{theorem}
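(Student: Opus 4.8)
The plan is to establish geometric ergodicity through the classical Foster--Lyapunov route for continuous-state Markov chains: I would prove (i) a geometric drift condition for the Lyapunov function $V(\theta) = 1 + \|\theta - \theta^\star\|^2$ and (ii) a minorization (small set) condition on the sublevel sets of $V$, and then invoke the standard geometric ergodicity theorem to conclude convergence in total variation at a geometric rate with prefactor $M\bigl(1 + \|\theta_0 - \theta^\star\|^2\bigr)$. The very shape of the claimed bound, in which $V$ reappears on the right-hand side, is exactly what this theory produces, which dictates the choice of $V$.

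For the drift condition I would expand, writing $g_t = \alpha_{\theta_t} G(\theta_t, \zeta_t)$ for the clipped gradient,
\begin{equation*}
\|\theta_{t+1} - \theta^\star\|^2 = \|\theta_t - \theta^\star\|^2 - 2\beta \langle g_t, \theta_t - \theta^\star\rangle + \beta^2 \|g_t\|^2,
\end{equation*}
where $\|g_t\| \leq \tau_{\theta_t}$ by construction of the clipping factor. Taking the conditional expectation given $\theta_t$, the quadratic term is controlled by an upper bound on the quantile: using $\|\widetilde{G}\| \leq \|\nabla\cL(\theta_t)\| + \|\varepsilon_{\theta_t}\|$, the smoothness bound $\|\nabla\cL(\theta_t)\| \leq L\|\theta_t - \theta^\star\|$ (recall $\nabla\cL(\theta^\star) = 0$), and a Markov bound on the $p$-quantile of $\|\varepsilon_{\theta_t}\|$ through the $L_q$ inequality of Assumption~\ref{asm:grad_moments}, one produces the factors $(1-p)^{-1/q}$ and $\eta^{-2/q}$ entering the denominator of~\eqref{eq:step_condition}. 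The heart of the argument is the cross term $\langle \E[g_t \mid \theta_t], \theta_t - \theta^\star\rangle$: I would split the expected clipped gradient into its uncorrupted and corrupted parts. The quantile property $\Proba(\|\widetilde{G}\| \leq \tau_{\theta_t}) = p$ combined with strong convexity yields a retained descent signal of order $(1-\eta)p\mu\|\theta_t - \theta^\star\|^2$; the corrupted samples, whose clipped norm is at most $\tau_{\theta_t}$, contribute a negative term bounded by $\eta L \|\theta_t - \theta^\star\|^2$ (up to the noise-scale part of $\tau_{\theta_t}$); and the bias from clipping the centered noise is controlled by Hölder's inequality against the $L_q$ moment, producing the term $(1-p)^{-1/q} A_q\bigl(1 - p(1-\eta)\bigr)$, whose factor $1-p(1-\eta)=(1-p)+p\eta$ neatly separates the clipping bias of good noise from the corruption clipped at the noise scale. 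Collecting these gives
\begin{equation*}
\langle \E[g_t \mid \theta_t], \theta_t - \theta^\star\rangle \geq \kappa \|\theta_t - \theta^\star\|^2 - C,
\end{equation*}
so that under $\kappa > 0$ and~\eqref{eq:step_condition} the conditional expectation of $V$ contracts, $\E[V(\theta_{t+1}) \mid \theta_t] \leq \gamma V(\theta_t) + b$ with $\gamma < 1$.

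For the minorization I would use Assumption~\ref{asm:gradient_dist}. On the event that the sample is uncorrupted, that the noise is drawn from the diffuse component $\nu_{\theta,1}$, and that the gradient is not clipped ($\alpha_{\theta_t} = 1$), the update reads $\theta_{t+1} = \theta_t - \beta(\nabla\cL(\theta_t) + \varepsilon_{\theta_t})$, an affine image of $\varepsilon_{\theta_t}$. Its conditional density is $\beta^{-d} h_{\theta_t}\bigl((\theta_t - \theta_{t+1})/\beta - \nabla\cL(\theta_t)\bigr)$ on the unclipped region, and since $h_{\theta_t}$ is bounded below by $\varkappa(R) > 0$ on every ball of radius $R$, this density is bounded below on bounded regions uniformly over $\theta_t$ in a sublevel set of $V$. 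This yields $P_{\beta,p}(\theta, \cdot) \geq \epsilon\, \nu(\cdot)$ for all $\theta$ in such a set, with $\nu$ a fixed probability measure and $\epsilon > 0$, so every sublevel set of $V$ is a small set. Combining the drift and minorization conditions through the standard geometric ergodicity theorem delivers the claim.

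The main obstacle is the cross-term analysis, specifically disentangling the bias that clipping introduces into the centered noise from the genuine descent signal while simultaneously absorbing the worst-case adversarial corruption whose direction is unconstrained. Controlling this bias under only an $L_q$ hypothesis with $q$ possibly close to $1$, rather than a finite-variance assumption, is the delicate point, and it is precisely the balance encoded in the positivity of $\kappa$ in~\eqref{eq:convergence_condition} that renders the three competing effects — retained convexity, corruption, and clipping bias — simultaneously reconcilable.
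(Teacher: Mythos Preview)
Your overall Foster--Lyapunov strategy with $V(\theta)=1+\|\theta-\theta^\star\|^2$ is exactly the paper's route, and your drift analysis is correct in spirit. The paper organises the computation slightly differently --- it pivots around $\overline{\alpha}_\theta\nabla\cL(\theta)$ (the exact-gradient step scaled by the expected clipping factor), uses the contraction $\|\theta-\overline{\alpha}_\theta\beta\nabla\cL(\theta)-\theta^\star\|\le(1-\overline{\alpha}_\theta\beta\mu)\|\theta-\theta^\star\|$ coming from co-coercivity, and isolates the clipping bias in a lemma giving $\|\E[\alpha_\theta\widetilde{G}(\theta)]-\overline{\alpha}_\theta\nabla\cL(\theta)\|\le(1-p)^{1-1/q}(A_q\|\theta-\theta^\star\|+B_q)$ --- but your direct expansion with the cross-term lower bound $\langle\E[g_t\mid\theta_t],\theta_t-\theta^\star\rangle\ge\kappa\|\theta_t-\theta^\star\|^2-C\|\theta_t-\theta^\star\|$ recovers the same $\kappa$ and leads to a comparable step-size restriction.

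There is, however, a genuine gap in your minorization argument. You correctly restrict to the event \{uncorrupted, diffuse component, unclipped\} and obtain a transition sub-density $(1-\eta)\delta\,\beta^{-d}h_{\theta_t}\bigl((\theta_t-\theta_{t+1})/\beta-\nabla\cL(\theta_t)\bigr)$, but this sub-density lives only on the set $\{\theta_{t+1}:\|\theta_{t+1}-\theta_t\|\le\beta\tau_{\theta_t}\}$: outside it the gradient is clipped and the update is no longer an affine image of $\varepsilon_{\theta_t}$. The radius $\beta\tau_{\theta_t}$ is of order $\beta$, whereas the sublevel set $C=\{V\le c\}$ produced by the drift has diameter of order $(\beta\kappa)^{-1/2}$. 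For two points $\theta,\theta'\in C$ with $\|\theta-\theta'\|>2\beta\sup_\vartheta\tau_\vartheta$ the corresponding support balls are disjoint, so $\inf_{\theta\in C}P_{\beta,p}(\theta,\cdot)$ is the zero measure and no one-step minorization $P_{\beta,p}(\theta,\cdot)\ge\epsilon\,\nu(\cdot)$ can hold. The paper repairs this by iterating: after $m$ steps the diffuse--unclipped mechanism covers a ball of radius $m\beta\underline{\tau}$ around the starting point (with $\underline{\tau}=\inf_\theta\tau_\theta>0$ by Assumption~\ref{asm:gradient_dist}), so taking $m=\lceil\diam(C)/(\beta\underline{\tau})\rceil$ yields an $m$-step minorization making $C$ small in the Meyn--Tweedie sense. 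That multi-step passage is the missing ingredient in your proposal.
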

The proof of Theorem~\ref{thm:geo_ergodicity} is given in Appendix~\ref{sec:proof_thm_geo_ergodicity} and relies on the geometric ergodicity result of~\cite[Chapter 15]{meyn2012markov} for Markov chains with a geometric drift property. A similar result for quadratically growing objectives was established by~\citep{yu2021analysis} and convergence w.r.t.~Wasserstein's metric was shown in~\citep{10.1214/19-AOS1850} assuming gradient co-coercivity. However, robustness was not considered in these works. 
Theorem~\ref{thm:geo_ergodicity} establishes the iteration's convergence to a unique invariant measure $\pi_{\beta,p}.$ The properties of this limit distribution will be explored in the sequel. The restriction $p \in [\eta,  1-\eta]$ comes from the consideration that other quantiles are not estimable in the event of $\eta$-corruption. Condition~\eqref{eq:convergence_condition} is best interpreted for the choice $p = 1-\eta$ in which case it translates into $\eta^{1-{1/q}}\leq \cO(\mu/(L + A_q))$ implying that it is verified for $\eta$ small enough within a limit fixed by the problem conditioning. A similar condition with $q=2$ appears in~\cite[Theorem E.9]{diakonikolas2022streaming} which uses a finite variance assumption.

When~\eqref{eq:convergence_condition} is satisfied, one clearly has that $\kappa = \cO(\mu).$ Considering $q=2$ for simplicity and taking the maximum allowed corruption rate in this case $\eta = \cO(\mu^2/(L + A_q)^2)$ leads to an upperbound on the step-size $\beta$ of order $\cO(\mu/(\mu^2 + A_q^2) \wedge 1/L).$ While the condition $\beta = \cO(1/L)$ is standard in smooth optimization, the additional condition in terms of $A_q$ ensures that the noise introduced to the iteration by the gradient samples does not cause it to diverge.


The constants $M$ and $\rho$ controlling the geometric convergence speed in Theorem~\ref{thm:geo_ergodicity} depend on the parameters $\beta, p$ and the initial $\theta_0.$ Among choices fulfilling the convergence conditions, it is straightforward that greater step size $\beta$ and $\theta_0$ closer to $\theta^\star$ lead to faster convergence. However, the dependence in $p$ is more intricate and should be evaluated through the resulting value of $\kappa$. 
We provide a more detailed discussion about the value of $\rho$ in Appendix~\ref{sec:conv_speed}.

The choice $p = 1-\eta$ appears to be ideal since it leads to optimal deviation of the invariant distribution around the optimum $\theta^\star$ which is the essence of our next statement.

\begin{proposition}
    \label{prop:corrupted_variance}
    Assume the same as in Theorem~\ref{thm:geo_ergodicity} and condition~\eqref{eq:convergence_condition} with the choice $p = 1- \eta$. 
    For step size $\beta$ satisfying~\eqref{eq:step_condition}, $q\geq 2$, and additionally\textup:
    \begin{equation}
    \label{eq:step_condition2}
        \beta \leq {\eta^{2-2/q}/\kappa},
    \end{equation} 
    for $\theta \sim \pi_{\beta, 1-\eta}$, we have the following upper bound:
    \begin{equation*}
        \E\|\theta - \theta^\star\|^2 \leq \Big(\frac{6\eta^{1-1/q}B_q}{\kappa}\Big)^2.
    \end{equation*}
\end{proposition}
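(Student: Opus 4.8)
The plan is to exploit the stationarity of the invariant measure $\pi := \pi_{\beta, 1-\eta}$. Write $V(\theta) = \|\theta - \theta^\star\|^2$ and let $m(\theta) = \E_\zeta[\alpha_\theta G(\theta, \zeta)]$ denote the expected clipped gradient, with $\alpha_\theta$ as in~\eqref{eq:clipped_sgd} for $p = 1-\eta$ (so $1-p = \eta$). If $\theta \sim \pi$ and $\theta'$ is one QC-SGD step from $\theta$, invariance gives $\E_\pi[V(\theta')] = \E_\pi[V(\theta)]$; this cancellation is legitimate because the geometric drift established in the proof of Theorem~\ref{thm:geo_ergodicity} guarantees $\E_\pi[V] < \infty$. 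Expanding $V(\theta') = V(\theta) - 2\beta\langle \alpha_\theta G(\theta,\zeta), \theta - \theta^\star\rangle + \beta^2\|\alpha_\theta G(\theta, \zeta)\|^2$, averaging over the next sample and then over $\pi$, the $V$ terms cancel and I obtain the stationarity identity
\begin{equation*}
  2\,\E_\pi\langle m(\theta), \theta - \theta^\star\rangle = \beta\,\E_\pi\E_\zeta\big\|\alpha_\theta G(\theta, \zeta)\big\|^2.
\end{equation*}
The proof then reduces to lower bounding the left-hand side and upper bounding the right-hand side with the correct dependence on $\eta$.

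For the left-hand side I would reuse the drift estimate underlying Theorem~\ref{thm:geo_ergodicity}: decomposing $G = U\widecheck{G} + (1-U)\widetilde{G}$, the clean part contributes $(1-\eta)\langle \nabla\cL(\theta), \theta-\theta^\star\rangle$ minus a clipping bias controlled by H\"older's inequality over the tail event $\{\|\widetilde{G}\| > \tau_\theta\}$ of probability $1-p = \eta$, while the corrupted part is controlled by $\eta\tau_\theta\|\theta - \theta^\star\|$ since the clipped corrupted gradient has norm at most $\tau_\theta$. Using $\mu$-strong convexity for the clean term together with the quantile bound $\tau_\theta \le (L + \eta^{-1/q}A_q)\|\theta - \theta^\star\| + \eta^{-1/q}B_q$ (itself Markov's inequality applied to~\eqref{eq:finite_moment}), these assemble into exactly the coefficient $\kappa$ of~\eqref{eq:convergence_condition}, yielding $\langle m(\theta), \theta - \theta^\star\rangle \ge \kappa\|\theta - \theta^\star\|^2 - c\,\eta^{1-1/q}B_q\,\|\theta - \theta^\star\|$ for an absolute constant $c \le 2$.

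For the right-hand side I would bound the second moment by splitting on $U$. On the clean event $\|\alpha_\theta G\| \le \|\widetilde{G}\|$, and since $q \ge 2$ Jensen's inequality turns~\eqref{eq:finite_moment} into $\E_\zeta\|\widetilde{G}\|^2 \le 2L^2\|\theta - \theta^\star\|^2 + 2(A_q\|\theta-\theta^\star\| + B_q)^2$; on the corrupted event the clipped norm is at most $\tau_\theta$, contributing $\eta\tau_\theta^2$. With the same quantile bound on $\tau_\theta$ this gives $\E_\zeta\|\alpha_\theta G\|^2 \le C_1\|\theta - \theta^\star\|^2 + C_0$ with $C_1 \le 6L^2 + 6A_q^2 = \cO(L^2 + A_q^2)$ and, crucially, an $\eta$-independent noise floor $C_0 \le 4B_q^2 = \cO(B_q^2)$.

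Finally I would combine. Writing $W = (\E_\pi V)^{1/2}$ and using $\E_\pi\|\theta - \theta^\star\| \le W$ by Jensen, the stationarity identity becomes a scalar quadratic inequality in $W$. Step condition~\eqref{eq:step_condition} is calibrated so that $\beta C_1 \le \kappa/4$, which absorbs the quadratic term $\beta C_1 W^2$ into $\kappa W^2$; the extra condition~\eqref{eq:step_condition2}, $\beta \le \eta^{2-2/q}/\kappa$, is exactly what converts the $\eta$-independent floor $\beta C_0$ into the target scale $\cO(\eta^{2-2/q}B_q^2/\kappa)$. The resulting inequality $\tfrac{7}{4}\kappa W^2 \le 2c\,\eta^{1-1/q}B_q W + 4\eta^{2-2/q}B_q^2/\kappa$ is then solved for $W$; it yields $W \le 6\eta^{1-1/q}B_q/\kappa$ (in fact closer to $3$), the slack explaining the generous constant $6$ in the statement. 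The main obstacle is precisely this balancing act: the noise floor $C_0$ does not shrink with $\eta$, so the advertised $\eta^{2-2/q}$ dependence is produced entirely by the additional step restriction~\eqref{eq:step_condition2}, and one must verify that~\eqref{eq:step_condition} and~\eqref{eq:step_condition2} are simultaneously strong enough both to tame the quadratic term and to keep the final constant below $6$.
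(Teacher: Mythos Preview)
Your approach is correct and essentially the same as the paper's: both exploit stationarity of $\pi_{\beta,1-\eta}$ to obtain a self-referential inequality for $\E_\pi\|\theta-\theta^\star\|^2$, decompose the clipped gradient into clean and corrupted parts, recover the coefficient $\kappa$ from the inner-product term via strong convexity and the H\"older clipping-bias estimate (the paper's Lemma~\ref{lem:clipping_bias}), and then use~\eqref{eq:step_condition} to absorb the quadratic term and~\eqref{eq:step_condition2} to convert the $\eta$-independent noise floor $\cO(B_q^2)$ into the target $\eta^{2-2/q}$ scale. The only notable difference is in bounding the second moment of the clipped clean gradient: the paper uses the variance decomposition $\E\|\alpha_\theta\widetilde G - \E[\alpha_\theta\widetilde G]\|^2 \le (A_q\|\theta-\theta^\star\|+B_q)^2 + 5(1-p)\tau_\theta^2$ (inequality~\eqref{eq:clipping_bias3} of Lemma~\ref{lem:clipping_bias}), whereas you use the cruder $\|\alpha_\theta\widetilde G\|\le\|\widetilde G\|$ followed by Jensen on Assumption~\ref{asm:grad_moments}; both require $q\ge 2$ and both give $C_0=\cO(B_q^2)$, so the final bound is unaffected.
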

Proposition~\ref{prop:corrupted_variance} is proven in Appendix~\ref{sec:proof_prop_corrupted_variance}. An analogous result holds for $q\in (1,2)$ but requires a different proof and can be found in Appendix~\ref{sec:corrupted_variance_q12}. Proposition~\ref{prop:corrupted_variance} may be compared to~\cite[Theorem 3.1]{yu2021analysis} which shows that the asymptotic estimation error can be reduced arbitrarily using a small step size. However, this is impossible in our case since we consider corrupted gradients. 
The performance of Proposition~\ref{prop:corrupted_variance} is best discussed in the specific context of linear regression where gradients are given as $G(\theta, (X, Y)) = X(X^\top\theta - Y)$ for samples $X, Y \in \R^d\times \R$ such that $Y = X^\top \theta^\star + \epsilon$ with $\epsilon$ a centered noise. In this case, a finite moment of order $k$ for the data implies order $k/2$ for the gradient corresponding to an $\eta^{1-2/k}$ rate in Proposition~\ref{prop:corrupted_variance}. Since Assumption~\ref{asm:grad_moments} does not include independence of the noise $\epsilon$ from $X,$ this corresponds to the negatively correlated moments assumption of~\citep{bakshi2021robust} being unsatisfied. Consequently, Proposition~\ref{prop:corrupted_variance} is information-theoretically optimal in $\eta$ based on~\cite[Corollary 4.2]{bakshi2021robust}. Nonetheless, the dimension dependence through $B_q$ remains poor since we have $B_q \sim \sqrt{d}$ in general because the Euclidean norm is used in Assumption~\ref{asm:grad_moments}. This dimension dependence may be improvable by using the quantiles $\sup_{\|v\|=1}Q_p\big(|\langle \widetilde{G}(\theta_t, \zeta_t), v\rangle |\big)$ as clipping thresholds and adapting ideas from~\citep{catoni2018dimension} in the analysis. However, exploring this method is beyond our scope as the involved estimations for all $\|v\|=1$ would be excessively sample hungry and computationally heavy for stochastic optimization.
If the gradient is sub-Gaussian with constant $K$, we would have $B_q \lesssim K\sqrt{q}$ for $q \geq 1$ (see~\citep{vershynin2018high} for a reference), in which case, the choice $q = \log(1/\eta)$ recovers the optimal rate in $\eta\sqrt{\log(1/\eta)}$ for the Gaussian case.


We now turn to showing strong concentration properties for the invariant distribution $\pi_{\beta, p}.$ For this purpose, we restrict the optimization to a bounded and convex set $\Theta \subset \R^d$ and replace Iteration~\eqref{eq:clipped_sgd} by the projected iteration
\begin{equation}
    \label{eq:clipped_proj_sgd}
    \theta_{t+1} = \Pi_{\Theta}\big(\theta_t - \alpha_{\theta_t} \beta G(\theta_t, \zeta_t)\big),
\end{equation}
where $\Pi_{\Theta}$ is the projection onto $\Theta$. Assuming that the latter contains the optimum $\theta^\star \in \Theta,$ one can check that the previous results continue to hold thanks to the inequality
\begin{equation*}
    \|\Pi_{\Theta}(\theta) - \theta^\star\| = \|\Pi_{\Theta}(\theta) - \Pi_{\Theta}(\theta^\star)\| \leq \|\theta - \theta^\star\|,
\end{equation*}
which results from the convexity of $\Theta.$ The restriction of the optimization to a bounded set allows us to uniformly bound the clipping threshold $\tau_{\theta}$, which is indispensable for the following result.
\begin{proposition}
    \label{prop:subgauss_subexp}
    In the setting of Theorem~\ref{thm:geo_ergodicity}\textup, consider projected QC-SGD~\eqref{eq:clipped_proj_sgd} and let $\overline{\tau} = \sup_{\theta \in \Theta} \tau_{\theta}, D = \diam(\Theta)$ the diameter of $\Theta$ and $\overline{B}_q = A_q D + B_q.$
    \begin{itemize}
        \item Consider the non-corrupted case $\eta=0$ and set the quantile $p$ such that $p\geq 1 - (\beta\mu)^{\frac{q}{2(q-1)}}.$ Then, for $\theta \sim \pi_{\beta, p},$ the variable $\|\theta - \theta^\star\|$ is sub-Gaussian in the sense of Definition~\ref{def:subgauss_subexp} with constant 
        \begin{equation*}
            K = 4\sqrt{\frac{2\beta(\overline{B}_q^2 + \overline{\tau}^2)}{p\mu}}.
        \end{equation*}
        \item Consider the corrupted case $\eta > 0,$ and set the quantile $p \in [\eta, 1 - \eta]$ such that Inequality~\eqref{eq:convergence_condition} holds.
        Then, for $\theta \sim \pi_{\beta, p},$ the variable $\|\theta - \theta^\star\|$ is sub-exponential in the sense of Definition~\ref{def:subgauss_subexp} with constant 
        \begin{equation*}
            K = \frac{ 7\overline{\tau} + (1-p)^{1-1/q}\overline{B}_q }{p\mu}.
        \end{equation*}
    \end{itemize}
\end{proposition}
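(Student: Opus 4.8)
The plan is to prove both concentration statements through a single mechanism: an exponential geometric drift for the kernel $P_{\beta,p}$, combined with the invariance of $\pi_{\beta,p}$. Throughout I write $r=\|\theta-\theta^\star\|$, $u=\theta-\theta^\star$, and $h=\alpha_\theta G(\theta,\zeta)$ for the clipped gradient, which satisfies $\|h\|\le\tau_\theta\le\overline{\tau}$ by construction. The starting point is the one-step recursion coming from the non-expansiveness of $\Pi_\Theta$ (which fixes $\theta^\star\in\Theta$):
\begin{equation*}
\|\theta_{t+1}-\theta^\star\|^2 \le \|\theta_t-\theta^\star\|^2 - 2\beta\langle h_t, \theta_t-\theta^\star\rangle + \beta^2\overline{\tau}^2 .
\end{equation*}
The last term is bounded \emph{deterministically} precisely because the clipped gradient has norm at most $\overline{\tau}$; this is what the restriction to the bounded set $\Theta$ buys. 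From the drift computation underlying Theorem~\ref{thm:geo_ergodicity}, where the clipping bias is controlled, I would extract the lower bound $\E[\langle h_t,u_t\rangle\mid\theta_t]\ge p\mu\, r^2 - c_1 r$, the linear coefficient $c_1$ collecting the truncated noise moment $(1-p)^{1-1/q}\overline{B}_q$ (obtained by Hölder's inequality on $\E[\|\varepsilon_\theta\|\ind{\{\|\widetilde{G}(\theta,\zeta)\|>\tau_\theta\}}]$, since clipping discards at most a $(1-p)$-fraction of the clean mass) together with the clipping-direction and, when $\eta>0$, the corruption bias terms of order $\overline{\tau}$.

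For the non-corrupted case I would take the Lyapunov function $V(\theta)=\exp(s r^2)$. The recursion gives $\exp(s r_{t+1}^2)\le\exp(s r_t^2+s\beta^2\overline{\tau}^2)\exp(-2s\beta\langle h_t,u_t\rangle)$; since $\langle h_t,u_t\rangle$ lies in $[-\overline{\tau}r_t,\overline{\tau}r_t]$ with the conditional mean above, Hoeffding's lemma turns $\E[\exp(-2s\beta\langle h_t,u_t\rangle)\mid\theta_t]$ into $\exp\big(-2s\beta(p\mu r_t^2-c_1 r_t)+2s^2\beta^2\overline{\tau}^2 r_t^2\big)$. Absorbing the linear term by Young's inequality and taking $s\le p\mu/(4\beta\overline{\tau}^2)$ yields a pointwise contraction $\E[\exp(s r_{t+1}^2)\mid\theta_t]\le\exp\big(s(1-\tfrac12\beta p\mu)r_t^2 + s b_0\big)$ with $b_0\lesssim\beta\big(c_1^2/(p\mu)+\beta\overline{\tau}^2\big)$. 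Integrating against the stationary $\pi_{\beta,p}$ and applying Jensen to $x\mapsto x^{1-\frac12\beta p\mu}$ gives the self-bounding inequality $\Phi\le e^{s b_0}\Phi^{\,1-\frac12\beta p\mu}$ for $\Phi=\E_{\pi_{\beta,p}}\exp(s r^2)$, whence $\log\Phi\le 2 s\, b_0/(\beta p\mu)$. The decisive point is that the hypothesis $p\ge 1-(\beta\mu)^{q/(2(q-1))}$ forces $c_1^2=(1-p)^{2-2/q}\overline{B}_q^2\le\beta\mu\,\overline{B}_q^2$, so $b_0=\cO(\beta^2(\overline{B}_q^2+\overline{\tau}^2))$ and $\log\Phi\le\lambda^2 K^2$ (with $s=\lambda^2$) for $K$ of order $\sqrt{\beta(\overline{B}_q^2+\overline{\tau}^2)/(p\mu)}$, matching the claim; one checks that $s=\lambda^2\le 1/K^2$ stays within the admissible range $s\le p\mu/(4\beta\overline{\tau}^2)$.

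For the corrupted case the same recipe is run with the \emph{linear} Lyapunov function $V(\theta)=\exp(\lambda r)$, because corruption leaves an irreducible bias of order $\overline{\tau}$ in $c_1$ that cannot be removed by pushing $p\to 1$ (we are confined to $p\le 1-\eta$), so only exponential tails survive. From the second-moment drift I would pass to the first-order bound $\E[r_{t+1}\mid\theta_t]\le(1-\beta p\mu)r_t+\beta c_1$ (via $\E r'\le\sqrt{\E r'^2}$, treating small $r$ separately where increments satisfy $|r_{t+1}-r_t|\le\beta\overline{\tau}$), expand $e^{\lambda(r_{t+1}-r_t)}$ to second order using that same deterministic increment bound, and close the loop by stationarity and Jensen to obtain $\log\E_{\pi_{\beta,p}}e^{\lambda r}\le\lambda c_1/(p\mu)+\cO(\lambda^2\beta)$, i.e. sub-exponentiality with $K=(7\overline{\tau}+(1-p)^{1-1/q}\overline{B}_q)/(p\mu)$ after absorbing the $\cO(\beta)$ correction and the $(1-\eta)$ and corruption prefactors into the constant. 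The main obstacle is the first step, namely extracting the drift $\E[\langle h_t,u_t\rangle\mid\theta_t]\ge p\mu r^2-c_1 r$ with the correct truncated-moment coefficient, i.e. quantifying the clipping bias; but this is exactly the computation already performed for Theorem~\ref{thm:geo_ergodicity}, so it can be invoked here. The genuinely new work is the exponential-moment bookkeeping — Hoeffding's lemma for the bounded fluctuation of $\langle h_t,u_t\rangle$ and the Jensen self-bounding step — together with verifying that the chosen $p$ delivers the $\cO(\sqrt{\beta})$ scaling in the Gaussian regime.
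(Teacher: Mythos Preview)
Your proposal is correct and, for the sub-Gaussian part, essentially matches the paper's argument: both use the invariance $\E_\pi\exp(\lambda^2 r^2)=\E_\pi\exp(\lambda^2 r'^2)$, bound the one-step exponential moment via the boundedness of the clipped gradient (you apply Hoeffding to the bounded scalar $\langle h,u\rangle$, the paper applies it to the centered vector $\alpha_\theta G-\E[\alpha_\theta G]$ after an algebraic expansion of $r'^2$), then close with the same Jensen self-bounding $\Phi\le e^{sb_0}\Phi^{1-c}$. The observation that $p\ge 1-(\beta\mu)^{q/(2(q-1))}$ converts the squared bias $(1-p)^{2-2/q}\overline B_q^2$ into $\cO(\beta\mu\,\overline B_q^2)$ is also exactly the paper's mechanism.

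The sub-exponential case is where your route diverges. The paper avoids your first-order drift entirely by using a \emph{pathwise} inequality: on the clean event, the triangle inequality together with Lemma~\ref{lem:contraction} and Lemma~\ref{lem:clipping_bias} gives directly $r'\le(1-p\beta\mu)r+\beta\big(2\overline\tau+(1-p)^{1-1/q}\overline B_q\big)$ almost surely, and on the corrupted event $r'\le r+\beta\overline\tau$. Exponentiating and splitting along the corruption indicator yields $\Phi\le\eta e^{\lambda\beta\overline\tau}\Phi+(1-\eta)e^{\lambda\beta c}\,\Phi^{1-p\beta\mu}$ (H\"older for the exponent), which solves to the stated $K$ after a short estimate on $\log\big((1-\eta)/(1-\eta e^{\lambda\beta\overline\tau})\big)$. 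Your approach---Hoeffding on the bounded increment $r'-r$ combined with an in-expectation first-order drift---can also be made to work, but the step ``$\E r'\le\sqrt{\E r'^2}$ gives $\E[r'\mid\theta]\le(1-\beta p\mu)r+\beta c_1$'' is not immediate (the square root of a quadratic in $r$ is not affine), and the small-$r$ case split you allude to introduces bookkeeping that the pathwise bound sidesteps entirely. Both arguments deliver the same constant up to numerical factors; the paper's is simply shorter because it exploits that the clipped step is bounded \emph{deterministically}, not merely in a sub-Gaussian sense.
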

The proof can be found in Appendix~\ref{sec:proof_subgauss_subexp}. The strong concentration properties given by Proposition~\ref{prop:subgauss_subexp} for the invariant distribution appear to be new. Still, the previous result remains asymptotic in nature. 
High confidence deviation bounds for an iterate $\theta_t$ can be derived by leveraging the convergence in Total Variation distance given by Theorem~\ref{thm:geo_ergodicity} leading to the following result.
\begin{corollary}
    \label{cor:subgauss_no_corrupt}
    In the setting of Proposition~\ref{prop:subgauss_subexp}\textup, in the absence of corruption $\eta = 0,$ after $T$ iterations\textup, for $\delta > 0,$ we have
    \begin{equation*}
        \Proba\bigg(\big\| \theta_T - \theta^\star\big\| > 4\sqrt{\overline{B}_q^2 + \overline{\tau}^2}\sqrt{\frac{ 2\beta\log(e/\delta)}{p \mu}} \bigg) \leq \delta + \rho^T M \big(1+\|\theta_0-\theta^\star\|^2\big).
    \end{equation*}
\end{corollary}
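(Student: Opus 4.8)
The plan is to combine the sub-Gaussian concentration of the invariant distribution from Proposition~\ref{prop:subgauss_subexp} with the geometric convergence in total variation from Theorem~\ref{thm:geo_ergodicity}. First, I would work under the stationary law: for $\theta \sim \pi_{\beta,p}$, Proposition~\ref{prop:subgauss_subexp} in the non-corrupted case $\eta = 0$ gives that $X := \|\theta - \theta^\star\|$ is $K$-sub-Gaussian with $K = 4\sqrt{2\beta(\overline{B}_q^2 + \overline{\tau}^2)/(p\mu)}$. From Definition~\ref{def:subgauss_subexp}, taking $\lambda = 1/K$ yields $\E\exp(X^2/K^2) \le e$, so a Chernoff--Markov argument, namely $\Proba(X > t) = \Proba(\exp(X^2/K^2) > \exp(t^2/K^2)) \le e\exp(-t^2/K^2)$, produces the sub-Gaussian tail bound for every $t > 0$.

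Second, I would invert this tail at level $\delta$. Setting $e\exp(-t^2/K^2) = \delta$ gives $t = K\sqrt{\log(e/\delta)}$, and substituting the explicit value of $K$ recovers exactly the threshold $4\sqrt{\overline{B}_q^2 + \overline{\tau}^2}\,\sqrt{2\beta\log(e/\delta)/(p\mu)}$ appearing in the statement. Hence, writing $A = \{\theta : \|\theta - \theta^\star\| > t\}$ for this threshold, we have $\pi_{\beta,p}(A) \le \delta$.

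Third, I would transfer the bound from the stationary law to the actual iterate $\theta_T$, whose law is $\mu_T := \delta_{\theta_0} P_{\beta,p}^T$. Since the total variation norm controls the gap between the masses any measurable set receives, $|\mu_T(A) - \pi_{\beta,p}(A)| \le \|\mu_T - \pi_{\beta,p}\|_{\tv}$, and Theorem~\ref{thm:geo_ergodicity} bounds the right-hand side by $M\rho^T(1 + \|\theta_0 - \theta^\star\|^2)$. Combining gives $\Proba(\|\theta_T - \theta^\star\| > t) = \mu_T(A) \le \pi_{\beta,p}(A) + M\rho^T(1 + \|\theta_0-\theta^\star\|^2) \le \delta + M\rho^T(1+\|\theta_0-\theta^\star\|^2)$, which is precisely the claim.

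Since both ingredients are already established, this corollary is essentially a packaging step rather than a genuinely hard argument. The only point requiring care is the bookkeeping of constants in the sub-Gaussian tail, so that the factor $e$ inside the logarithm lands exactly; this is why the threshold is written with $\log(e/\delta)$ rather than $\log(1/\delta)$. A further minor subtlety is to confirm that the paper's normalization $2\|\nu\|_{\tv} = \sup_A \nu(A) - \inf_A \nu(A)$ indeed controls $|\mu_T(A) - \pi_{\beta,p}(A)|$ without a spurious factor of two: since $\mu_T - \pi_{\beta,p}$ is a signed measure of zero total mass, one has $\sup_A(\mu_T - \pi_{\beta,p})(A) = -\inf_A(\mu_T - \pi_{\beta,p})(A)$, whence $\|\mu_T - \pi_{\beta,p}\|_{\tv} = \sup_A(\mu_T - \pi_{\beta,p})(A) \ge |\mu_T(A) - \pi_{\beta,p}(A)|$, so the transfer above is valid as stated.
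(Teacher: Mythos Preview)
Your proposal is correct and follows essentially the same approach as the paper: the paper also derives the sub-Gaussian tail via a Chernoff argument (packaged there as a separate lemma) to get the threshold $K\sqrt{\log(e/\delta)}$, and then transfers from $\pi_{\beta,p}$ to the law of $\theta_T$ using the total variation bound from Theorem~\ref{thm:geo_ergodicity}. Your extra remark verifying that the paper's TV normalization does not introduce a spurious factor of two is a welcome clarification that the paper leaves implicit.
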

Choosing a smaller step size $\beta$ in Corollary~\ref{cor:subgauss_no_corrupt} allows to improve the deviation bound. However, this comes at the cost of weaker confidence because of slower convergence due to a greater $\rho.$ See Appendix~\ref{sec:conv_speed} for a discussion including a possible compromise. Corollary~\ref{cor:subgauss_no_corrupt} may be compared to the results of~\citep{gorbunov2020stochastic,tsai2022heavy, sadiev2023high, nguyen2023high} which correspond to $\beta \approx 1/T$ and have a similar dependence on the dimension through the gradient variance. Although their approach is also based on gradient clipping, they use different thresholds and proof methods. 
In the presence of corruption, the invariant distribution is not sub-Gaussian. This can be seen by considering the following toy Markov chain:
\begin{align*}
    X_{t+1} = \begin{cases} \alpha X_t + \xi & \text{w.p.}\quad 1-\eta \\
    X_t + \tau & \text{w.p.}\quad \eta 
    \end{cases}
\end{align*}
where $\alpha < 1, \tau > 0$ are constants and $\xi$ is a positive random noise. Using similar methods to the proof of Theorem~\ref{thm:geo_ergodicity}, one can show that $(X_t)_{t\geq 0}$ converges (for any initial $X_0$) to an invariant distribution whose moments can be shown to grow linearly, indicating a sub-exponential distribution and excluding a sub-Gaussian one. We provide additional details for the underlying argument in Appendix~\ref{sec:subexp_not_subgauss}. 
For the corrupted case, the sub-exponential property stated in Proposition~\ref{prop:subgauss_subexp} holds with a constant $K$ of order $\overline{\tau}/\mu$, which is not satisfactory and leaves little room for improvement due to the inevitable bias introduced by corruption. 
Therefore, we propose the following procedure in order to obtain a high confidence estimate, similarly to Corollary~\ref{cor:subgauss_no_corrupt}.
\IncMargin{12pt}
\begin{algorithm}
\caption{Aggregation of cycling iterates}
\label{alg:cycling}
\KwIn{ Step size $\beta >0$, quantile index $p\in (0, 1)$, initial parameter $\theta_0 \in \Theta$, horizon $T$ and \\number of concurrent iterates $N \geq 1.$}
Optimize multiple parameters $\theta^{(1)}_t, \dots, \theta^{(N)}_t$ starting from a common $\theta_0 = \theta^{(n)}_0$ for $n\in \setint{N} =: \{ 1, \ldots, N \}$ and  $T$ steps $t=0, \ldots, T$ using the following cycling iteration:
\begin{equation}
    \label{eq:rotating_optim}
    \theta^{(n)}_{t+1} = 
    \begin{cases} 
    \theta^{(n)}_{t} - \alpha_{\theta^{(n)}_{t}}\beta G\big(\theta^{(n)}_{t}, \zeta_t\big) & \text{ if } t \equiv n-1 \bmod N, \\
    \theta^{(n)}_{t} & \text{ otherwise.}
    \end{cases}
\end{equation}

Compute the pairwise distances $r_{i,j} = \big\|\theta_T^{(i)} - \theta_T^{(j)} \big\|$ for $i,j \in \setint{N}.$

For $i \in \setint{N}$, let $r^{(i)} \in \R_+^N$ be the vector $r_{i, :} := [r_{i, 1}, \ldots, r_{i, N}]$ sorted in non decreasing order.

Compute the aggregated estimator as $\widehat{\theta} = \theta_T^{(\widehat{i})}$ with $\widehat{i} = \argmin_{i \in \setint{N}} r^{(i)}_{\lfloor N/2 \rfloor}.$
    
\KwRet $\widehat{\theta}$
\end{algorithm}

Algorithm~\ref{alg:cycling} uses ideas from~\citep{hsu2016loss} (see also~\citep{minsker2015gmed, juditsky2023sparse}) and combines the collection of \emph{weak} estimators $\big(\theta_T^{(i)}\big)_{i \in \setint{N}}$ (only satisfying $L_2$ bounds) into a strong one with sub-exponential deviation. This is done by picking $\theta_T^{(i)}$ which is such that the median of its distances to other estimators $r^{(i)}_{\lfloor N/2 \rfloor}$ is minimal. The aggregated estimator $\widehat{\theta}$ satisfies the high probability bound given in the next result.
\begin{corollary}\label{cor:high_conf_corrupt}
    Assume the same as in Theorem~\ref{thm:geo_ergodicity} and Proposition~\ref{prop:corrupted_variance}.
    Consider $\widehat{\theta}$ given by Algorithm~\ref{alg:cycling}\textup, with the assumption that the gradient sample sets used for each $\big(\theta_{T}^{(n)}\big)_{n\in \setint{N}}$ in Equation~\eqref{eq:rotating_optim} are independent. 
    For $\delta > 0,$ if $N \geq 16\log(1/\delta) $ and $T$ satisfies
    \begin{equation*}
     T\geq N \log (15M(1 + \|\theta_0 - \theta^\star\|^2))/\log(1/\rho),
    \end{equation*}
    then, with probability at least $1-\delta,$ we have
    \begin{equation}\label{eq:high_conf_corrupt}
        \big\|\widehat{\theta} - \theta^\star\big\| \leq \frac{27\eta^{1-\frac{1}{q}} \overline{B}_q}{\kappa}.
    \end{equation}    
\end{corollary}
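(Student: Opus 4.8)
The plan is a confidence-boosting argument of median-of-means type (in the spirit of Hsu--Sabato), which I would split into a ``weak estimator'' bound for a single cycling iterate and a deterministic selection lemma that amplifies confidence. The starting observation is structural: the cycling scheme~\eqref{eq:rotating_optim} makes the $N$ trajectories consume disjoint subsequences of the data stream, since iterate $n$ only uses $\zeta_t$ with $t \equiv n-1 \bmod N$. Under the stated independence of the gradient sample sets, the final iterates $\theta^{(1)}_T,\dots,\theta^{(N)}_T$ are therefore i.i.d., each being the state of QC-SGD after $m := \lfloor T/N\rfloor$ genuine updates started from $\theta_0$. This independence is exactly what the concentration step below needs.

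\textbf{Weak estimator bound.} First I would control a single iterate. By Theorem~\ref{thm:geo_ergodicity} the law of $\theta^{(n)}_T$ is within total variation $M\rho^{m}(1+\|\theta_0-\theta^\star\|^2)$ of $\pi_{\beta,1-\eta}$, and the horizon hypothesis $T \ge N\log(15M(1+\|\theta_0-\theta^\star\|^2))/\log(1/\rho)$ forces $m \ge \log(15M(1+\|\theta_0-\theta^\star\|^2))/\log(1/\rho)$, hence a TV gap at most $1/15$. Writing $V := 6\eta^{1-1/q}B_q/\kappa$ for the $L_2$ radius of Proposition~\ref{prop:corrupted_variance}, Markov's inequality gives $\Proba_{\theta\sim\pi}(\|\theta-\theta^\star\| > r) \le V^2/r^2$, and since total variation transfers tail probabilities, $\Proba(\|\theta^{(n)}_T - \theta^\star\| > r) \le V^2/r^2 + 1/15$. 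Choosing $r$ a suitable constant multiple of $V$ (of order $2V$) makes each iterate \emph{good}, i.e. $\|\theta^{(n)}_T-\theta^\star\|\le r$, with probability $p_0 > 1/2$ bounded away from $1/2$ by an absolute margin.

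\textbf{Amplification.} Since the indicators $\ind{\|\theta^{(n)}_T - \theta^\star\| \le r}$ are i.i.d. Bernoulli$(p_0)$ with $p_0-1/2$ a positive constant, a Hoeffding bound gives that a strict majority of the iterates are good except with probability $\exp(-2N(p_0-1/2)^2)$; the hypothesis $N \ge 16\log(1/\delta)$ is calibrated so that this is at most $\delta$. On the complementary event I invoke the deterministic lemma underlying Algorithm~\ref{alg:cycling}: if more than half of the points lie in $B(\theta^\star, r)$, then the minimizer $\widehat i$ of the median pairwise distance $r^{(i)}_{N/2}$ satisfies $\|\theta^{(\widehat i)}_T - \theta^\star\| \le 3r$. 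Briefly, any good point is within $2r$ of every other good point, and good points are a majority, so its median distance is at most $2r$, whence $r^{(\widehat i)}_{N/2}\le 2r$; then at least half the points are within $2r$ of $\theta^{(\widehat i)}_T$, this set must intersect the majority of good points, and one triangle inequality yields the factor $3$.

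Combining the three steps, with probability at least $1-\delta$ we get $\|\widehat\theta - \theta^\star\| \le 3r = \cO\big(\eta^{1-1/q}B_q/\kappa\big)$; tracking the numerical constants (the radius $r$, the TV slack $1/15$, the factor $3$, and $B_q \le \overline B_q$) produces the stated bound~\eqref{eq:high_conf_corrupt}. I expect the weak estimator step to be the main obstacle: one must simultaneously (i) convert the \emph{asymptotic} $L_2$ guarantee on $\pi_{\beta,1-\eta}$ into a tail bound for the \emph{finite-horizon} iterate via the TV estimate, which is precisely what fixes the horizon condition, and (ii) keep the good probability above $1/2$ by a margin large enough that $N=\Theta(\log(1/\delta))$ suffices in the Hoeffding step. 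These two requirements pull the radius $r$ in opposite directions and are what pin down the final numerical constant; by contrast the selection lemma is purely geometric and routine, and the independence it relies on is exactly what the cycling construction buys.
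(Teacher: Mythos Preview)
Your proposal is correct and follows essentially the same approach as the paper's proof: Markov's inequality on the $L_2$ bound of Proposition~\ref{prop:corrupted_variance} combined with the total-variation estimate of Theorem~\ref{thm:geo_ergodicity} to bound each $\Proba(\overline E_n)$, then Hoeffding on the independent Bernoulli indicators, and finally the same factor-$3$ geometric selection argument for the median-distance aggregator. Your presentation of the selection lemma is in fact slightly cleaner than the paper's, but the structure and all key ingredients coincide.
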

We obtain a high confidence version of the bound in expectation previously stated in Proposition~\ref{prop:corrupted_variance}. As argued before, the above bound depends optimally on $\eta.$ Similar bounds to~\eqref{eq:high_conf_corrupt} are obtained for $q=2$ in~\citep{diakonikolas2022streaming} for streaming mean estimation, linear and logistic regression. Their results enjoy better dimension dependence but are less general than ours since we handle the case $q\in(1, 2)$ and consider strongly convex objectives more broadly. In addition, our results further extend to non-convex objectives as detailed in the next section. Finally, the implementation of the algorithm in~\citep{diakonikolas2022streaming} is not straightforward whereas our method is quite easy to use (see Section~\ref{sec:exp}).

\section{Non-Convex Objectives}
\label{sec:smooth}

In this section, we drop Assumption~\ref{asm:strongconvex} and consider the optimization of possibly non-convex objectives. 
Consequently, the existence of a unique optimum $\theta^\star$ and the quadratic growth of the objective are no longer guaranteed. 
This motivates us to use a uniform version of Assumption~\ref{asm:grad_moments} with $A_q = 0$ since the gradient is no longer assumed coercive and its deviation moments can be taken as bounded.
In this context, we obtain the following weaker (compared to Theorem~\ref{thm:geo_ergodicity}) ergodicity result for QC-SGD.
\begin{theorem}[Ergodicity]
    \label{thm:sublin_ergodicity}
    Let Assumptions~\ref{asm:lipsmooth},~\ref{asm:corruption},~\ref{asm:gradient_dist} and~\ref{asm:grad_moments} hold with $A_q = 0$ \textup(uniformly bounded moments\textup) and let $\cL$ be an objective such that $\inf_{\theta}\cL(\theta) > -\infty$ is finite. 
    Let $(\theta_t)_{t\geq 0}$ be the Markov chain generated by QC-SGD with step size $\beta$ and quantile $p \in [\eta, 1-\eta]$. Assume that $p$ and $\beta$ are such that $3p(1-\eta)/4 > L\beta +\eta$
    and that the subset of $\R^d$ given by
    \begin{equation}\label{eq:convergence_condition_smooth_only}
        \Big\{\theta : \frac{1}{2}\big\|\nabla\cL(\theta)\big\|^2 \leq \frac{ B_q^2 \big( (1-p)^{-\frac{2}{q}}(L\beta + 2\eta^2) + 2\eta^{2-\frac{2}{q}}\big)}{p(1-\eta)(3p(1 - \eta)/4 - L\beta -\eta)}\Big\}
    \end{equation}
    is bounded.
    Then\textup, for any initial $\theta_0 \in \R^d,$ there exists $M < +\infty$ such that after $T$ iterations
    \begin{equation}
        \big\|\delta_{\theta_0} P_{\beta, p}^T - \pi_{\beta,p}\big\|_{\tv} \leq \frac{M}{T},
    \end{equation}
    where $\pi_{\beta,p}$ is a unique invariant measure and where $\delta_{\theta_0}$ is the Dirac measure located at $\theta_0.$
\end{theorem}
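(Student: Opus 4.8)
The plan is to apply the classical drift-and-minorization machinery for Markov chains, but in the \emph{sub-geometric} regime where the Lyapunov drift yields only a constant decrease rather than one proportional to the drift function; this degradation is exactly what turns the geometric rate of Theorem~\ref{thm:geo_ergodicity} into the sublinear $M/T$ rate. I would take $V = \cL$ as the Lyapunov function, which is legitimate precisely because $\cL$ is assumed positive and hence bounded below. Applying the descent inequality of Assumption~\ref{asm:lipsmooth} to Iteration~\eqref{eq:clipped_sgd} and taking conditional expectations gives
\begin{equation*}
    \E\big[\cL(\theta_{t+1}) \:\vert\: \theta_t\big] \leq \cL(\theta_t) - \beta\, \E\big[\alpha_{\theta_t}\langle \nabla\cL(\theta_t), G(\theta_t,\zeta_t)\rangle \:\vert\: \theta_t\big] + \frac{L\beta^2}{2}\,\tau_{\theta_t}^2,
\end{equation*}
where the last term uses $\alpha_{\theta_t}^2\|G(\theta_t,\zeta_t)\|^2 \leq \tau_{\theta_t}^2$ by definition of the clipping factor.

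The technical core, shared in spirit with the proof of Theorem~\ref{thm:geo_ergodicity}, is to lower bound the inner-product term by a multiple of $\|\nabla\cL(\theta_t)\|^2$ minus controlled bias. Splitting the expectation over the clean ($1-\eta$) and corrupted ($\eta$) components of Assumption~\ref{asm:corruption}, using the quantile choice $p\in[\eta,1-\eta]$ and the moment bound of Assumption~\ref{asm:grad_moments} (with $A_q=0$, so that $\tau_\theta \lesssim \|\nabla\cL(\theta)\| + (1-p)^{-1/q}B_q$) to control both the clipping bias on clean samples and the adversarial but clipped contribution of the corrupted ones, I would obtain a bound of the form $PV(\theta) \leq V(\theta) - \beta\big(\tfrac34 p(1-\eta)\|\nabla\cL(\theta)\|^2 - c_0\big)$, with $c_0$ gathering the $B_q^2$, $(1-p)^{-2/q}$, $L\beta$ and $\eta$ terms. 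The hypothesis $3p(1-\eta)/4 > L\beta + \eta$ guarantees the leading coefficient is positive, and the set~\eqref{eq:convergence_condition_smooth_only} is exactly the sublevel set on which this bound fails to be strictly negative. Denoting it $C$, this yields the constant drift $PV \leq V - \varepsilon + b\,\ind{C}$ for some $\varepsilon, b > 0$, with $C$ bounded by assumption.

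Next I would establish that $C$ is a small set. Since $C$ is bounded and Assumption~\ref{asm:gradient_dist} provides a component $\nu_{\theta,1}$ with density $h_\theta$ bounded below by $\varkappa(R)$ on balls of radius $R$, uniformly in $\theta$, the one-step kernel $P_{\beta,p}(\theta,\cdot)$ restricted to $\theta\in C$ dominates $\epsilon\,\mu_C(\cdot)$ for some $\epsilon>0$ and a fixed probability measure $\mu_C$; the diffuseness of the noise moreover yields $\psi$-irreducibility and aperiodicity. Standard Meyn--Tweedie theory then gives positive Harris recurrence and the existence of a unique invariant measure $\pi_{\beta,p}$.

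To extract the rate I would \emph{not} invoke the geometric ergodicity theorem (which would require a drift proportional to $V$) but argue through coupling. The constant drift gives $\E_{\theta_0}[\sigma_C] \leq \cL(\theta_0)/\varepsilon + \mathrm{const}$ for the return time $\sigma_C$ to $C$, so return times to the small set have finite first moment. Via a Nummelin splitting on $C$ and the minorization $\epsilon\mu_C$, I would build a coupling of $\delta_{\theta_0}P^T_{\beta,p}$ and $\pi_{\beta,p}$ whose coupling time $T_\star$ has finite expectation $\E[T_\star] =: M \lesssim 1 + \cL(\theta_0)$, since the number of visits to $C$ before a successful coupling is stochastically dominated by a geometric variable and the inter-visit times have finite mean. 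The coupling inequality together with Markov's inequality then delivers
\begin{equation*}
    \big\|\delta_{\theta_0}P^T_{\beta,p} - \pi_{\beta,p}\big\|_{\tv} \leq \Proba(T_\star > T) \leq \frac{\E[T_\star]}{T} = \frac{M}{T},
\end{equation*}
which is the claim. The main obstacle is the drift computation of the second paragraph: turning the clipped, corrupted stochastic gradient into a clean constant-decrease inequality requires a careful simultaneous control of the clipping bias and of the $\eta$-fraction of outliers, and it is precisely the constant (rather than $V$-proportional) nature of the resulting drift — unavoidable without strong convexity — that caps the achievable rate at the sublinear $M/T$ produced by the first-moment coupling bound.
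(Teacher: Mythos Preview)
Your drift computation and identification of the small set $C$ match the paper's argument closely: the paper also takes (a rescaling of) $\cL$ as the Lyapunov function, derives the same constant drift $\Delta V(\theta)\le -1 + 2\ind{\theta\in C}$ from smoothness, the corruption split, the quantile bound $\tau_\theta\le\|\nabla\cL(\theta)\|+Q_p(\|\varepsilon_\theta\|)$ and Young's inequality, and then shows $C$ is small via Assumption~\ref{asm:gradient_dist}. One small imprecision: the minorization in general only holds for the $m$-step kernel with $m\approx\diam(C)/(\beta\underline\tau)$, not for the one-step kernel, since the density component of $P_{\beta,p}(\theta,\cdot)$ is only positive on a ball of radius $\beta\underline\tau$ around $\theta$; this is easy to fix and does not affect your argument.

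Where you genuinely diverge from the paper is in extracting the $M/T$ rate. The paper does not run a coupling argument; instead it invokes \cite[Theorem~13.0.1]{meyn2012markov} to obtain $\sum_{t\ge0}\|\lambda P_{\beta,p}^t-\pi_{\beta,p}\|_{\tv}<\infty$, and then combines this summability with the monotonicity $\|\lambda P_{\beta,p}^{t+1}-\pi_{\beta,p}\|_{\tv}\le\|\lambda P_{\beta,p}^{t}-\pi_{\beta,p}\|_{\tv}$ from \cite[Proposition~13.3.2]{meyn2012markov}: a non-increasing summable sequence automatically satisfies $T a_T\le\sum_{t\le T}a_t$, hence $a_T=\cO(1/T)$. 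Your Nummelin-splitting/coupling route is also correct and is in fact what underlies these Meyn--Tweedie statements; it has the advantage of making the constant $M\lesssim 1+\cL(\theta_0)$ explicit through $\E[T_\star]$, whereas the paper's black-box application hides this dependence. Conversely, the paper's ``summable and monotone'' trick is shorter and sidesteps the bookkeeping of bounding inter-visit times uniformly over $C$ and handling the stationary copy of the chain.
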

The proof is given in Appendix~\ref{sec:proof_thm_sublin_ergodicity} and uses ergodicity results from~\cite[Chapter 13]{meyn2012markov}. Theorem~\ref{thm:sublin_ergodicity} provides convergence conditions for an SGD Markov chain on a smooth objective in a robust setting. We are unaware of anterior results of this kind in the literature. 
Condition~\eqref{eq:convergence_condition_smooth_only} requires that the set where the true gradient norm is smaller than the estimation error is bounded. 
This aims to exclude the possibility that the iteration gets trapped within this set and keep using unreliable gradient estimates causing it to diverge. The result is stronger when the upperbound in~\eqref{eq:convergence_condition_smooth_only} is smaller. Note that setting $p$ close to $1-\eta$ increases the clipping threshold and the estimation error as a consequence, making this condition harder to satisfy. On the other hand, using $\beta = \cO(1/L)$ and a more conservative value of $p$ makes the upperbound of order $\cO(B_q^2)$ and condition~\ref{eq:convergence_condition_smooth_only} easier to satisfy. 
Observe that, for no corruption ($\eta=0$), the condition is always fulfilled for some $\beta$ and $p$. 
Note also that without strong convexity (Assumption~\ref{asm:strongconvex}), convergence occurs at a slower sublinear rate which is consistent with the optimization rate expected for a smooth objective (see~\cite[Theorem 3.3]{bubeck2015convex}).

As previously, we complement Theorem~\ref{thm:geo_ergodicity} with a characterization of the invariant distribution.
\begin{proposition}
    \label{prop:corrupted_variance_smooth_only}
    Under the conditions of Theorem~\ref{thm:sublin_ergodicity}, assume that the choice $p=1-\eta$ is such that the set~\eqref{eq:convergence_condition_smooth_only} is bounded. 
    For step size $\beta \leq \eta^2 / L,$ the stationary measure $\theta \sim \pi_{\beta, 1-\eta}$ satisfies
    \begin{equation}\label{eq:smooth_only_var_bound}
        \E\big\|\nabla\cL(\theta)\big\|^2 \leq \frac{5\eta^{2-\frac{2}{q}}B_q^2}{p(1-\eta)\big(3p(1-\eta)/4 -L\beta - \eta \big)}. 
    \end{equation}
\end{proposition}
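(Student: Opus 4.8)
The plan is to derive the stated moment bound from a one-step drift inequality for the Lyapunov function $V=\cL$ and then pass to the stationary measure. Writing $v_t = \alpha_{\theta_t} G(\theta_t,\zeta_t)$ for the clipped gradient so that $\theta_{t+1}=\theta_t-\beta v_t$, Assumption~\ref{asm:lipsmooth} gives the pointwise descent estimate
\begin{equation*}
    \E\big[\cL(\theta_{t+1}) \mid \theta_t\big] - \cL(\theta_t) \leq -\beta \big\langle \nabla\cL(\theta_t), \E[v_t \mid \theta_t]\big\rangle + \frac{L\beta^2}{2}\E\big[\|v_t\|^2 \mid \theta_t\big].
\end{equation*}
The whole argument then reduces to lower-bounding the drift term $\langle\nabla\cL,\E[v\mid\theta]\rangle$, upper-bounding $\E[\|v\|^2\mid\theta]$, and invoking that the ergodicity guaranteed by Theorem~\ref{thm:sublin_ergodicity} makes the expected one-step increment of $\cL$ vanish at stationarity; equivalently I would use the drift-to-moment result of~\cite[Chapter 14]{meyn2012markov}, which turns a drift inequality $PV\le V-f+b$ with $f\ge 0$ into $\pi_{\beta,p}(f)\le b$.

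For the drift term I would split along the corruption indicator, $\E[v\mid\theta]=(1-\eta)\E_{\cI}[\alpha\widetilde{G}] + \eta\,\E_{\cO}[\alpha\widecheck{G}]$. On the clean part, since $\widetilde{G}=\nabla\cL(\theta)+\varepsilon_\theta$, one gets $\langle\nabla\cL,\E_{\cI}[\alpha\widetilde{G}]\rangle = \E_{\cI}[\alpha]\,\|\nabla\cL\|^2 + \langle\nabla\cL,\E_{\cI}[\alpha\varepsilon_\theta]\rangle$. Two elementary facts then do the work: because $\tau_\theta$ is the $p$-th quantile of $\|\widetilde{G}\|$, clipping is inactive with probability $p$, so $\E_{\cI}[\alpha]\ge p$; and the clipping bias is controlled by rewriting $\E_{\cI}[\alpha\varepsilon_\theta] = -\E_{\cI}[(1-\alpha)\varepsilon_\theta]$ (using $\E_{\cI}[\varepsilon_\theta]=0$), bounding $1-\alpha\le\ind{\|\widetilde{G}\|>\tau_\theta}$ and applying Hölder with exponents $q$ and $q/(q-1)$ to obtain $\|\E_{\cI}[\alpha\varepsilon_\theta]\|\le B_q(1-p)^{1-1/q}$. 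The corrupted term is harmless because $\|\alpha\widecheck{G}\|\le\tau_\theta$, so it costs at most $\eta\|\nabla\cL\|\tau_\theta$, and a Markov bound on $\|\varepsilon_\theta\|$ gives $\tau_\theta\le\|\nabla\cL\|+B_q(1-p)^{-1/q}$. The same threshold bound together with $\|v\|\le\tau_\theta$ yields $\E[\|v\|^2\mid\theta]\le\tau_\theta^2\le 2\|\nabla\cL\|^2+2B_q^2(1-p)^{-2/q}$. Collecting these estimates produces a drift of the form
\begin{equation*}
    -\beta\Big((1-\eta)p\|\nabla\cL\|^2 - \|\nabla\cL\|\,B_q C_b - \eta\|\nabla\cL\|^2 - L\beta\|\nabla\cL\|^2\Big) + L\beta^2 B_q^2(1-p)^{-2/q},
\end{equation*}
where $C_b = (1-\eta)(1-p)^{1-1/q} + \eta(1-p)^{-1/q}$.

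Next I would apply Young's inequality to the cross term, absorbing one quarter of the leading term via $\|\nabla\cL\|\,B_q C_b \le \frac{(1-\eta)p}{4}\|\nabla\cL\|^2 + \frac{(B_q C_b)^2}{(1-\eta)p}$. This produces the effective coefficient $\frac{3(1-\eta)p}{4}-\eta-L\beta$ in front of $\|\nabla\cL\|^2$, which is exactly the factor in the claimed denominator and which the standing assumption $3p(1-\eta)/4>L\beta+\eta$ keeps positive. Passing to the invariant measure gives
\begin{equation*}
    \Big(\tfrac{3(1-\eta)p}{4}-\eta-L\beta\Big)\,\E_{\pi_{\beta,p}}\|\nabla\cL\|^2 \le \frac{(B_q C_b)^2}{(1-\eta)p} + L\beta B_q^2(1-p)^{-2/q},
\end{equation*}
i.e. a bound with denominator $p(1-\eta)\big(3p(1-\eta)/4-L\beta-\eta\big)$. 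It then remains to specialize $p=1-\eta$, where $C_b=\eta^{1-1/q}(2-\eta)$ so that $(B_q C_b)^2\le 4B_q^2\eta^{2-2/q}$ (using $\eta<1/2$), and to use $L\beta\le\eta^2$ (from $\beta\le\eta^2/L$) to bound the second numerator term by $B_q^2\eta^{2-2/q}$; adding these yields the numerator $5\eta^{2-2/q}B_q^2$.

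I expect the main obstacle to be twofold. First, the clipping-bias estimate $\|\E_{\cI}[(1-\alpha)\varepsilon_\theta]\|\le B_q(1-p)^{1-1/q}$ is the crux of the argument: it converts the quantile choice into a quantitative bias bound under only an $L_q$ moment, and the exponent $1-1/q$ must be tracked carefully through Hölder to land the final $\eta^{2-2/q}$ rate. Second, the passage from the pointwise drift to the stationary identity requires justifying that $\cL$ and $\|\nabla\cL\|^2$ are $\pi_{\beta,p}$-integrable; this is where positivity of $\cL$ and the ergodicity established in Theorem~\ref{thm:sublin_ergodicity} enter, and I would lean on the comparison and drift-to-moment results of~\cite[Chapter 14]{meyn2012markov} rather than argue integrability by hand.
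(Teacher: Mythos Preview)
Your proposal is correct and follows essentially the same route as the paper: the paper's proof simply reuses the one-step drift inequality~\eqref{eq:ineq_thm2} established inside the proof of Theorem~\ref{thm:sublin_ergodicity}, which is derived through exactly your decomposition (smoothness descent, corruption split, the bounds $\overline{\alpha}_\theta\ge p$, $\tau_\theta\le\|\nabla\cL\|+(1-p)^{-1/q}B_q$, the clipping-bias estimate $(1-p)^{1-1/q}B_q$ via H\"older, and Young with $\epsilon=p(1-\eta)/2$), then sets $p=1-\eta$, $L\beta\le\eta^2$ to collapse the numerator to $5\eta^{2-2/q}B_q^2$. The only cosmetic difference is the passage to stationarity: the paper invokes invariance directly to get $\E[\cL(\theta_{t+1})]=\E[\cL(\theta_t)]$, whereas you route through the $\pi(f)\le b$ drift-to-moment principle of~\cite{meyn2012markov}; your version is arguably more careful about integrability, but the content is the same.
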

The statement of Proposition~\ref{prop:corrupted_variance_smooth_only} is clearly less informative than Propositions~\ref{prop:corrupted_variance} and~\ref{prop:subgauss_subexp} since it only pertains to the gradient rather than, for example, the excess risk. This is due to the weaker assumptions that do not allow to relate these quantities. 
Still, the purpose remains to find a critical point and is achieved up to $\cO(\eta^{1-1/q})$ precision according to this result. 
Due to corruption, the estimation error on the gradient cannot be reduced beyond $\Omega(\eta^{1-{1/q}})$~\citep{prasad2020robust, hopkins2018mixture, diakonikolas2019recent}. Therefore, one may draw a parallel with a corrupted mean estimation task, in which case, the previous rate is, in fact, information-theoretically optimal.







\section{Implementation and Numerical Experiments}
\label{sec:exp}

The use of the generally unknown quantile $Q_p(\|\widetilde{G}(\theta_t, \zeta_t)\|)$ in QC-SGD constitutes the main obstacle to its implementation.
For strongly convex objectives, one may use a proxy such as $a\|\theta_t - \theta_{\mathrm{ref}}\| + b$ with positive $a,b$ and $\theta_{\mathrm{ref}}\in \R^d$ an approximation of $\theta^\star$ serving as reference point. This choice is consistent with Assumptions~\ref{asm:lipsmooth} and~\ref{asm:grad_moments}, see Lemma~\ref{lem:clipping_bias} in Appendix~\ref{sec:proofs}. 
    \begin{algorithm}
    \DontPrintSemicolon
        \caption{Rolling QC-SGD}
        \label{alg:quantile_clipping}
        \SetKwInOut{Input}{input}\SetKwInOut{Output}{output}
        
        \KwIn{ Step size $\beta >0$, quantile index \\$p\in (0, 1)$, initial parameter $\theta_0 \in \R^d$, $\tau_{\mathrm{unif}} >0$, buffer $B$ of size $S$ and horizon $T.$}

        Fill $B$ with $S-1$ values equal to $\tau_{\mathrm{unif}}$.\;
        
        \For{$t = 0\dots T-1$}{
         Draw a sample $G(\theta_t, \zeta_t)$ and add $\|G(\theta_t, \zeta_t)\|$ to $B.$\;
        
         $\widehat{Q}_p \gets \lfloor pS \rfloor$ rank element of $B$.\;
         
         $\theta_{t+1} \gets \theta_t - \beta \mathrm{clip}(G(\theta_t, \zeta_t), \widehat{Q}_p)$\;
         
         Delete the oldest value in $B$.\;
        }
        \KwRet $\theta_T$
    \end{algorithm}        
For instances of Problem~\eqref{eq:optim_problem} defined with an asymptotically linear function $\ell$ such as the logistic, hinge or Huber's loss, a constant threshold can be used since the gradient is a priori uniformly bounded, implying the same for the quantiles of its deviations. 
In practice, we propose a simpler and more direct approach: we use a rolling quantile procedure, described in Algorithm~\ref{alg:quantile_clipping}. 
The latter stores the values $(\|G(\theta_{t-j}, \zeta_{t-j})\|)_{1\leq j \leq S}$ in a buffer of size $S \in \N^*$ and replaces $Q_p(\|\widetilde{G}(\theta_t, \zeta_t)\|)$ in QC-SGD by an estimate $\widehat{Q}_p$ which is the $\lfloor pS \rfloor$-th order statistic in the buffer.
Note that only the norms of previous gradients are stored in the buffer, limiting the memory overhead to $\cO(S)$. 
The computational cost of $\widehat{Q}_p$ can also be kept to $\cO(S)$ per iteration thanks to a bookkeeping procedure (see Appendix~\ref{sec:exp_details}).



Note that, since Algorithm~\ref{alg:quantile_clipping} uses the corrupted samples $G(\theta_t, \zeta_t)$ rather than the true ones $\widetilde{G}(\theta_t, \zeta_t)$ to estimate the quantiles, a more conservative upperbound of roughly $p\leq 1-2\eta$ should be respected when an estimate of $\eta$ is available. Otherwise, one may default to $p=1/2$ as an initial guess and adapt based on performance. In practice, our experiments show that relatively low values within $p\in[0.1,0.2]$ are best for strongly convex objectives while higher values are affordable in other cases. See Appendix~\ref{sec:exp_details} for more details. 

We implement this procedure for a few tasks and compare its performance with relevant baselines. 
We do not include a comparison with~\citep{diakonikolas2022streaming} whose procedure has no implementation we are aware of and is difficult to use in practice. Indeed, the algorithm in question heavily depends on several problem parameters and involves a filtering procedure which requires multiple passes on large data mini-batches making it impractical for the streaming setting. Moreover, a number of special methods are required to mitigate the costs of the matrix operations needed in the original procedure making the algorithm's implementation even more involved.

Our experiments on synthetic data consider an infinite horizon, dimension $d=128$, and a constant step size for all methods.

\paragraph{Linear regression.} 

We consider least-squares linear regression and compare RQC-SGD with Huber's estimator~\citep{huber1973} and clipped SGD (designated as CClip($\lambda$)) with three clipping levels $\lambda \sigma_{\max}\sqrt{d}$ for $\lambda \in \{0.8, 1.0, 1.2\}$ where $\sigma_{\max}$ is a fixed data scaling factor. 
These thresholds provide a rough estimate of the gradient norm near the optimum $\theta^\star.$ 
We generate covariates $X$ and labels $Y$ both heavy-tailed and corrupted.
Corruption in the data stream is generated according to Assumption~\ref{asm:corruption} with outliers represented either by aberrant values or \emph{fake} samples $Y = X^\top \theta_{\mathrm{fake}} + \epsilon$ using a false parameter $\theta_{\mathrm{fake}}$, see Appendix~\ref{sec:exp_details} for further details on data generation and fine tuning of the Huber parameter. 
All methods are run with constant step size and averaged results over $100$ runs are displayed on Figure~\ref{fig:reg} (top row). 


As anticipated, Huber's loss function is not robust to corrupted covariates. 
In contrast, using gradient clipping allows convergence to meaningful estimates. 
Although this holds true for a constant threshold, Figure~\ref{fig:reg} shows it may considerably slow the convergence if started away from the optimum. 
In addition, the clipping level also affects the final estimation precision and requires tuning. 
Both of the previous issues are well addressed by RQC-SGD whose adaptive clipping level allows fast progress of the optimization and accurate convergence towards a small neighborhood of the optimum.

\begin{figure*}[!htbp]
    \centering
    \includegraphics[width=\textwidth]{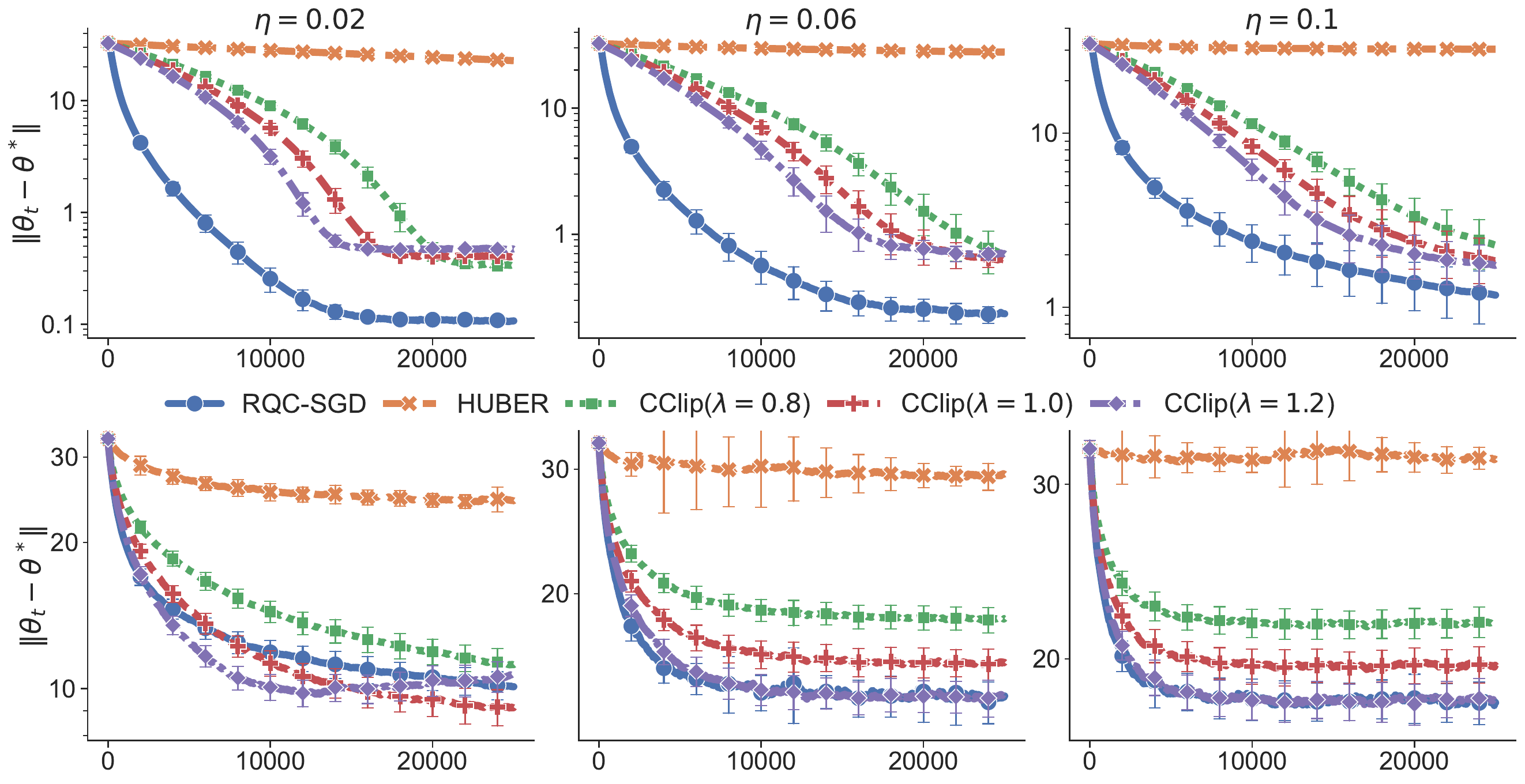}
    \caption{Evolution of $\|\theta_t-\theta^\star\|$ on the tasks of linear regression (top row) and logistic regression (bottom row) averaged over 100 runs at increasing corruption levels (error bars represent half the standard deviation). Estimators based on Huber's loss are strongly affected by data corruption. SGD with constant clipping thresholds is robust but slow to converge for linear regression and requires tuning for better final precision. RQC-SGD combines fast convergence with good final precision thanks to its adaptive clipping strategy.}
    \label{fig:reg}
\end{figure*}

\paragraph{Logistic regression.} 

We test the same methods on logistic regression. Huber's baseline is represented by the modified Huber loss (also known as quadratic SVM~\citep{zhang2004solving}). We generate data similarly to the previous task except for the labels which follow $Y\sim \ber(\sigma(X^\top \theta^\star))$ with $\sigma$ the sigmoid function. Corrupted labels are either uninformative, flipped or obtained with a fake $\theta_{\mathrm{fake}}$ (see details in Appendix~\ref{sec:exp_details}). Results are displayed on the bottom row of Figure~\ref{fig:reg}. 

As previously, Huber's estimator performs poorly with corruption. 
However, constant clipping appears to be better suited when the gradient is bounded, so that the optimization is less affected by its underestimation. 
We observe, nonetheless, that a higher clipping level may lead to poor convergence properties, even at a low corruption rate. 
Note also that the constant levels we use are based on prior knowledge about the data distribution and would have to be fine tuned in practice. Meanwhile, the latter issue is well addressed by quantile clipping. 
Finally, notice that no algorithm truly approaches the true solution for this task. 
This reflects the difficulty of improving upon Proposition~\ref{prop:corrupted_variance_smooth_only} which only states convergence to a neighborhood where the objective gradient is comparable to the estimation error in magnitude.

\paragraph{Classification with shallow networks.} 

Finally, we evaluate the performance on the task of training a single hidden layer neural network classifier on some real datasets which corresponds to a non-convex optimization problem. To handle multiclass data, we use the cross entropy loss and replace Huber's baseline with plain SGD for simplicity. We define constant clipping baselines using thresholds given by the quantiles of order $p=0.25, 0.5,$ and $0.75$ of the norms of a batch of gradients at the beginning of the optimisation. Due to the greater sensitivity to corruption observed in this case, we set $\eta=0.02$ and use $p=0.9$ for RQC-SGD. We train all methods with one sample per iteration using equal step sizes and evaluate them through the test loss. We provide further results and experimental details in Appendix~\ref{sec:exp_details}. Results are displayed on Figure~\ref{fig:shallow_classif_main}.

\begin{figure*}
    \centering
    \includegraphics[width=\textwidth]{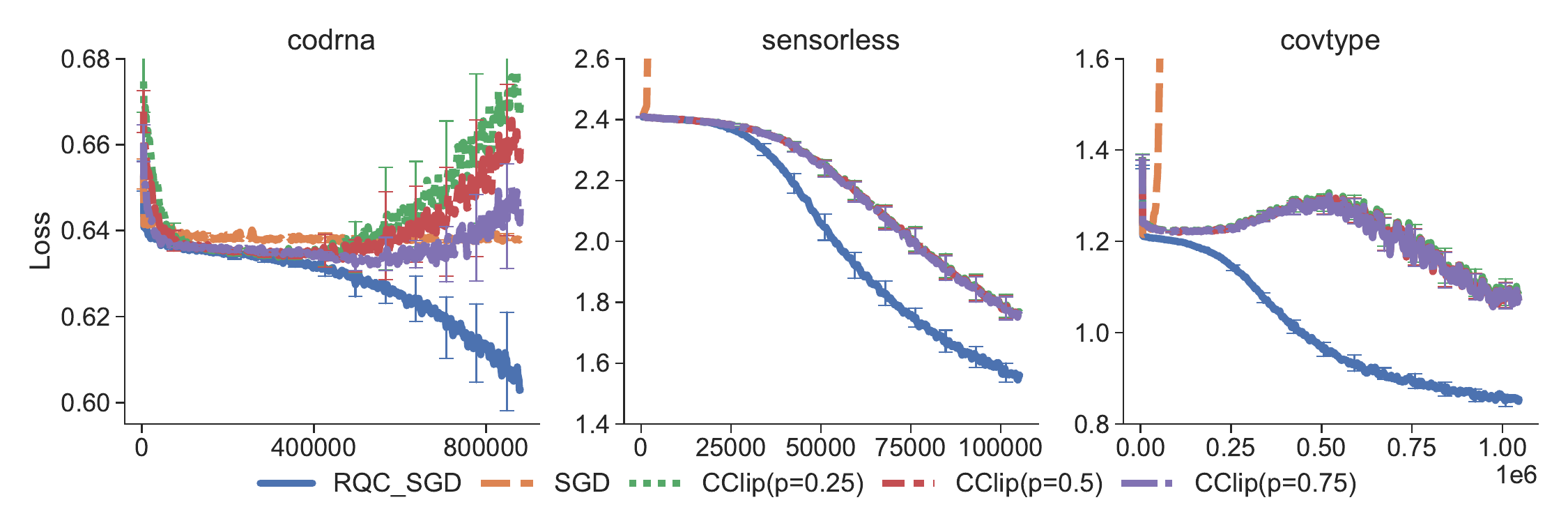}
    \caption{Evolution of the test loss ($y$-axis) against iteration $t$ ($x$-axis) for the training of a single hidden layer network on different real world classification datasets (average over 20 runs).
    We observe more consistent and stable objective decrease for RQC-SGD whereas constant clipping baselines are slower and may fail to converge.}
    \label{fig:shallow_classif_main}
\end{figure*}

Unsurprisingly, standard SGD is not robust to corrupted samples and, while using a constant clipping level helps keep the optimisation on track, the experiments show that careful tuning may sometimes be necessary to prevent divergence. On the other hand, the adaptive clipping levels used by RQC-SGD allow to make the iteration faster \textit{and} more resilient to corruption. This leads to an optimization path with a more consistent decrease of the objective. Moreover, we also observe that RQC-SGD allows for a better control of the asymptotic variance of the optimized parameter compared to constant clipping.

\section{Conclusion}

We introduced a new clipping strategy for SGD and proved that it defines a stochastic optimization procedure which is robust to both heavy tails and outliers in the data stream. 
We also provided an efficient rolling quantile procedure to implement it and demonstrated its performance through numerical experiments on synthetic and real data. Future research directions include improving the dimension dependence in our bounds, possibly by using sample rejection rules or by considering stochastic mirror descent~\citep{nemirovskij1983problem, beck2003mirror} clipped with respect to a non Euclidean norm. This may also procure robustness to higher corruption rates. Another interesting research track is the precise quantification of the geometric convergence speed of the Markov chain generated by constant step size SGD on a strongly convex objective.

\subsubsection*{Acknowledgments}
This research is supported by the Agence Nationale de la Recherche as part of the ``Investissements d'avenir'' program (reference ANR-19-P3IA-0001; PRAIRIE 3IA Institute).

\bibliography{ref}
\bibliographystyle{tmlr}

\newpage
\appendix

{\LARGE\textbf{Supplementary Material}}

{\Large\textbf{Robust Stochastic Optimization via Gradient Quantile Clipping}}

\section{Additional experimental results}

\begin{figure*}[!h]
    \centering
    \includegraphics[width=0.8\textwidth]{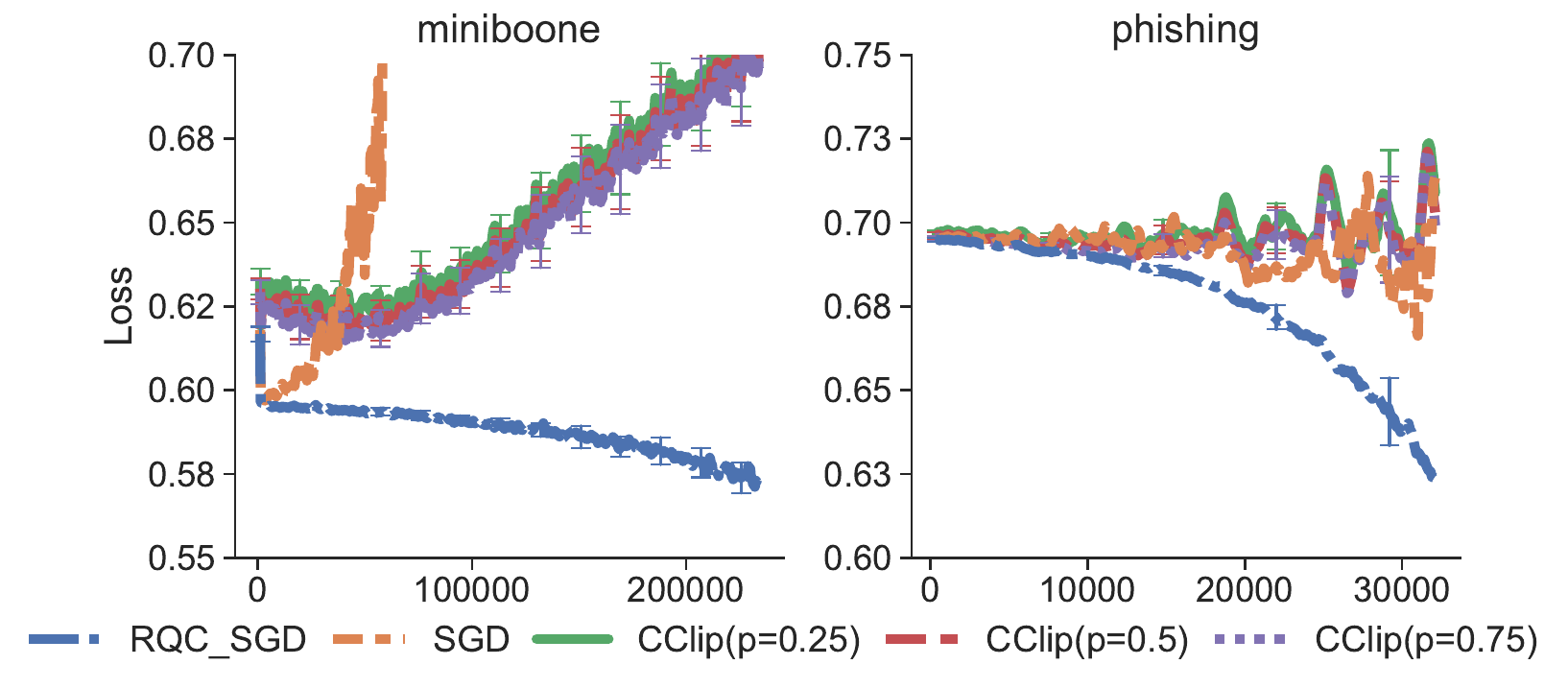}
    \caption{Evolution of the test loss ($y$-axis) against iteration $t$ ($x$-axis) for the training of a single hidden layer network on additional real world classification datasets (average over 20 runs).}
    \label{fig:shallow_classif_suppl}
\end{figure*}

\paragraph{Classification with shallow networks.} We performed the same experiment using two additional datasets. The results are displayed on Figure~\ref{fig:shallow_classif_suppl} and corroborate our statements in the main paper.

\paragraph{Expectation estimation.} 

We estimate the expectation of a random vector $X$ by minimizing the objective $\cL(\theta) = \frac{1}{2}\|\theta - \theta^\star\|^2$ with $\theta^\star = \E [X]$ using a stream of both corrupted and heavy-tailed samples, see Appendix~\ref{sec:exp_details} for details.
We run RQC-SGD (Algorithm~\ref{alg:quantile_clipping}) and compare it to an online version of geometric and coordinate-wise Median-Of-Means (GMOM and CMOM~\citep{cardot2017gmom, cardot2013gmom}) which use block sample means to minimize an $L_1$ objective (see Appendix~\ref{sec:exp_details}). 
Although these estimators are a priori not robust to $\eta$-corruption, we ensure that their estimates are meaningful by limiting $\eta$ to $4\%$ and using blocks of $10$ samples. 
Thus, blocks are corrupted with probability $< 1/2$ so that the majority contains only true samples.
Figure~\ref{fig:mean} displays the evolution of $\|\theta_t - \theta^\star\|$ for each method averaged over 100 runs for increasing $\eta$ and constant step size. We also display a single run for $\eta=0.04$. We observe that RQC-SGD is only weakly affected by the increasing corruption whereas the performance of GMOM and CMOM quickly degrades with $\eta$, leading to unstable estimates.

\begin{figure*}
    \centering
    \includegraphics[width=\textwidth]{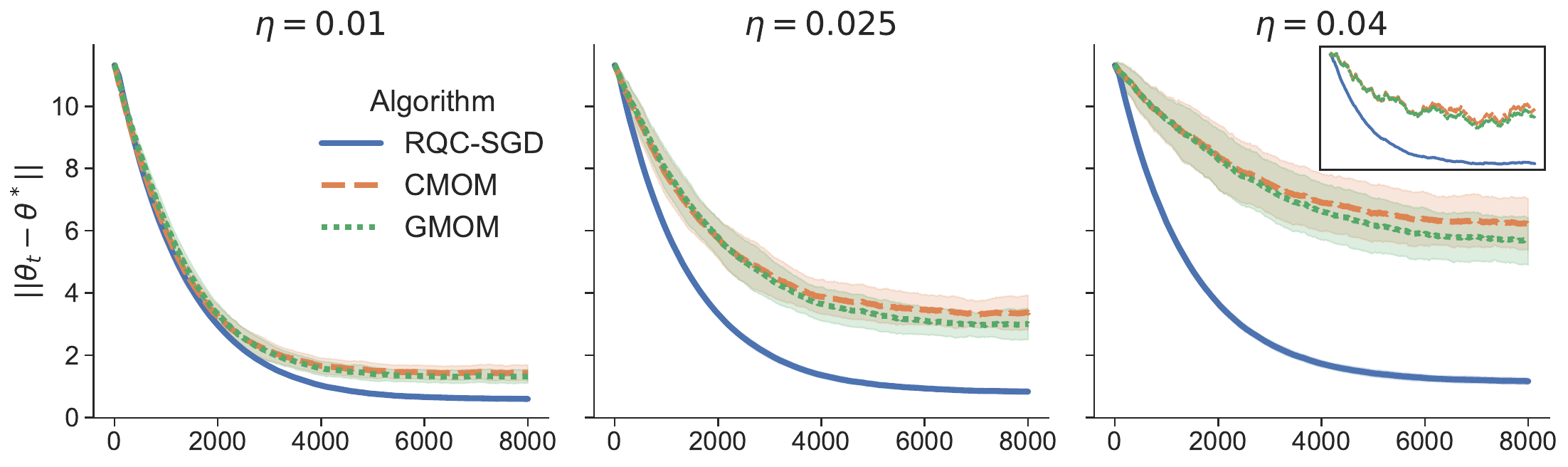}
    \caption{Evolution of $\|\theta_t-\theta^\star\|$ ($y$-axis) against iteration $t$ ($x$-axis) for the expectation estimation task, averaged over 100 runs at different corruption levels $\eta$ (bands widths correspond to the standard deviation of the 100 runs).
    For $\eta = 0.04,$ the evolution on a single run is also displayed. 
    We observe good performance for RQC-SGD for increasing $\eta$ while CMOM and GMOM are more sensitive.}
    \label{fig:mean}
\end{figure*}

\section{Experimental details}\label{sec:exp_details}
As previously mentioned, the dimension is set to $d=128$ in our experiments with synthetic data. We also set $\sigma_{\min} =1$ and $\sigma_{\max} = 5$ as minimum and maximum scaling factors. For all tasks and algorithms, the optimization starts from $\theta_0 = 0.$

\paragraph{Bookkeeping in RQC-SGD} The buffer in Algorithm~\ref{alg:quantile_clipping} stores values in sorted order along with their ``ages''. The most recent and oldest values have ages $0$ and $S-1$ respectively. At each iteration, a new gradient is received, all ages are incremented and the oldest value is replaced by the new one with age $0.$ The latter is then sorted using one iteration of insertion sort. The estimate $\widehat{Q}_p$ is retrieved at each iteration as the value at position $\lfloor pS\rfloor.$

\subsection{Mean estimation}

    \paragraph{Data generation} We compute a matrix $\Sigma = (AA^\top +A^\top A)/2$ where $A\in \R^{d\times d}$ is a random matrix with i.i.d centered Gaussian entries with variance $1/d$ sampled once and for all. We generate true samples as $X = \ind{} + \Sigma V$ where $V$ is a vector of i.i.d symmetrized Pareto random variables with parameter $2$ and $\ind{}\in \R^d$ denotes the vector with all entries equal to $1.$

    We draw corrupted samples as $\widecheck{X} = 10\widecheck{V} - 100\times \ind{}$ where $\widecheck{V}$ is a vector of i.i.d symmetrized Pareto variables with parameter $1.5.$ We use step size $\beta = 10^{-3}.$

    \paragraph{GMOM and CMOM} The geometric and coordinatewise Median-Of-Means estimators (GMOM and CMOM) optimize the following objectives respectively:
    \begin{equation*}
        \E\big\|\theta - \overline{X}_{N_b}\|_2 \quad \text{and} \quad \E\big\|\theta - \overline{X}_{N_b}\big\|_1,
    \end{equation*}
    where $\overline{X}_{N_b}$ is the average of $N_b$ independent copies of $X.$ The block size is set to $N_b = 10$ in the whole experiment. The above objectives are optimized by computing samples of $\overline{X}_{N_b}$ in a streaming fashion so that one step is made for each ${N_b}$ samples. In order to compensate for this inefficiency we multiply the step size by $N_b$ for both GMOM and CMOM. For GMOM, we additionally multiply the step size by $\sqrt{d}$ in order to compensate the normalization included in the gradient formula.
    
    \paragraph{RQC-SGD} For mean estimation, we implement RQC-SGD (Algorithm~\ref{alg:quantile_clipping}) with buffer size $S=100, p=0.2$ and $\tau_{\mathrm{unif}} = 10.$

\subsection{Linear regression}

\paragraph{Data generation} We choose the true parameter $\theta^\star$ by independently sampling its coordinate uniformly in the interval $[-5, +5].$ The true covariates are sampled as $X = \Sigma V$ where $\Sigma$ is a diagonal matrix with entries sampled uniformly in the interval $[\sigma_{\min}, \sigma_{\max}]$ (once and for all) and $V$ is a vector of i.i.d symmetrized Pareto random variables with parameter $2.$ The labels are sampled as $Y = X^\top \theta^\star + \epsilon$ where $\epsilon$ is a symmetrized Pareto random variable with parameter $2.$

The corrupted samples are obtained according to one of the following possibilities with equal probability:
\begin{itemize}
    \item $X = 1000 (\max_i \Sigma_{ii}) v + W$ where $v$ is a fixed unit vector and $W$ is a standard Gaussian vector and $Y\sim \ber(1/2).$
    \item $X = 1000 (\max_i \Sigma_{ii}) V$ with $V$ a unit norm random vector with uniform distribution and $Y = 1000(Z + U)$ where $Z$ is a random sign and $U$ is uniform over $[-1/5, 1/5].$
    \item $X = 10V$ with $V$ a random vector with i.i.d entries following a symmetrized Log-normal distribution and $Y = X^\top \theta_{\mathrm{fake}} + \epsilon$ with $\theta_{\mathrm{fake}}$ a fake parameter drawn similarly to $\theta^\star$ once and for all and $\epsilon$ a standard Gaussian variable.
\end{itemize}
We use step size $\beta = 10^{-3}.$
\paragraph{Huber parameter} In order to tune the parameter $\delta$ of Huber's loss function, we proceed as follows:
\begin{itemize}
    \item[--] For each corruption level $\eta,$ we consider $10$ candidate values $\delta_j = 10^{j/2 - 5}$ for $0\leq j < 10.$
    \item[--] For each candidate $\delta_j$, we train $250$ estimators $\big(\widehat{\theta}_{\delta_j}^{(i)}\big)_{i\in\setint{250}}$ using $1000$ samples each.
    \item[--] We choose $\widehat{j}$ for which the average $\frac{1}{250}\sum_{i\in\setint{250}}\big\|\widehat{\theta}_{\delta_j}^{(i)} - \theta^\star\big\|$ is minimal and use $\widehat{\delta} = \delta_{\widehat{j}}$ as parameter.
\end{itemize}

\paragraph{RQC-SGD} For linear regression, we run RQC-SGD with buffer size $S= 100$ and $\tau_{\mathrm{unif}} = 10.$ The quantile value was set to $p=0.1$ for $\eta\in \{0.02, 0.06\}$ and $p=0.05$ for $\eta=0.1$.

\subsection{Logistic regression}

\paragraph{Data generation} The true parameter $\theta^\star$ and covariates $X$ are chosen similarly to linear regression. Given $X$, the label $Y$ is set to $+1$ with probability $\sigma(X^\top \theta^\star)$ where $\sigma$ is the sigmoid function $\sigma(x) = (1 + e^{-x})^{-1}$ and to $-1$ otherwise.

The corrupted covariates are determined similarly to linear regression while the labels are set as follows in each respective case:
\begin{itemize}
    \item $Y$ is set to $+1$ or $-1$ with equal probability.
    \item $Y = -\sign (X^\top \theta^\star).$
    \item $Y = \sign(X^\top \theta_{\mathrm{fake}})$ with $\theta_{\mathrm{fake}}$ a fake parameter drawn similarly to $\theta^\star$ once and for all.
\end{itemize}
We use step size $\beta = 6\times 10^{-3}.$

\paragraph{Huber parameter} The same procedure is used to tune the parameter of the modified Huber loss as for linear regression.

\paragraph{RQC-SGD} For logistic regression, we run RQC-SGD with buffer size $S= 100$ and $\tau_{\mathrm{unif}} = 10$. The quantile value was set to $p= 1-\eta - 0.1$ for $\eta=0.02$ and $p= 1-\eta - 0.05$ otherwise.

\subsection{Single hidden layer neural network classifier}

We train a single hidden layer neural network classifier with 100 hidden neurones for all datasets. We use one sample per iteration and step size $\beta = 10^{-2}$ for all methods.

As previously, RQC-SGD is run with buffer size $S= 100$ and $\tau_{\mathrm{unif}} = 10$. The quantile value was set to $p= 0.9$. We compute the gradient norms over a batch of samples of size $S$ at the beginning of the optimization and use the quantiles of order $p=0.25, 0.5$ and $0.75$ as the clipping level for the constant clipping baselines.

\paragraph{Data} We used publicly available datasets for our experiments. We provide details about their characteristics and sources in Table~\ref{tab:dataset-descrip}.
\begin{table}[!ht]
    \centering
    \small
\begin{tabular}{lcccc}
    \hline
    Dataset & \# Samples & \# Features & \# Classes & Source \\
    \hline
    Codrna~\citep{uzilov2006detection} & 488,565 & 8 & 2 & \href{https://www.openml.org/search?type=data&sort=runs&id=351&status=active}{OpenML} \\
    Sensorless~\citep{misc_dataset_for_sensorless_drive_diagnosis_325} & 58,509 & 48 & 11 & \href{https://archive.ics.uci.edu/dataset/325/dataset+for+sensorless+drive+diagnosis}{UCI} \\
    Covtype~\citep{misc_covertype_31} & 581,012 & 52 & 7 & scikit-learn \\
    Miniboone~\citep{misc_miniboone_particle_identification_199} & 130,065 & 50 & 2 & \href{https://archive.ics.uci.edu/dataset/199/miniboone+particle+identification}{UCI} \\
    Phishing\newline~\citep{hannousse2020web} & 11,430 & 87 & 2 & \href{https://www.kaggle.com/datasets/shashwatwork/web-page-phishing-detection-dataset/data}{Kaggle}\\
    \hline
\end{tabular}
\caption{Main characteristics of the data sets used in experiments, including number of samples, number of features, number of classes and sources.}
\label{tab:dataset-descrip}
\end{table}

We use a 10\% share of each dataset as a test set in order to compute the test loss plotted in Figures~\ref{fig:shallow_classif_main} and~\ref{fig:shallow_classif_suppl}. We also ensure the test set contains at least 5000 elements. Optimization is run using the remaining train set which is corrupted as specified next. The results are averaged over 20 runs for each datasets. 

\paragraph{Data corruption} We corrupt train data samples at each iteration uniformly at random with probability $\eta = 0.02$. 

Although we run the optimization using one sample per iteration, the datasets we use are available offline so that we have a data matrix denoted $\bX \in \mathbb{R}^{n\times (d+1)}$ whose last column represents the labels. This corresponds to $n$ samples and $d$ features.

For each feature $j \in \setint{d},$ we compute $\wh \mu_j$ and $\wh \sigma_j$ the empirical mean and standard deviation respectively. We also sample a random unit vector $u$ of size $d$ and introduce corruption as follows:
\begin{itemize}
    \item For the label column, we introduce corruption by changing the value uniformly at random among the other possible modalities.
    \item For features, we introduce corruption by replacing the original values with one of the following possibilities with equal probability:
    \begin{itemize}
        \item a vector $\xi$ sampled coordinatewise according to $\xi_j = r_j + 1000 \times \wh\sigma_j \nu $ where $r_j$ is a value randomly picked in the column $\bX_{\bullet ,j}$ and $\nu$ is a sample from the Student distribution with $2.1$ degrees of freedom.
        \item a vector $\xi$ sampled coordinatewise according to $\xi_j = \wh \mu_j + 1000 \times \wh\sigma_j u_j + z $ where $z$ is a standard Gaussian.
        \item a vector $\xi$ sampled according to $\xi = \wh \mu + 1000 \times \wh \sigma \otimes w $ where $w$ is a uniformly sampled unit vector.
    \end{itemize}
\end{itemize} 

\section{Geometric convergence speed and relation to step size}\label{sec:conv_speed}

The geometric Markov chain convergence stated in Theorem~\ref{thm:geo_ergodicity} occurs at a speed determined by the contraction factor $\rho$ which mainly depends on the step size $\beta$ and quantile $p$ defining the iteration. Therefore, an explicit formulation of this dependency is necessary to precisely quantify the convergence speed. This question is lightly touched upon in~\citep{yu2021analysis} whose Proposition 2.1 is an analogous SGD ergodicity result. Like Theorem~\ref{thm:geo_ergodicity}, the latter relies on the Markov chain theory presented in~\citep{meyn2012markov}. It is argued in~\citep{yu2021analysis} that a vanishing step size $\beta \to 0$ causes $\rho$ to be close to one, leading to slow convergence but with smaller bias. However, these considerations remain asymptotic and do not quite address the convergence speed issue.

More generally, the precise estimation of the factor $\rho$ goes back to the evaluation of the convergence speed of a Markov chain satisfying a geometric drift property. Near optimal results exist for chains with particular properties such as stochastic order~\citep{lund1996computable, roberts2000rates}, reversibility~\citep{jerison2019quantitative} or special assumptions on the renewal distribution~\citep{berenhaut2001geometric}. Unfortunately, such properties do not hold for SGD. Let $(\theta_t)_{t\geq 0}$ be a Markov chain satisfying the drift property:
\begin{equation*}
    \Delta V(\theta) \leq \begin{cases}
        (1 - \lambda) V(\theta) & \text{for }\quad \theta \notin \mathcal{C} \\
        b & \text{for }\quad \theta \in \mathcal{C}
    \end{cases}
\end{equation*}
with $\lambda \in (0, 1), b < +\infty, V$ a real function such that $V(\theta) \geq 1$ for all $\theta$ and $\mathcal{C}$ a (bounded) small set (see~\cite[Chapter 5]{meyn2012markov}). Then, based on the available literature~\citep{baxendale2005renewal, bednorz2013kendall}, $(\theta_t)_{t\geq 0}$ converges as in Theorem~\ref{thm:geo_ergodicity} with $\rho \approx 1 - \lambda^3,$ the latter estimation being unimprovable without further information on $(\theta_t)_{t\geq 0}$ (see the discussion following~\cite[Theorem 3.2]{baxendale2005renewal}). For the specific setting of Theorem~\ref{thm:geo_ergodicity} (and more generally for SGD by setting $p=1$), this only yields an excessively pessimistic estimate 
\begin{equation}\label{eq:rho_est}
    \rho \approx 1 - (p\beta \mu)^3
\end{equation}
whereas it is reasonable to conjecture that $\rho \approx 1 - p\beta \mu.$ The suboptimality of~\eqref{eq:rho_est} is felt in the uncorrupted case in Proposition~\ref{prop:subgauss_subexp} and Corollary~\ref{cor:subgauss_no_corrupt} where one is tempted to set $\beta$ of order $1/T,$ with $T$ the horizon, reducing the bias to $\cO(1/\sqrt{T}).$ However, this results in an unacceptable sample cost of order $T^3$ before convergence occurs. On the other hand, assuming the estimate $\rho \approx 1 - p\beta \mu$ holds, using a step size of order $\log(T)/T$ allows to combine fast convergence and near optimal statistical performance. Finally, note that in the corrupted case, the optimal statistical rate is $\cO(\eta^{1-1/q})$ so that striking such a compromise is unnecessary.


\section{Proofs}\label{sec:proofs}
\subsection{Preliminary lemmas}
\begin{lemma}\label{lem:contraction}
    Grant Assumptions~\ref{asm:lipsmooth} and~\ref{asm:strongconvex}. For any $\theta, \theta' \in \R^d$ and $\beta \leq \frac{2}{\mu+L}$ we have :
    \begin{equation}
        \big\|\theta - \beta \nabla\cL(\theta) - (\theta' - \beta \nabla\cL(\theta'))\big\|^2 \leq (1 - \beta\mu)^2 \|\theta - \theta'\|^2
    \end{equation}        
\end{lemma}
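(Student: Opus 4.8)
The plan is to reduce the claim to the combined co-coercivity (interpolation) inequality for smooth strongly convex functions. Writing $u = \theta - \theta'$ and $g = \nabla\cL(\theta) - \nabla\cL(\theta')$, I would first expand
\[
    \|u - \beta g\|^2 = \|u\|^2 - 2\beta\langle u, g\rangle + \beta^2\|g\|^2,
\]
and compare it with the target $(1-\beta\mu)^2\|u\|^2 = \|u\|^2 - 2\beta\mu\|u\|^2 + \beta^2\mu^2\|u\|^2$. After cancelling $\|u\|^2$ and dividing by $\beta > 0$, the inequality to be proven is equivalent to
\[
    \beta\big(\|g\|^2 - \mu^2\|u\|^2\big) \leq 2\big(\langle u, g\rangle - \mu\|u\|^2\big).
\]

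The key ingredient, valid under Assumptions~\ref{asm:lipsmooth} and~\ref{asm:strongconvex}, is the interpolation inequality
\[
    \langle u, g\rangle \geq \frac{\mu L}{\mu + L}\|u\|^2 + \frac{1}{\mu + L}\|g\|^2.
\]
I would obtain it by applying the standard gradient co-coercivity bound to the auxiliary function $\phi(\theta) := \cL(\theta) - \frac{\mu}{2}\|\theta\|^2$, which is convex and $(L-\mu)$-smooth, and then substituting $\nabla\phi = \nabla\cL - \mu\,\mathrm{Id}$ (alternatively, one may simply cite it as a classical fact from convex optimization). Plugging this in yields $\langle u, g\rangle - \mu\|u\|^2 \geq \tfrac{1}{\mu+L}\big(\|g\|^2 - \mu^2\|u\|^2\big)$, so the right-hand side above is bounded below by $\tfrac{2}{\mu+L}\big(\|g\|^2 - \mu^2\|u\|^2\big)$.

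It then remains to verify $\beta\big(\|g\|^2 - \mu^2\|u\|^2\big) \leq \tfrac{2}{\mu+L}\big(\|g\|^2 - \mu^2\|u\|^2\big)$. Strong convexity gives $\langle u, g\rangle \geq \mu\|u\|^2$, hence by Cauchy--Schwarz $\|g\| \geq \mu\|u\|$, so the factor $\|g\|^2 - \mu^2\|u\|^2$ is nonnegative; the step-size condition $\beta \leq \tfrac{2}{\mu+L}$ then closes the argument. The main (and essentially only) obstacle is the interpolation inequality: I checked that combining strong convexity and $L$-smoothness \emph{separately} (e.g.\ $\langle u,g\rangle \geq \mu\|u\|^2$ together with co-coercivity $\langle u,g\rangle \geq \tfrac{1}{L}\|g\|^2$) is not strong enough to recover the sharp factor $(1-\beta\mu)^2$, so the tighter joint inequality is genuinely needed; beyond that, every step is elementary algebra.
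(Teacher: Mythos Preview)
Your proposal is correct and follows essentially the same approach as the paper: both proofs rest on the same interpolation inequality (the paper writes it as $\|g\|^2 \leq (\mu+L)\langle u,g\rangle - \mu L\|u\|^2$, which is algebraically equivalent to your $\langle u,g\rangle \geq \frac{\mu L}{\mu+L}\|u\|^2 + \frac{1}{\mu+L}\|g\|^2$) together with the strong-convexity bound $\langle u,g\rangle \geq \mu\|u\|^2$, and then use the step-size condition $\beta \leq \tfrac{2}{\mu+L}$ to close. The only difference is cosmetic organization of the algebra---the paper bounds $\|g\|^2$ first and then substitutes, whereas you reduce to the sign of $\|g\|^2 - \mu^2\|u\|^2$---but the ingredients and logic are identical.
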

\begin{proof}
    For $\beta \leq \frac{2}{\mu+L},$ we have:
    \begin{align*}
        \big\|\theta - \beta \nabla\cL(\theta) - &(\theta' - \beta \nabla\cL(\theta'))\big\|^2 \\
        &= \|\theta - \theta'\|^2 - 2\beta \langle \theta - \theta', \nabla\cL(\theta) -\nabla\cL(\theta') \rangle + \beta^2\|\nabla\cL(\theta) - \nabla\cL(\theta')\|^2\\
        &\leq(1 - \beta^2\mu L)\|\theta - \theta'\|^2 - \beta(2 - \beta(\mu+L))\langle \nabla\cL(\theta) - \nabla\cL(\theta'), \theta - \theta' \rangle)\\
        &\leq (1 - \beta^2\mu L)\|\theta - \theta'\|^2 - \beta(2 - \beta(\mu+L) )\mu \|\theta - \theta'\|^2\\
        &= (1 - \beta^2\mu L - 2\beta \mu + \beta^2\mu(\mu+L))\|\theta - \theta'\|^2\\
        &= (1 - \beta \mu)^2\|\theta - \theta'\|^2,
    \end{align*}
    where we used the inequalities :
    \begin{align}
        \big\|\nabla\cL(\theta) - \nabla\cL(\theta')\big\|^2 &\leq (\mu+L)\langle \nabla\cL(\theta) - \nabla\cL(\theta'), \theta - \theta' \rangle  - \mu L\|\theta - \theta'\|^2 \label{eq:gradient_coercivity}\\
        \mu \|\theta - \theta'\|^2 &\leq \langle \nabla\cL(\theta) - \nabla\cL(\theta'), \theta - \theta' \rangle \label{eq:strong_convexity},
    \end{align}
    valid for all $\theta, \theta'$. Inequality~\eqref{eq:gradient_coercivity} is stated, for example, in~\cite[Theoerem 2.1.12]{Nesterov} (see also~\cite[Lemma 3.11]{bubeck2015convex} and~\eqref{eq:strong_convexity} is just a characterization of strong convexity (see for instance~\cite[Theorem 2.1.9]{Nesterov}).
\end{proof}
Note that Lemma~\ref{lem:contraction} is a simple generalization of the usual contraction property in strongly-convex optimization where $\theta'$ is usually taken as $\theta^\star.$ In that case, one has $\nabla\cL(\theta') = 0.$ An example of such a result is given by~\cite[Theorem 3.10]{bubeck2015convex}. A similar inequality to Lemma~\ref{lem:contraction} was previously obtained in~\cite[Proposition 2]{10.1214/19-AOS1850} where convergence of SGD as Markov chain was studied.

In the sequel we will write gradient samples $G(\theta, \zeta)$ and $\widetilde{G}(\theta, \zeta)$ simply as $G(\theta)$ and $\widetilde{G}(\theta)$ respectively in order to lighten notation. The following lemma will be needed in the proof of Theorem~\ref{thm:geo_ergodicity}.
\begin{lemma}\label{lem:clipping_bias}
    Let Assumptions~\ref{asm:gradient_dist} and~\ref{asm:grad_moments} hold. Let $\theta \in \R^d$ be fixed and let $\widetilde{G}(\theta) \sim \mathcal{D}_{\mathcal{I}}(\theta)$ be a non corrupted gradient sample. Choosing the clipping threshold as $\tau_{\theta} = Q_p(\|\widetilde{G}(\theta)\|)$ for some $p \in (0,1)$ and denoting $\alpha_{\theta} = \min\Big(1, \frac{\tau_{\theta}}{\|G(\theta)\|}\Big)$ the clipping factor and its average $\overline{\alpha}_{\theta} = \E\big[\alpha_{\theta} \vert \theta, G(\theta) \!=\! \widetilde{G}(\theta) \!\sim\! \mathcal{D}_{\mathcal{I}}(\theta)\big]$ we have\textup:
    \begin{equation}\label{eq:clipping_bias1}
        \big\|\E[\alpha_{\theta} \widetilde{G}(\theta)] - \overline{\alpha}_{\theta}\nabla\cL(\theta)\big\| \leq (1 - p)^{1-1/q} \big(A_q \|\theta - \theta^\star\| + B_q\big),
    \end{equation}
    \begin{align}
        \tau_{\theta} &\leq \big\|\nabla\cL(\theta)\big\| + Q_p(\|\varepsilon_{\theta}\|) \nonumber \\
        &\leq \big\|\nabla\cL(\theta)\big\| + (1 - p)^{-1/q} \big(A_q \|\theta - \theta^\star\| + B_q\big).\label{eq:clipping_bias2}
    \end{align}
    If Assumption~\ref{asm:grad_moments} holds with $q\geq 2$ then we also have
    \begin{equation}
        \E \big\|\alpha_{\theta} \widetilde{G}(\theta) - \E[\alpha_{\theta} \widetilde{G}(\theta)]\big\|^2 \leq \big(A_q\|\theta - \theta^\star\| + B_q\big)^{2} + 5(1-p) \tau_\theta^2.
    \label{eq:clipping_bias3}
    \end{equation}
\end{lemma}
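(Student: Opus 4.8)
The plan is to prove Lemma~\ref{lem:clipping_bias} by treating its three assertions separately, exploiting the key structural fact that the clipping factor $\alpha_\theta$ equals $1$ except on the event $\{\|G(\theta)\| > \tau_\theta\}$, which for a non-corrupted sample $G(\theta) = \widetilde G(\theta) \sim \mathcal D_{\mathcal I}(\theta)$ has probability exactly $1-p$ by definition of the $p$-th quantile $\tau_\theta = Q_p(\|\widetilde G(\theta)\|)$. Throughout I would write $\varepsilon_\theta = \widetilde G(\theta) - \nabla\cL(\theta)$ and freely use that $\E[\varepsilon_\theta \mid \theta] = 0$ (Assumption~\ref{asm:gradient_dist}) together with the moment bound $\E[\|\varepsilon_\theta\|^q \mid \theta]^{1/q} \leq A_q\|\theta-\theta^\star\| + B_q$ (Assumption~\ref{asm:grad_moments}).

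For~\eqref{eq:clipping_bias1}, the idea is that $(1-\alpha_\theta)$ measures the deficit from full (unclipped) averaging. I would write $\E[\alpha_\theta \widetilde G(\theta)] - \overline\alpha_\theta \nabla\cL(\theta) = \E[\alpha_\theta(\widetilde G(\theta) - \nabla\cL(\theta))] = \E[\alpha_\theta \varepsilon_\theta]$, and since $\E[\varepsilon_\theta] = 0$ this equals $-\E[(1-\alpha_\theta)\varepsilon_\theta]$. Now $(1-\alpha_\theta)$ is supported on the event $\{\|\widetilde G(\theta)\| > \tau_\theta\}$ of probability $1-p$ and is bounded by $1$, so by Hölder's inequality with exponents $q$ and $q/(q-1)$,
\begin{equation*}
    \big\|\E[(1-\alpha_\theta)\varepsilon_\theta]\big\| \leq \E[(1-\alpha_\theta)\|\varepsilon_\theta\|] \leq \E[\|\varepsilon_\theta\|^q]^{1/q}\,\Proba(\|\widetilde G(\theta)\| > \tau_\theta)^{1-1/q},
\end{equation*}
and the probability factor is $(1-p)^{1-1/q}$, which combined with the moment bound yields exactly~\eqref{eq:clipping_bias1}.

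For~\eqref{eq:clipping_bias2}, the first inequality follows from the triangle inequality applied pointwise inside the quantile: $\|\widetilde G(\theta)\| = \|\nabla\cL(\theta) + \varepsilon_\theta\| \leq \|\nabla\cL(\theta)\| + \|\varepsilon_\theta\|$, so $Q_p(\|\widetilde G(\theta)\|) \leq \|\nabla\cL(\theta)\| + Q_p(\|\varepsilon_\theta\|)$ by monotonicity of quantiles under a deterministic shift. The second inequality then bounds the quantile $Q_p(\|\varepsilon_\theta\|)$ via Markov's inequality: since $\Proba(\|\varepsilon_\theta\| > t) \leq \E[\|\varepsilon_\theta\|^q]/t^q$, setting the right side equal to $1-p$ gives $Q_p(\|\varepsilon_\theta\|) \leq (1-p)^{-1/q}\E[\|\varepsilon_\theta\|^q]^{1/q}$, and applying the moment bound finishes it. For~\eqref{eq:clipping_bias3} I would bound the variance by the raw second moment, $\E\|\alpha_\theta\widetilde G(\theta) - \E[\alpha_\theta\widetilde G(\theta)]\|^2 \leq \E\|\alpha_\theta\widetilde G(\theta)\|^2$, and split the clipped vector as $\alpha_\theta\widetilde G(\theta) = \alpha_\theta\nabla\cL(\theta) + \alpha_\theta\varepsilon_\theta$; the first contributes $\|\nabla\cL(\theta)\|^2$ terms absorbed into $\tau_\theta$, while the clipped noise splits over the unclipped event (where $\alpha_\theta=1$, giving $\E\|\varepsilon_\theta\|^2 \leq (A_q\|\theta-\theta^\star\|+B_q)^2$ for $q\geq 2$ by Jensen) and the clipped event (where $\alpha_\theta\|\widetilde G(\theta)\| = \tau_\theta$ and the event has measure $1-p$, yielding the $(1-p)\tau_\theta^2$ contribution). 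Careful bookkeeping of the cross terms and repeated use of $\|\alpha_\theta\widetilde G(\theta)\|\leq \tau_\theta$ on the clipped event assembles the stated constant $5(1-p)\tau_\theta^2$.

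The main obstacle I anticipate is the third bound~\eqref{eq:clipping_bias3}: getting a clean decomposition requires care because $\alpha_\theta$ and $\widetilde G(\theta)$ are dependent, and one must repeatedly invoke both that $\alpha_\theta \|\widetilde G(\theta)\| \le \tau_\theta$ always holds (so the clipped vector never exceeds the threshold in norm) and that on the unclipped event the noise moment is controlled, while tracking the correct numerical constant through the cross terms. The first two bounds are comparatively direct applications of Hölder and Markov once the quantile's defining probability $1-p$ is correctly identified, so the only genuine subtlety there is justifying the monotonicity of quantiles under the deterministic shift by $\|\nabla\cL(\theta)\|$ in~\eqref{eq:clipping_bias2}.
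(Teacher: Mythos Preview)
Your treatment of~\eqref{eq:clipping_bias1} and~\eqref{eq:clipping_bias2} is correct and matches the paper's argument almost exactly: the paper also reduces~\eqref{eq:clipping_bias1} to $-\E\big[(1-\tau_\theta/\|\widetilde G(\theta)\|)\,\varepsilon_\theta\,\ind{\|\widetilde G(\theta)\|>\tau_\theta}\big]$ (which is your $-\E[(1-\alpha_\theta)\varepsilon_\theta]$) and then applies H\"older with the indicator; and~\eqref{eq:clipping_bias2} is done identically via the triangle inequality inside the quantile followed by Markov.

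The gap is in your sketch of~\eqref{eq:clipping_bias3}. Passing from the variance to the raw second moment $\E\|\alpha_\theta\widetilde G(\theta)\|^2$ is fatal: expanding $\alpha_\theta\widetilde G(\theta)=\alpha_\theta\nabla\cL(\theta)+\alpha_\theta\varepsilon_\theta$ produces a term $\E[\alpha_\theta^2]\,\|\nabla\cL(\theta)\|^2$ whose coefficient is at least $p^2$, not $O(1-p)$, so it \emph{cannot} be ``absorbed into $\tau_\theta$'' with the required factor $5(1-p)$. For $p$ close to $1$ the target bound $5(1-p)\tau_\theta^2$ is arbitrarily small while $\|\nabla\cL(\theta)\|^2$ is not. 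The centering you discarded is exactly what kills this term.

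The paper keeps the centering and instead conditions on the event $\mathcal E=\{\|\widetilde G(\theta)\|\le\tau_\theta\}$: on $\overline{\mathcal E}$ the clipped vector has norm $\tau_\theta$ and contributes $4(1-p)\tau_\theta^2$; on $\mathcal E$ one recenters at the conditional mean $\E[\alpha_\theta\widetilde G(\theta)\mid\mathcal E]$, bounds the resulting bias term by $4p(1-p)^2\tau_\theta^2$, and---this is the key step you are missing---controls the conditional variance via Eve's law (the paper's Lemma~\ref{lem:cond_var_ineq}), which gives $p\,\E\big[\|\widetilde G(\theta)-\E[\widetilde G(\theta)\mid\mathcal E]\|^2\mid\mathcal E\big]\le \E\|\widetilde G(\theta)-\nabla\cL(\theta)\|^2=\E\|\varepsilon_\theta\|^2$. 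This is how the $\|\nabla\cL(\theta)\|^2$ term never appears and the constant $5(1-p)$ is obtained as $4(1-p)+4p(1-p)^2\le 5(1-p)$.
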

\begin{proof}
We condition on the event that the sample $G(\theta)$ is not corrupted i.e. $G(\theta) \!=\! \widetilde{G}(\theta) \!\sim\! \mathcal{D}_{\mathcal{I}}(\theta).$ Noticing that $\ind{\|\widetilde{G}(\theta)\|\leq \tau_{\theta}} = 1 - \ind{\|\widetilde{G}(\theta)\|> \tau_{\theta}}$ and using the equality $\E[\widetilde{G}(\theta)] = \nabla \cL(\theta),$ we find :
\begin{align*}
    \E[&\alpha_{\theta} \widetilde{G}(\theta)] - \overline{\alpha}_{\theta}\nabla\cL(\theta ) = \E\big[ (\alpha_{\theta} - \overline{\alpha}_{\theta} )\big( \widetilde{G}(\theta) - \nabla\cL(\theta )\big)\big]\\
    &= \E\big[(1 - \overline{\alpha}_{\theta})\big( \widetilde{G}(\theta) - \nabla\cL(\theta )\big)\ind{\|\widetilde{G}(\theta)\|\leq \tau_{\theta}}\big] + \E\big[(\alpha_{\theta} - \overline{\alpha}_{\theta})\big( \widetilde{G}(\theta) - \nabla\cL(\theta )\big)\ind{\|\widetilde{G}(\theta)\|> \tau_{\theta}}\big]\\
    &= (\overline{\alpha}_{\theta} - 1)\E\big[\big( \widetilde{G}(\theta) - \nabla\cL(\theta )\big)\ind{\|\widetilde{G}(\theta)\|> \tau_{\theta}}\big] - \overline{\alpha}_{\theta}\E\big[\big( \widetilde{G}(\theta) - \nabla\cL(\theta)\big)\ind{\|\widetilde{G}(\theta)\|> \tau_{\theta}}\big] \\
    &\quad+ \E\big[\big(\tau_{\theta}/\|\widetilde{G}(\theta)\|\big)\big( \widetilde{G}(\theta) - \nabla\cL(\theta )\big)\ind{\|\widetilde{G}(\theta)\|> \tau_{\theta}}\big]\\
    &= -\E\big[\big(1 - \tau_{\theta}/\|\widetilde{G}(\theta)\|\big)\big( \widetilde{G}(\theta) - \nabla\cL(\theta )\big)\ind{\|\widetilde{G}(\theta)\|> \tau_{\theta}}\big].
\end{align*}
Using our choice of $\tau_{\theta}$ and Hölder's inequality, we find :
\begin{align*}
    \big\|\E[\alpha_{\theta} \widetilde{G}(\theta)] - \overline{\alpha}_{\theta}\nabla\cL(\theta )\big\| &\leq \E\big[\ind{\|\widetilde{G}(\theta)\|> \tau_{\theta}}\big|1 - \tau_{\theta}/\|\widetilde{G}(\theta)\|\big|\big\| \widetilde{G}(\theta) - \nabla\cL(\theta )\big\|\big]\\
    &\leq \E\big[\ind{\|\widetilde{G}(\theta)\|> \tau_{\theta}}\big\| \widetilde{G}(\theta) - \nabla\cL(\theta )\big\|\big]\\
    &\leq (1-p)^{1-1/q} \E\big[\| \widetilde{G}(\theta) - \nabla\cL(\theta )\big\|^q\big]^{1/q}\\
    &\leq (1 - p)^{1-1/q} \big(A_q \|\theta - \theta^\star\| + B_q\big),
\end{align*}
where the second step corresponds to the inequality $\big|1 - \tau_{\theta}/\|\widetilde{G}(\theta)\|\big| \leq 1$ under the event $\big\{\|\widetilde{G}(\theta)\|> \tau_{\theta}\big\}.$ Note also that we used Assumption~\ref{asm:grad_moments} since $\theta$ is fixed. Inequality~\eqref{eq:clipping_bias1} is now proven. To show~\eqref{eq:clipping_bias2}, we first write the inequality:
\begin{equation*}
    \tau_{\theta} = Q_p(\|\widetilde{G}(\theta)\|) = Q_p(\|\nabla\cL(\theta) + \varepsilon_{\theta}\|) \leq \|\nabla\cL(\theta)\| + Q_p(\|\varepsilon_{\theta}\|),
\end{equation*}
which holds since $\|\widetilde{G}(\theta)\|$ is a positive random variable. Further, using Assumption~\ref{asm:grad_moments}, we have:
\begin{align*}
    1-p = \Proba\big(\|\varepsilon_{\theta}\| > Q_p(\|\varepsilon_{\theta}\|)\big) \leq \frac{\E[\|\varepsilon_{\theta}\|^q]}{Q_p(\|\varepsilon_{\theta}\|)^q} \leq \Big(\frac{A_q\|\theta-\theta^\star\| + B_q}{Q_p(\|\varepsilon_{\theta}\|)}\Big)^q.
\end{align*}
It only remains to take the $q$-th root and plug the obtained bound on $Q_p(\|\varepsilon_{\theta}\|)$ back above to obtain~\eqref{eq:clipping_bias2}.

To show~\eqref{eq:clipping_bias3}, we define the event $\mathcal{E} = \{\|\widetilde{G}(\theta)\| \leq \tau_\theta\}$ and denote $\overline{\mathcal{E}}$ its complement such that $\Proba(\mathcal{E}) = p = 1 - \Proba(\overline{\mathcal{E}}).$ We write
\begin{align*}
    \E \big\|\alpha_{\theta} \widetilde{G}(\theta) - \E[\alpha_{\theta} \widetilde{G}(\theta)]\big\|^2 &= p \E \big[\big\|\alpha_{\theta} \widetilde{G}(\theta) - \E[\alpha_{\theta} \widetilde{G}(\theta)]\big\|^2\vert \mathcal{E}\big] \\
    &\quad + (1-p) \E \big[\big\|\alpha_{\theta} \widetilde{G}(\theta) - \E[\alpha_{\theta} \widetilde{G}(\theta)]\big\|^2\vert \overline{\mathcal{E}}\big] \\
    &\leq p \E \big[\big\|\alpha_{\theta} \widetilde{G}(\theta) - \E[\alpha_{\theta} \widetilde{G}(\theta)]\big\|^2\vert \mathcal{E}\big] + 4(1-p) \tau_\theta^2 \\
    &= p \E \big[\big\|\alpha_{\theta} \widetilde{G}(\theta) - \E[\alpha_{\theta} \widetilde{G}(\theta)\vert \mathcal{E}]\big\|^2\vert \mathcal{E}\big] \\
    &\quad + p\big\|\E[\alpha_{\theta} \widetilde{G}(\theta)\vert \mathcal{E}] - \E[\alpha_{\theta} \widetilde{G}(\theta)]\big\|^2 + 4(1-p) \tau_\theta^2\\
    &= p \E \big[\big\|\alpha_{\theta} \widetilde{G}(\theta) - \E[\alpha_{\theta} \widetilde{G}(\theta)\vert \mathcal{E}]\big\|^2\vert \mathcal{E}\big] \\
    &\quad + p\big\|(1-p)\E[\alpha_{\theta} \widetilde{G}(\theta)\vert \mathcal{E}] - (1-p)\E[\alpha_{\theta} \widetilde{G}(\theta)\vert \overline{\mathcal{E}}]\big\|^2 + 4(1-p) \tau_\theta^2 \\
    &\leq p \E \big[\big\|\alpha_{\theta} \widetilde{G}(\theta) - \E[\alpha_{\theta} \widetilde{G}(\theta)\vert \mathcal{E}]\big\|^2\vert \mathcal{E}\big] + 4p(1-p)^2\tau_\theta^2 + 4(1-p) \tau_\theta^2 \\
    &\leq p \E \big[\big\|\alpha_{\theta} \widetilde{G}(\theta) - \E[\alpha_{\theta} \widetilde{G}(\theta)\vert \mathcal{E}]\big\|^2\vert \mathcal{E}\big] + 5(1-p) \tau_\theta^2 
\end{align*}
where we used the identity $\E[\alpha_{\theta} \widetilde{G}(\theta)] = p\E[\alpha_{\theta} \widetilde{G}(\theta)\vert \mathcal{E}] + (1-p)\E[\alpha_{\theta} \widetilde{G}(\theta)\vert \overline{\mathcal{E}}]$ and the inequalities $\|\alpha_{\theta} \widetilde{G}(\theta)\| \leq \tau_{\theta}$ and $p(1-p) \leq 1/4$ which hold in all cases. In addition, we have
\begin{align*}
    p \E \big[\big\|\alpha_{\theta} \widetilde{G}(\theta) - \E[\alpha_{\theta} \widetilde{G}(\theta)\vert \mathcal{E}]\big\|^2\vert \mathcal{E}\big] &= p \E \big[\big\| \widetilde{G}(\theta) - \E[\widetilde{G}(\theta)\vert \mathcal{E}]\big\|^2\vert \mathcal{E}\big] \leq \E \big[\big\| \widetilde{G}(\theta) - \E[\widetilde{G}(\theta)]\big\|^2\big] \\
    &\leq \E \big[\big\| \widetilde{G}(\theta) - \nabla \cL(\theta) \big\|^q\big]^{2/q} \leq \big(A_q\|\theta - \theta^\star\| + B_q\big)^{2}.
\end{align*}
The first inequality is obtained by applying Lemma~\ref{lem:cond_var_ineq} to each coordinate of $\widetilde{G}(\theta)$ while conditioning on $\theta.$ The second one uses Jensen's inequality and the third results from Assumption~\ref{asm:grad_moments}.
\end{proof}
The statement of Lemma~\ref{lem:clipping_bias} appears to be novel and is enabled by the specific choice of the gradient norm quantile as clipping threshold.

\begin{lemma}
\label{lem:cond_var_ineq}
    Let $Y$ be a real random variable and $\mathcal E$ an event, then we have the inequality
    \begin{equation*}
        \Proba(\mathcal E) \E [(Y - \E[Y\vert \mathcal{E}])^2\vert \mathcal{E}] \leq \E(Y - \E[Y])^2.
    \end{equation*}    
\end{lemma}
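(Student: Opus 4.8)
The plan is to prove the conditional variance inequality
\[
    \Proba(\mathcal E)\, \E\big[(Y - \E[Y\vert \mathcal E])^2 \,\vert\, \mathcal E\big] \leq \E(Y - \E[Y])^2
\]
by decomposing the total variance of $Y$ according to the partition $\{\mathcal E, \overline{\mathcal E}\}$ and discarding the nonnegative contribution coming from $\overline{\mathcal E}$. Write $p = \Proba(\mathcal E)$, $m = \E[Y]$, $m_1 = \E[Y\vert \mathcal E]$ and $m_2 = \E[Y\vert \overline{\mathcal E}]$, so that $m = p m_1 + (1-p) m_2$.

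First I would expand the right-hand side by conditioning on the event $\mathcal E$ and its complement. By the law of total expectation applied to the nonnegative variable $(Y-m)^2$,
\[
    \E(Y-m)^2 = p\,\E\big[(Y-m)^2 \,\vert\, \mathcal E\big] + (1-p)\,\E\big[(Y-m)^2 \,\vert\, \overline{\mathcal E}\big].
\]
Both terms on the right are nonnegative, so in particular $\E(Y-m)^2 \geq p\,\E\big[(Y-m)^2 \,\vert\, \mathcal E\big]$.

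Next I would handle the first term using the standard bias--variance identity for the conditional second moment about an arbitrary point: for any constant $c$,
\[
    \E\big[(Y-c)^2 \,\vert\, \mathcal E\big] = \E\big[(Y-m_1)^2 \,\vert\, \mathcal E\big] + (m_1 - c)^2,
\]
since $m_1 = \E[Y\vert\mathcal E]$ is the conditional mean, which minimizes the conditional mean square error. Taking $c = m$ gives $\E\big[(Y-m)^2 \,\vert\, \mathcal E\big] = \E\big[(Y-m_1)^2 \,\vert\, \mathcal E\big] + (m_1 - m)^2 \geq \E\big[(Y-m_1)^2 \,\vert\, \mathcal E\big]$. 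Combining this with the previous display yields
\[
    \E(Y-m)^2 \geq p\,\E\big[(Y-m)^2 \,\vert\, \mathcal E\big] \geq p\,\E\big[(Y-m_1)^2 \,\vert\, \mathcal E\big],
\]
which is exactly the claim. There is no real obstacle here: the result is an elementary consequence of the law of total variance, and the only thing to be careful about is that both the term indexed by $\overline{\mathcal E}$ and the squared-bias term $(m_1-m)^2$ are dropped in the correct direction of the inequality. I would also note the edge case $\Proba(\mathcal E) = 0$, for which the inequality holds trivially since the left side is zero (or the conditional expectations are undefined and the statement is vacuous).
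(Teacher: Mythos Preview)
Your proof is correct and follows essentially the same approach as the paper: both arguments amount to the law of total variance applied with the indicator $\ind{\mathcal E}$, then discarding nonnegative terms. The paper invokes Eve's law explicitly (writing $\Var(Y)=\E[\Var(Y\vert X)]+\Var(\E[Y\vert X])$ with $X=\ind{\mathcal E}$) and drops the between-group variance, whereas you unpack the same decomposition by hand via the bias--variance identity; the content is identical.
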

\begin{proof}
    Define the conditional variance of a real random variable $Y$ w.r.t. another variable $Y$ as $\Var(Y\vert X) = \E [ (Y - \E[Y\vert X])^2 \vert X].$

    By Eve's law(see for instance~\cite[Theorem 9.5.4]{blitzstein2019introduction}) we have the identity
    \begin{equation*}
        \Var(Y) = \E[\Var(Y\vert X)] + \Var(\E[Y\vert X]),
    \end{equation*}
    which entails the inequality $\Var(Y) \geq \E[\Var(Y\vert X)].$ Applying the latter with $X = \ind{\mathcal{E}}$ yields
    \begin{align*}
        \E(Y - \E[Y])^2 = \Var(Y) \geq \Proba(\mathcal{E})\E[(Y - \E[Y\vert \mathcal E])^2\vert \mathcal E] + (1-\Proba(\mathcal{E}))\E[(Y - \E[Y\vert \overline{\mathcal E}])^2\vert \overline{\mathcal E}],
    \end{align*}
    which implies the result.
\end{proof}

We show the geometric ergodicity of the SGD Markov chain $(\theta_t)_{t\geq 0}$ by relying on~\cite[Theorem 15.0.1]{meyn2012markov}. We will show that the following function :
\begin{equation*}
    V(\theta) := 1 + \|\theta - \theta^\star\|^2,
\end{equation*}
satisfies a \emph{geometric drift} property. We define the action of the transition kernel $P_{\beta, p}$ on real integrable functions $f$ over $\R^d$ through:
\begin{align*}
    P_{\beta, p}f(\theta) = \int f(\theta')P_{\beta, p}(\theta, d\theta') = \E \big[f(\theta - \alpha_{\theta} \beta G(\theta))\big].
\end{align*}
We also define the variation operator:
\begin{equation*}
    \Delta f(\theta) := P_{\beta, p} f(\theta) - f(\theta).
\end{equation*}
In many of our proofs, we will make use of the following adjustable bound.
\begin{fact}\label{fact:peter_paul}
For any real numbers $a,b$ and positive $\epsilon,$ we have the inequality
\begin{equation*}
    2ab \leq a^2\epsilon + b^2/\epsilon.
\end{equation*}
\end{fact}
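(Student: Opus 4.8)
The inequality $2ab \le a^2\epsilon + b^2/\epsilon$ is the weighted form of the arithmetic--geometric mean (equivalently Young's) inequality, and the plan is to derive it from the single observation that the square of any real number is nonnegative. Since $\epsilon > 0$, the quantities $a\sqrt{\epsilon}$ and $b/\sqrt{\epsilon}$ are well defined, so I would first set $u := a\sqrt{\epsilon}$ and $v := b/\sqrt{\epsilon}$ and record the trivial bound $(u - v)^2 \ge 0$.

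Next I would expand the square to get $u^2 - 2uv + v^2 \ge 0$ and substitute back the definitions. Here $u^2 = a^2\epsilon$, $v^2 = b^2/\epsilon$, and crucially $uv = ab$ since the factors $\sqrt{\epsilon}$ cancel, so the inequality becomes $a^2\epsilon - 2ab + b^2/\epsilon \ge 0$. Rearranging the three terms then yields exactly $2ab \le a^2\epsilon + b^2/\epsilon$, which is the assertion of the Fact.

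There is essentially no obstacle to overcome here; the only points worth flagging are bookkeeping ones. The positivity $\epsilon > 0$ is precisely what makes $\sqrt{\epsilon}$ real and the division by $\epsilon$ legitimate, and the statement deliberately keeps $2ab$ (rather than $2|ab|$) on the left, so it holds for all real $a,b$, including those of opposite sign, with equality exactly when $a\epsilon = b$. Should one prefer to cite a named inequality instead of completing the square by hand, an equivalent route is to apply $xy \le x^2/2 + y^2/2$ with $x = a\sqrt{\epsilon}$ and $y = b/\sqrt{\epsilon}$ and multiply through by $2$; either way the proof is a one-line consequence of a nonnegative square.
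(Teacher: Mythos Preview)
Your proof is correct. The paper actually states this Fact without proof, treating it as a standard elementary inequality; your argument via $(a\sqrt{\epsilon}-b/\sqrt{\epsilon})^2\ge 0$ is the canonical one-line justification.
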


\subsection{Preliminary properties for Markov chains}

In order to prove Theorem~\ref{thm:geo_ergodicity}, we set up the formalism we need from~\citep{meyn2012markov} through the following definitions.
\begin{definition}[Small set]
    A set $\mathcal{C} \in \cB(\R^d)$ is called a \textit{small set} if there exists an $m > 0,$ and a non-trivial measure $\nu$ on $\cB(\R^d)$ such that for all $\theta\in \mathcal{C}$ and $B \in \cB(\R^d),$
    \begin{equation*}
        P_{\beta, p}^m(\theta, B) \geq \nu(B).
    \end{equation*}
    When this is the case, we say that $\mathcal{C}$ is $(m, \nu)$-small.
\end{definition}
Minorization properties such as the one above are useful for proving the convergence of Markov chains. We define an analogous one via the notion of sampled chain.

\begin{definition}[Sampled chain]
    Let $a$ be a probability measure on $\mathbb{Z}_+$ i.e. such that $a(n) \geq 0$ for $n\geq 0$ and $\sum_{n=0}^{\infty}a(n) = 1$. Then the transition kernel for the sampled Markov chain w.r.t.~the distribution $a$ is 
    \begin{equation*}
        K_a(\theta, A) = \sum_{n=0}^{\infty}P_{\beta, p}^n(\theta, A)a(n) \quad \text{for}\quad x\in \R^d, A\in \cB(\R^d).
    \end{equation*}
\end{definition}
We define the notion of a \emph{petite set} for a sampled Markov chain.

\begin{definition}[Petite set]
    Let $a$ be a probability measure on $\mathbb{Z}_+$ defining a sampled chain. A set $\mathcal{C} \in \cB(\R^d)$ is called a \textit{$(a, \nu)$-petite} if 
    \begin{equation*}
        K_a(\theta, B) \geq \nu(B),
    \end{equation*}
    for all $\theta\in \mathcal{C}$ and $B\in \cB(\R^d)$ where $\nu$ is a non-trivial measure on $\cB(\R^d)$.
\end{definition}
Note that an $(m, \nu)$-small set is also $(a, \nu)$-petite with $a = \delta_m.$ Finally, we define the norm of a measure w.r.t.~a potential function. 
\begin{definition}[$f$-norm]
    Let $f : \R^d \to \R$ be a function such that $f \geq 1$ and let $\nu$ be a signed measure on $\R^d.$ We define the $f$-norm of $\nu$ as
    \begin{equation*}
        \|\nu\|_f = \sup_{g:|g|\leq f}|\nu(g)| = \sup_{g:|g|\leq f}\Big|\int g(\theta) \nu(d\theta)\Big|.
    \end{equation*}
    In particular, we have $\|\nu\|_f \geq \|\nu\|_{\tv}.$
 \end{definition}

\subsection{Proof of Theorem~\ref{thm:geo_ergodicity}}\label{sec:proof_thm_geo_ergodicity}

We will assume in this proof that $q\geq 2.$ The case $q\in (1, 2)$ will be treated in Appendix~\ref{sec:corrupted_variance_q12}.

First, we define $\underline{\tau} = \inf_{\theta} \tau_{\theta}.$ Note that Assumption~\ref{asm:gradient_dist} excludes the existence of $\theta$ such that $\widetilde{G}(\theta) = 0$ almost surely, therefore, we have $\underline{\tau} > 0.$

Thanks to Assumption~\ref{asm:gradient_dist} and conditioning on $\theta_t = \theta \in \R^d$ for $t\geq 0$, the distribution of $\theta_{t+1}$ has a strictly positive density at least on a ball of radius $\beta \underline{\tau}$ around $\theta_t.$ This implies that the chain is aperiodic since $P_{\beta, p}(\theta_t, W_{\theta_t}) > 0$ for any neighborhood $W_{\theta_t}$ contained in the previous ball. Moreover, by induction, the distribution of $\theta_{t+m}$ has positive density at least on a ball of radius $m\beta \underline{\tau}$ around $\theta_t.$ Thus for $m$ high enough we have $P_{\beta, p}^m(\theta_t, A) > 0$ for any set $A$ with non zero Lebesgue measure. It follows that the Markov chain is irreducible w.r.t. Lebesgue's measure and is thus $\psi$-irreducible (see~\cite[Chapter 4]{meyn2012markov}).


For fixed $\theta,$ condition~\eqref{eq:step_condition} implies $\beta <\frac{2}{\mu+L}$ and using Lemma~\ref{lem:contraction} and denoting $\alpha_{\theta}$ and $\overline{\alpha}_{\theta}$ as in Lemma~\ref{lem:clipping_bias}, we find:
\begin{align*}
    P_{\beta, p}\|\theta - \theta^\star\|^2 &= \E \|\theta - \beta \alpha_{\theta}G(\theta) - \theta^\star\|^2 \\
    &\leq \eta \E\big(\|\theta - \theta^\star\| + \beta \tau_{\theta}\big)^2 + (1-\eta)\E\big[ \|\theta - \alpha_{\theta}\beta \widetilde{G}(\theta) - \theta^\star\|^2\big] \\
    &\leq \eta \E\big(\|\theta - \theta^\star\| + \beta \tau_{\theta}\big)^2 + (1-\eta)\E\big[ \|\theta - \overline{\alpha}_{\theta}\beta \nabla\cL(\theta) - \theta^\star\|^2 - \\
    &\quad 2\beta \langle \theta - \overline{\alpha}_{\theta}\beta \nabla\cL(\theta) - \theta^\star, \alpha_{\theta} \widetilde{G}(\theta) - \overline{\alpha}_{\theta}\nabla\cL(\theta) \rangle + \beta^2 \|\alpha_{\theta} \widetilde{G}(\theta) - \overline{\alpha}_{\theta}\nabla\cL(\theta)\|^2 \big] \\
    &\leq \eta \E\big(\|\theta - \theta^\star\| + \beta \tau_{\theta}\big)^2 + (1-\eta)\E\big[ \big(\|\theta - \overline{\alpha}_{\theta}\beta \nabla\cL(\theta) - \theta^\star\| \\
    &\quad + \beta\big\|\E[\alpha_{\theta} \widetilde{G}(\theta)] - \overline{\alpha}_{\theta}\nabla\cL(\theta)\big\|\big)^2 + \beta^2\big(A_q\|\theta - \theta^\star\| + B_q\big)^{2} + 5\beta^2 (1-p) \tau_\theta^2\big],
\end{align*}
where the last step uses that $\E\big\|\alpha_{\theta} \widetilde{G}(\theta) - \overline{\alpha}_{\theta}\nabla\cL(\theta)\big\|^2 = \E \big\|\alpha_{\theta} \widetilde{G}(\theta) - \E[\alpha_{\theta} \widetilde{G}(\theta)]\big\|^2 + \big\|\E[\alpha_{\theta} \widetilde{G}(\theta)] - \overline{\alpha}_{\theta}\nabla\cL(\theta)\big\|^2$ and Lemma~\ref{lem:clipping_bias}. 

Using Lemmas~\ref{lem:contraction} and~\ref{lem:clipping_bias} and Assumption~\ref{asm:grad_moments} and grouping the terms by powers of $\|\theta - \theta^\star\|,$ we arrive at
\begin{align}
    P_{\beta, p}\|\theta - \theta^\star\|^2 &\leq \big(\eta + (1-\eta)(1-\overline{\alpha}_{\theta}\beta \mu)^2\big)\E\|\theta-\theta^\star\|^2 +\nonumber\\
    &\quad 2\beta\E\big[\|\theta - \theta^\star\|\big(\eta \tau_{\theta} + (1-\eta)(1-\overline{\alpha}_{\theta}\beta\mu)\big\|\E[\alpha_{\theta} \widetilde{G}(\theta)] - \overline{\alpha}_{\theta}\nabla \cL(\theta)\big\|\big)\big] +\nonumber\\
    &\quad \beta^2\big(\eta \tau_{\theta}^2 + (1-\eta)\big(\big\|\E[\alpha_{\theta} \widetilde{G}(\theta)] - \overline{\alpha}_{\theta}\nabla \cL(\theta)\big\|^2 + (A_q\|\theta - \theta^\star\| + B_q)^{2} + 5(1-p) \tau_\theta^2 \big)\big)\nonumber\\
    &\leq \E\big[\mathfrak{A}\|\theta-\theta^\star\|^2 + 2\beta\mathfrak{B}\|\theta-\theta^\star\|\big] + \beta^2\mathfrak{C}\nonumber\\
    &\leq (\mathfrak{A} + \beta \kappa)\E\big[\|\theta-\theta^\star\|^2\big] + \frac{\beta\mathfrak{B}^2}{\kappa} + \beta^2\mathfrak{C},\label{eq:thm1_chkpt}
\end{align}
where we used Fact~\ref{fact:peter_paul} and defined the quantities $\mathfrak A, \mathfrak B, \mathfrak C$ which may be bounded as follows
\begin{align*}
    \mathfrak A &= \eta + (1-\eta)(1-\overline{\alpha}_{\theta}\beta\mu)^2 + 2\beta (\eta(L + A_q(1-p)^{-1/q}) + (1-\eta)(1- \overline{\alpha}_{\theta}\beta\mu)(1-p)^{-1/q}A_q)\\
    &\quad+ \beta^2((\eta + 5(1-p))(L + (1-p)^{-1/q}A_q)^2 + 4A_q^2) \\
    &\leq 1 - 2\beta \big((1-\eta)\overline{\alpha}_{\theta}\mu - \eta (L+A_q(1-p)^{-1/q}) - (1-\eta)(1 - \overline{\alpha}_{\theta}\beta\mu)(1-p)^{1-1/q}A_q\big) +\\
    &\quad\beta^2 \big((1-\eta)(\overline{\alpha}_{\theta}\mu)^2  + 2(\eta + 5(1-p))\big(L + (1-p)^{-\frac{1}{q}}A_q\big)^2 + 4A_q^2\big) \\
    &\leq 1 - 2\beta \big((1-\eta)p\mu - \eta L -(1-p)^{-\frac{1}{q}}A_q\big(1 -p(1-\eta)\big)\big) +\beta^2 \big(\mu^2 + 24\eta L^2 + 28A_q^2\big)\\
    &= 1 - 2\beta \kappa +\beta^2 \big(\mu^2 + 24\eta L^2 + 28A_q^2\big), \\
    \mathfrak B &= \big( \eta(1-p)^{-1/q} + (1-\eta)(1-\overline{\alpha}_{\theta}\beta\mu)(1-p)^{1-1/q} \big)B_q \\
    &\leq (1-p)^{-1/q}B_q(\eta + (1-p)) \\
    \mathfrak C &= \big( 4 + 2(\eta + 5(1-p))(1-p)^{-2/q}\big)B_q^2 
\end{align*}
The above bounds use the simple properties $p\leq \overline{\alpha}_{\theta} \leq 1,$ $0\leq \eta \leq 1,$ $1-\overline{\alpha}_{\theta}\beta\mu \leq 1$ and $(a+b)^2 \leq 2a^2 + 2b^2$ for all $a, b.$

Thanks to our choice of $\beta,$ we get that $\mathfrak{A} + \beta \kappa < 1.$ It is now easy to check that $V(\theta) = 1 + \|\theta - \theta^\star\|^2$ satisfies the contraction
\begin{equation*}
    P_{\beta, p}V(\theta) \leq \underbrace{(\mathfrak{A} + \beta \kappa)}_{=:\widetilde{\lambda} < 1} V(\theta) + \underbrace{1 - (\mathfrak{A} +\beta \kappa) + \frac{\beta\mathfrak{B}^2}{\kappa} + \mathfrak{C}}_{=:\widetilde{b}}.
\end{equation*}
We now define the set $\mathcal{C} = \big\{\theta \in \R^d \:, \: V(\theta) \leq 2\widetilde{b}/(1 - \widetilde{\lambda})\big\}$ for which we have:
\begin{equation}\label{eq:geo_drift}
    \Delta V(\theta) \leq -\frac{1 - \widetilde{\lambda}}{2}V(\theta) + \widetilde{b}\ind{\theta \in \mathcal{C}}.
\end{equation}
For such $\mathcal{C},$ let $\Delta_{\mathcal{C}} = \diam(\mathcal{C})$ be its diameter and set $m_{\mathcal{C}} = \Big\lceil \frac{\Delta_{\mathcal{C}}}{\beta \underline{\tau}}\Big\rceil$. As previously mentioned in the beginning of the proof, conditioning on $\theta_t = \theta,$ the distribution of $\theta_{t+m}$ admits a positive density at least over a ball of radius $m\beta \underline{\tau}$ around $\theta$ i.e. there exists $h_{\theta}^{+ m}(\omega) \geq 0$ satisfying 
\begin{equation*}
    h_{\theta}^{+ m}(\omega) > 0 \quad \text{for} \quad \|\theta - \omega\| < m\beta \underline{\tau}\quad \text{and} \quad P_{\beta, p}^m(\theta, A) \geq \int_A h_{\theta}^{+ m}(\omega)d\omega \quad \text{for} \quad A \in \mathcal{B}(\R^d).
\end{equation*}
We then let $\underline{h}_{\mathcal{C}}(\omega) = \inf_{\theta \in \mathcal{C}}h_{\theta}^{+m_{\mathcal{C}}}(\omega)$ and define the measure $\nu_{\mathcal{C}}$ by:
\begin{equation*}
    \nu_{\mathcal{C}}(A) = \int_{A \cap \mathcal{C}} \underline{h}_{\mathcal{C}}(\theta) d\theta .
\end{equation*}
The above measure is non trivial since our choice of $m_{\mathcal{C}}$ ensures that $\underline{h}_{\mathcal{C}}$ defines a non zero density at least on $\mathcal{C}$. It follows that for all $\theta_0 \in \mathcal{C},$ we have the following minorization property:
\begin{equation}\label{eq:smallness}
    P_{\beta, p}^{m_{\mathcal{C}}}(\theta_0, A) \geq \nu_{\mathcal{C}}(A)  \quad \text{ for all }\quad A\in \mathcal{B}(\R^d),
\end{equation}
which implies that the set $\mathcal{C}$ is $(m_{\mathcal{C}}, \nu_{\mathcal{C}})$-\emph{small} and also $(\delta_{m_{\mathcal{C}}}, \nu_{\mathcal{C}})$-\emph{petite} as a result.
We have previously shown that the Markov chain $(\theta_t)$ is irreducible and aperiodic. Moreover, we have exhibited a petite set $\mathcal{C}$ (via~\eqref{eq:smallness}) towards which the geometric drift property~\eqref{eq:geo_drift} holds. Thus, condition (iii) of~\cite[Theorem 15.0.1]{meyn2012markov} is fulfilled. 

By the latter result, it follows that the chain $(\theta_t)$ admits a unique invariant probability measure $\pi_{\beta, p}$ and there exist $r > 1$ and $M < \infty$ such that:
\begin{equation}\label{eq:mt_bound}
    \sum_{t\geq 0} r^t \big\|P_{\beta, p}(\theta_0, \cdot) - \pi_{\beta, p}\big\|_V \leq M V(\theta_0).
\end{equation}
Taking $\rho = r^{-1}$ and using that $\|\nu\|_V \geq \|\nu\|_{\tv}$ for any signed measure $\nu$ concludes the proof.

\subsection{Proof of Proposition~\ref{prop:corrupted_variance}}\label{sec:proof_prop_corrupted_variance}

We consider the case $q \geq 2.$ Since the distribution $\pi_{\beta, p}$ is invariant by the transition kernel $P_{\beta, p},$ we can deduce that for $\theta \sim \pi_{\beta, p},$ we have 
\begin{equation*}
    \E\|\theta - \theta^\star\|^2 = \E \|\theta - \beta \alpha_{\theta}G(\theta) - \theta^\star\|^2.
\end{equation*}
From here, we can follow similar computations to those leading to~\eqref{eq:thm1_chkpt} in the proof of Theorem~\ref{thm:geo_ergodicity}. By setting $p = 1-\eta$ and using that $q\geq 2,$ we additionally have the bounds $\mathfrak{B} \leq 3\eta^{1-1/q}B_q $ and $\mathfrak{C} \leq 16B_q^2.$

Hence, we have that:
\begin{align}
    \E\|\theta - \theta^\star\|^2 &\leq (\mathfrak{A} + \beta\kappa)\E\|\theta - \theta^\star\|^2 + \frac{\beta\mathfrak{B}^2}{\kappa} + \beta^2\mathfrak{C} \nonumber\\
    \implies \E\|\theta - \theta^\star\|^2 &\leq \frac{\beta \mathfrak{B}^2}{\kappa(1 - \mathfrak{A} - \beta \kappa)} + \frac{\beta^2 \mathfrak{C}}{1 - \mathfrak{A} - \beta \kappa}.
    \label{eq:invar_for_variance}
\end{align}
Note that~\eqref{eq:step_condition} entails
\begin{equation}
    1 - \mathfrak{A} - \beta \kappa \geq \beta\kappa - \beta^2 (\mu^2 + 24\eta L^2 + 28A_q^2)\geq 3\beta\kappa/4.
\end{equation}
Plugging this into~\eqref{eq:invar_for_variance} and using~\eqref{eq:step_condition2}, we find 
\begin{align*}
    \E\|\theta - \theta^\star\|^2 \leq \frac{12\eta^{2-2/q}B_q^2}{\kappa^2} + \frac{64\beta B_q^2}{3\kappa} \leq \frac{34\eta^{2-2/q}B_q^2}{\kappa^2}
\end{align*}
which implies the result. 
\subsection{The case \texorpdfstring{$q\in (1,2)$}{}}\label{sec:corrupted_variance_q12}
Similar results hold for the case $q\in(1,2)$ but require a different proof given below.
\begin{proposition}
    \label{prop:corrupted_variance_q12}
    Let Assumptions~\ref{asm:lipsmooth}-\ref{asm:grad_moments} hold with $q\in (1,2)$ and let QC-SGD be run with quantile $p \in [\eta, 1-\eta].$ Assume that
    \begin{equation}
    \label{eq:prop_q12_kappa_condition}
        \kappa' := (1-\eta)p\mu -q\eta L - q(\eta (1-p)^{-1/q} + (1-p)^{1-1/q})A_q > 0,
    \end{equation} 
    and take a step-size satisfying
    \begin{equation}
    \label{eq:step_condition_q12}
        \beta \leq \Big(\frac{\kappa'}{86(L + A_q)^q}\Big)^{\frac{1}{q-1}}.
    \end{equation} 
    then the generated Markov chain $(\theta_t)_t$ converges geometrically to a unique invariant measure $\pi_{\beta, 1-\eta}$ as in Theorem~\ref{thm:geo_ergodicity}. In addition, for $p = 1-\eta, \beta \leq \eta/\kappa'$ and $\theta \sim \pi_{\beta, 1-\eta},$ we have
    \begin{equation*}
        \E\|\theta - \theta^\star\|^q \leq 128\Big(\frac{\eta^{1-1/q}B_q}{\kappa'}\Big)^q.
    \end{equation*}
\end{proposition}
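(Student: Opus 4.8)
The plan is to follow the proof of Proposition~\ref{prop:corrupted_variance} but to control the $q$-th moment $\E\|\theta-\theta^\star\|^q$ directly, since for $q\in(1,2)$ the gradient noise has no finite variance and the ``expand the square'' computation is unavailable. The geometric ergodicity assertion comes essentially for free: the drift-and-minorization argument in the proof of Theorem~\ref{thm:geo_ergodicity} applies verbatim with the Lyapunov function $V(\theta)=1+\|\theta-\theta^\star\|^q$ in place of $1+\|\theta-\theta^\star\|^2$, because the irreducibility, aperiodicity and small-set construction there rely only on Assumption~\ref{asm:gradient_dist} and not on the exponent. So the real content is the moment bound for $\theta\sim\pi_{\beta,1-\eta}$.

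Two ingredients replace the Hilbertian identities of the $q\ge2$ proof. The first is a $q$-smoothness inequality: for $q\in(1,2)$ there is $c_q$ with $\|u+v\|^q\le\|u\|^q+q\|u\|^{q-2}\langle u,v\rangle+c_q\|v\|^q$ for all $u,v\in\R^d$. The second is the $q$-th-moment analogue of~\eqref{eq:clipping_bias3}, namely $\E\|\alpha_\theta\widetilde G(\theta)-\E[\alpha_\theta\widetilde G(\theta)]\|^q\lesssim(A_q\|\theta-\theta^\star\|+B_q)^q+(1-p)\tau_\theta^q$, where the factor $1-p=\eta$ in front of $\tau_\theta^q$ is essential. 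With these in hand I would use invariance to write $\E\|\theta-\theta^\star\|^q=\E[P_{\beta,1-\eta}\|\theta-\theta^\star\|^q]$ and split on the Bernoulli corruption variable. On the uncorrupted event I apply the smoothness inequality with $u=\theta-\theta^\star-\beta\overline\alpha_\theta\nabla\cL(\theta)$ and $v=-\beta(\alpha_\theta\widetilde G(\theta)-\overline\alpha_\theta\nabla\cL(\theta))$ and take the conditional expectation in the noise: Lemma~\ref{lem:contraction} (with step $\beta\overline\alpha_\theta\le\beta\le 2/(\mu+L)$) gives $\|u\|\le(1-\beta\overline\alpha_\theta\mu)\|\theta-\theta^\star\|$, hence $\|u\|^q\le(1-\beta\overline\alpha_\theta\mu)^q\|\theta-\theta^\star\|^q$; the linear term is $q\|u\|^{q-2}\langle u,\E v\rangle\le q\|u\|^{q-1}\|\E v\|$ with $\|\E v\|\le\beta\eta^{1-1/q}(A_q\|\theta-\theta^\star\|+B_q)$ by~\eqref{eq:clipping_bias1}; and $c_q\E\|v\|^q$ is bounded through the fluctuation inequality together with the quantile estimate~\eqref{eq:clipping_bias2}, $\tau_\theta\le L\|\theta-\theta^\star\|+\eta^{-1/q}(A_q\|\theta-\theta^\star\|+B_q)$. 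On the corrupted event I use $\|\theta-\beta\alpha_\theta\widecheck G(\theta)-\theta^\star\|\le\|\theta-\theta^\star\|+\beta\tau_\theta$ with the convexity bound $(a+b)^q\le a^q+q(a+b)^{q-1}b$ and the same $\tau_\theta$ estimate.

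Collecting the contributions and using $(1-x)^q\le1-qx+O(x^2)$ for small $x=\beta\overline\alpha_\theta\mu$, I expect an inequality $\E\|\theta-\theta^\star\|^q\le\mathfrak A\,\E\|\theta-\theta^\star\|^q+q\beta\eta^{1-1/q}B_q\,\E\|\theta-\theta^\star\|^{q-1}+\beta^q\mathfrak C$, where $\mathfrak A\approx1-q\beta\kappa'$ once the contraction drift $(1-\eta)p\mu$, the corruption term $q\eta L$, and the $A_q$-part of the bias $2q\eta^{1-1/q}A_q$ are combined --- which is precisely where the three constants defining $\kappa'$ appear. The residual cross term I would handle with Young's inequality at exponents $\tfrac{q}{q-1}$ and $q$, $\|\theta-\theta^\star\|^{q-1}B_q\le\tfrac{q-1}{q}\lambda^{-q/(q-1)}\|\theta-\theta^\star\|^q+\tfrac{\lambda^q}{q}B_q^q$, with the weight chosen as $\lambda\sim(\eta^{1-1/q}/\kappa')^{(q-1)/q}$ so that its $\|\theta-\theta^\star\|^q$-coefficient is absorbed into a net drift of order $\beta\kappa'$ while its constant contributes $\sim\beta\eta^{q-1}B_q^q/\kappa'^{q-1}$. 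The two step-size conditions then close the estimate: $\beta\le(\kappa'/86(L+A_q)^q)^{1/(q-1)}$ makes the $\|\theta-\theta^\star\|^q$-proportional part $\sim\beta^q(L+A_q)^q$ of $\mathfrak C$ dominated by the $\beta\kappa'$ drift, and $\beta\le\eta/\kappa'$ makes its constant part $\sim\beta^q B_q^q$ dominated by $\beta\eta^{q-1}B_q^q/\kappa'^{q-1}$. Dividing $\E\|\theta-\theta^\star\|^q\le(1-cq\beta\kappa')\E\|\theta-\theta^\star\|^q+c'\beta\eta^{q-1}B_q^q/\kappa'^{q-1}$ by the gap $\sim\beta\kappa'$ yields $\E\|\theta-\theta^\star\|^q\lesssim(\eta^{1-1/q}B_q/\kappa')^q$, the numerical constant $128$ emerging from the bookkeeping.

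The hard part will be twofold. First, establishing the $q$-th-moment fluctuation bound with the crucial $(1-p)\tau_\theta^q$ factor: the $q\ge2$ proof used Eve's law and the conditional-variance inequality of Lemma~\ref{lem:cond_var_ineq}, neither of which survives for a $q$-th power, so this must be redone by splitting on the clipping event $\{\|\widetilde G(\theta)\|\le\tau_\theta\}$ directly. Second, the bookkeeping must be exact: without the $\eta=1-p$ factor the noise would carry an uncancelled $\eta^{-1}B_q^q$, and without the $\eta^{1-1/q}/\kappa'$-scaled Young weight the final rate would degrade to a worse power of $\eta$. The near-tightness of $\kappa'$ --- the coefficients $q\eta L$ and $2q\eta^{1-1/q}A_q$ leave little slack --- means these constants must be tracked carefully rather than absorbed.
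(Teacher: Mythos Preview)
Your plan is correct and tracks the paper's proof closely: invariance, split on the corruption Bernoulli, contract the deterministic part via Lemma~\ref{lem:contraction}, bound the clipped fluctuation by splitting on $\{\|\widetilde G(\theta)\|\le\tau_\theta\}$, then absorb the cross term $\|\theta-\theta^\star\|^{q-1}\cdot\eta^{1-1/q}B_q$ with Young at exponents $(q/(q-1),q)$ and close using the two step-size constraints exactly as you describe. The one technical difference worth flagging is how the uncorrupted branch is handled. You invoke a vector $q$-smoothness inequality $\|u+v\|^q\le\|u\|^q+q\|u\|^{q-2}\langle u,v\rangle+c_q\|v\|^q$; the paper instead writes $\|\cdot\|^q=(\|\cdot\|^2)^{q/2}$, expands the square, and uses Jensen for the \emph{concave} map $t\mapsto t^{q/2}$ (since $q<2$) to push $\E_\theta$ inside, followed by subadditivity $(x+y)^{q/2}\le x^{q/2}+y^{q/2}$ to peel off the $\beta^q$-fluctuation. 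Both devices kill the mean-zero part and produce the same shape of inequality, so the subsequent bookkeeping (your $\mathfrak A,\mathfrak C$) is essentially identical; the paper's Jensen route just avoids importing the smoothness constant $c_q$. For ergodicity, the paper does not rerun the drift argument with a $q$-th-power Lyapunov function as you suggest: it simply checks that~\eqref{eq:prop_q12_kappa_condition}--\eqref{eq:step_condition_q12} imply~\eqref{eq:convergence_condition}--\eqref{eq:step_condition} and invokes Theorem~\ref{thm:geo_ergodicity} directly, which is legitimate because that proof bounds the squared step via the deterministic clipping bound $\|\alpha_\theta G(\theta)\|\le\tau_\theta$ and never needs a second moment of the noise.
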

\begin{proof}
    We first need to prove that convergence holds as stated in Theorem~\ref{thm:geo_ergodicity}. We establish the properties of irreducibility, aperiodicity and exhibit a petite similarly as before. However, the demonstration of a geometric drift property such as~\eqref{eq:geo_drift} requires a different approach.

    In this proof, we will use the following inequalities valid for all positive $x, y$ and $\varepsilon$ and $q\in(1,2)$
    \begin{equation}\label{eq:special_ineq1}
        (x+ y)^q \leq 2^{q-1}(x^q + y^q), 
    \end{equation}
    \begin{equation}\label{eq:special_ineq2}
        (x+ y)^q \leq x^q + qx^{q-1}y + y^q,
    \end{equation}
    (a consequence of the inequality $(1+a)^q\leq 1+qa+a^q$ for $a>0$),
    \begin{equation}\label{eq:special_ineq3}
        (x+ y)^{q-1} \leq x^{q-1} + y^{q-1},
    \end{equation}
    and
    \begin{equation}\label{eq:special_ineq4}
        xy \leq \frac{(x\varepsilon)^q}{q} + \frac{q-1}{q}(y/\varepsilon)^{q/(q-1)}
    \end{equation}
    which is a consequence of Young's inequality applied to the pair $x\varepsilon$ and $y/\varepsilon$ with exponent $q$ and its conjugate. We first write
    \begin{align*}
        P_{\beta, p} \|\theta - \theta^\star\|^q = \E \|\theta - \beta \alpha_{\theta}G(\theta)- \theta^\star\|^q &\leq \eta \E\big(\|\theta - \theta^\star\| + \beta \tau_{\theta}\big)^q + (1- \eta)\E\|\theta - \alpha_{\theta}\beta \widetilde{G}(\theta) - \theta^\star\|^q
    \end{align*}
    Defining the notation $\E_{\theta}[\cdot] = \E[\cdot\vert {\theta}],$ we have
    \begin{align*}
        \E&\|\theta - \alpha_{\theta}\beta \widetilde{G}(\theta) - \theta^\star\|^q = \E\big[\big(\|\theta - \overline{\alpha}_{\theta}\beta\nabla\cL(\theta) - \theta^\star\|^2 -\\
        &2\beta\langle \theta - \overline{\alpha}_{\theta}\beta\nabla\cL(\theta) - \theta^\star, \alpha_{\theta}\widetilde{G}(\theta) - \overline{\alpha}_{\theta}\nabla\cL(\theta) \rangle + \beta^2 \|\alpha_{\theta}\widetilde{G}(\theta) - \overline{\alpha}_{\theta}\nabla\cL(\theta)\|^2\big)^{q/2}\big]\\
        &\leq \E\big[\big(\|\theta - \overline{\alpha}_{\theta}\beta\nabla\cL(\theta) - \theta^\star\|^2 - 2\beta\langle \theta - \overline{\alpha}_{\theta}\beta\nabla\cL(\theta) - \theta^\star, \alpha_{\theta}\widetilde{G}(\theta) - \overline{\alpha}_{\theta}\nabla\cL(\theta) \rangle + \\
        &\beta^2 (\|\alpha_{\theta}\widetilde{G}(\theta) - \E_{\theta}[\alpha_{\theta}\widetilde{G}(\theta)]\|^2 + 2\langle \alpha_{\theta}\widetilde{G}(\theta) - \E_{\theta}[\alpha_{\theta}\widetilde{G}(\theta)], \E_{\theta}[\alpha_{\theta}\widetilde{G}(\theta)] - \overline{\alpha}_{\theta}\nabla\cL(\theta) \rangle +\\
        &\|\E_{\theta}[\alpha_{\theta}\widetilde{G}(\theta)] - \overline{\alpha}_{\theta}\nabla\cL(\theta)\|^2 )\big)^{q/2}\big]\\
        &\leq \E\big[\big|\|\theta - \overline{\alpha}_{\theta}\beta\nabla\cL(\theta) - \theta^\star\|^2 - 2\beta\langle \theta - \overline{\alpha}_{\theta}\beta\nabla\cL(\theta) - \theta^\star, \alpha_{\theta}\widetilde{G}(\theta) - \overline{\alpha}_{\theta}\nabla\cL(\theta) \rangle + \\
        &\beta^2 (2\langle \alpha_{\theta}\widetilde{G}(\theta) - \E_{\theta}[\alpha_{\theta}\widetilde{G}(\theta)], \E_{\theta}[\alpha_{\theta}\widetilde{G}(\theta)] - \overline{\alpha}_{\theta}\nabla\cL(\theta) \rangle + \|\E_{\theta}[\alpha_{\theta}\widetilde{G}(\theta)] - \overline{\alpha}_{\theta}\nabla\cL(\theta)\|^2 )\big|^{q/2}\big]\\
        &+ \beta^q\E \|\alpha_{\theta}\widetilde{G}(\theta) - \E_{\theta}[\alpha_{\theta}\widetilde{G}(\theta)]\|^q\\
        &\leq \E\big[\big|\|\theta - \overline{\alpha}_{\theta}\beta\nabla\cL(\theta) - \theta^\star\|^2 - 2\beta\langle \theta - \overline{\alpha}_{\theta}\beta\nabla\cL(\theta) - \theta^\star, \E_{\theta}[\alpha_{\theta}\widetilde{G}(\theta)] - \overline{\alpha}_{\theta}\nabla\cL(\theta) \rangle + \\
        &\beta^2 \|\E_{\theta}[\alpha_{\theta}\widetilde{G}(\theta)] - \overline{\alpha}_{\theta}\nabla\cL(\theta)\|^2\big|^{q/2}\big] + \beta^q\E\|\alpha_{\theta}\widetilde{G}(\theta) - \E_{\theta}[\alpha_{\theta}\widetilde{G}(\theta)]\|^q\\
        &\leq \E\big(\|\theta - \overline{\alpha}_{\theta}\beta\nabla\cL(\theta) - \theta^\star\| + \beta \|\E_{\theta}[\alpha_{\theta}\widetilde{G}(\theta)] - \overline{\alpha}_{\theta}\nabla\cL(\theta)\|\big)^q + \beta^q
        \E\|\alpha_{\theta}\widetilde{G}(\theta) - \E_{\theta}[\alpha_{\theta}\widetilde{G}(\theta)]\|^q,
    \end{align*}
    where we used~\eqref{eq:special_ineq3}, conditioned on $\theta,$ then used Jensen's inequality for $\E_{\theta}$ and a Cauchy-Schwarz inequality. We focus on the last term. Defining the event $\mathcal{E} = \{\|\widetilde{G}(\theta)\| \leq \tau_\theta\}$ such that $\Proba(\mathcal{E}) = p,$ we have
    \begin{align*}
        \E\|\alpha_{\theta}\widetilde{G}(\theta) - \E_{\theta}[\alpha_{\theta}\widetilde{G}(\theta)]\|^q &= (1-p)\E[\|\alpha_{\theta}\widetilde{G}(\theta) - \E_{\theta}[\alpha_{\theta}\widetilde{G}(\theta)]\|^q\vert \overline{\mathcal E}] \\ 
        &\quad + p\E[\|\alpha_{\theta}\widetilde{G}(\theta) - \E_{\theta}[\alpha_{\theta}\widetilde{G}(\theta)]\|^q\vert \mathcal E] \\
        &\leq (1-p)(2\tau_{\theta})^q + p\E[\|\alpha_{\theta}\widetilde{G}(\theta) - \E_{\theta}[\alpha_{\theta}\widetilde{G}(\theta)]\|^q\vert \mathcal E].
    \end{align*}
    In addition, using~\eqref{eq:special_ineq3} twice, we find
    \begin{align*}
        \E[\|\alpha_{\theta}\widetilde{G}(\theta) - \E_{\theta}[\alpha_{\theta}\widetilde{G}(\theta)]\|^q\vert \mathcal E] &\leq 2^{q-1} \big(\E[\|\alpha_{\theta}\widetilde{G}(\theta) - \overline{\alpha}_{\theta}\nabla \cL(\theta)\|^q\vert \mathcal E] +\\
        &\|\overline{\alpha}_{\theta}\nabla \cL(\theta) - \E_{\theta}[\alpha_{\theta}\widetilde{G}(\theta)]\|^q\big)\\
        &\leq 2^{q-1} \big(2^{q-1}\E[\|\widetilde{G}(\theta) - \nabla \cL(\theta)\|^q\vert \mathcal E] + 2^{q-1}(1-\overline{\alpha}_{\theta})^q\|\nabla\cL(\theta)\|^q +\\
        &\|\overline{\alpha}_{\theta}\nabla \cL(\theta) - \E_{\theta}[\alpha_{\theta}\widetilde{G}(\theta)]\|^q \big).
    \end{align*}
    Therefore, from the two previous displays and using Assumption~\ref{asm:lipsmooth}, Lemma~\ref{lem:clipping_bias} and the inequality $p\E[\|\widetilde{G}(\theta) - \nabla \cL(\theta)\|^q\vert \mathcal E] \leq \E[\|\widetilde{G}(\theta) - \nabla \cL(\theta)\|^q]$, we get 
    \begin{align*}
        \E\|\alpha_{\theta}\widetilde{G}(\theta) - \E_{\theta}[\alpha_{\theta}\widetilde{G}(\theta)]\|^q &\leq (1-p)(2\tau_{\theta})^q + 2^{2q-2} p (1-\overline{\alpha}_{\theta})^qL^q\|\theta - \theta^\star\|^q  \\
        &\quad +  (2^{2q-2} + 2^{q-1}p(1-p)^{q-1})(A_q\|\theta - \theta^\star\| + B_q)^q.
    \end{align*}
    We also have
    \begin{align*}
        (1-p)\tau_{\theta}^q &\leq (1-p)((L + (1-p)^{-1/q}A_q)\|\theta - \theta^\star\| + (1-p)^{-1/q}B_q)^q\\
        &\leq ((L + A_q)\|\theta - \theta^\star\| + B_q)^q.
    \end{align*}
    Plugging into the previous display and simplifying leads to
    \begin{align}
        \E\|\alpha_{\theta}\widetilde{G}(\theta) &- \E_{\theta}[\alpha_{\theta}\widetilde{G}(\theta)]\|^q \leq 2^{2q-2}((L + A_q)\|\theta - \theta^\star\| + B_q)^q + 2^{2q-2}p(1 - \overline{\alpha}_{\theta})^q(L\|\theta-\theta^\star\|)^q \nonumber\\
        &\quad + (2^{2q-2} + 2^{q-1}p(1-p)^{q-1})(A_q\|\theta - \theta^\star\| + B_q)^q \nonumber\\
        &\leq 2^{2q-1}((L + A_q)\|\theta - \theta^\star\| + B_q)^q  + (2^{2q-2} + 2^{q-1}p(1-p)^{q-1})(A_q\|\theta - \theta^\star\| + B_q)^q \nonumber \\
        &\leq 14((L + A_q)\|\theta - \theta^\star\| + B_q)^q. \label{eq:prf_q12_step_a}
    \end{align}
    Using these inequalities along with Lemma~\ref{lem:clipping_bias} to bound $\E \|\theta - \theta^\star\|^q,$ we find that
    \begin{align}
        P_{\beta, p} \|\theta - \theta^\star\|^q &\leq \eta \E\big(\|\theta - \theta^\star\| + \beta \tau_{\theta}\big)^q + (1-\eta)\beta^q
        \E\|\alpha_{\theta}\widetilde{G}(\theta) - \E_{\theta}[\alpha_{\theta}\widetilde{G}(\theta)]\|^q \nonumber\\
        &\quad + (1- \eta)\E\big(\|\theta - \overline{\alpha}_{\theta}\beta\nabla\cL(\theta) - \theta^\star\| + \beta \|\E_{\theta}[\alpha_{\theta}\widetilde{G}(\theta)] - \overline{\alpha}_{\theta}\nabla\cL(\theta)\|\big)^q \nonumber\\
        &\leq \eta \E\big( \|\theta - \theta^\star\|^q + \beta^q\tau_{\theta}^q + q\beta \|\theta - \theta^\star\|^{q-1} \tau_{\theta} \big) + (1- \eta)\E\big((1-\overline{\alpha}_{\theta}\beta\mu)^q\|\theta - \theta^\star\|^q  \nonumber\\
        &\quad + \beta^q \|\E_{\theta}[\alpha_{\theta}\widetilde{G}(\theta)] - \overline{\alpha}_{\theta}\nabla\cL(\theta)\|^q + q\beta \|\theta-\theta^\star\|^{q-1} \|\E_{\theta}[\alpha_{\theta}\widetilde{G}(\theta)] - \overline{\alpha}_{\theta}\nabla\cL(\theta)\| \big) \nonumber\\
        &\quad + (1-\eta)\beta^q
        \E\|\alpha_{\theta}\widetilde{G}(\theta) - \E_{\theta}[\alpha_{\theta}\widetilde{G}(\theta)]\|^q\nonumber\\
        &\leq \E\big[(1 - (1-\eta)\overline{\alpha}_{\theta}\beta\mu)\|\theta - \theta^\star\|^q\big] + q\beta \E\big[\|\theta - \theta^\star\|^{q-1}\big(\eta\tau_{\theta} \nonumber\\
        &\quad + (1-\eta)\|\E_{\theta}[\alpha_{\theta}\widetilde{G}(\theta)] - \overline{\alpha}_{\theta}\nabla\cL(\theta)\|\big) + \beta^q\big(\eta\tau_{\theta}^q \nonumber\\
        &\quad + (1-\eta) \big( \|\E_{\theta}[\alpha_{\theta}\widetilde{G}(\theta)] - \overline{\alpha}_{\theta}\nabla\cL(\theta)\|^q + \|\alpha_{\theta}\widetilde{G}(\theta) - \E_{\theta}[\alpha_{\theta}\widetilde{G}(\theta)]\|^q \big)\big)\big]\nonumber\\
        &\leq \big(1 - \beta((1-\eta)p\mu -q\eta(L + (1-p)^{-1/q}A_q) - q(1-\eta)(1-p)^{1-1/q}A_q) \nonumber\\
        &\quad + \beta^q2^{q-1}(\eta(L+(1-p)^{-1/q}A_q)^q + (1-\eta)14(L+A_q)^q \nonumber\\
        &\quad + (1-\eta)(1-p)^{q-1}A_q^q)\big)\E\|\theta - \theta^\star\|^q + q\beta\E\|\theta - \theta^\star\|^{q-1}\big( (1-\eta)(1-p)^{1-1/q}B_q \nonumber\\
        &\quad + \eta(1-p)^{-1/q}B_q \big) + \beta^q2^{q-1}\big( \eta(1-p)^{-1}B_q^q + (1-\eta)(1-p)^{q-1}B_q^q + 14B_q^q \big)\nonumber\\
        &\leq \mathfrak{A}\E\|\theta - \theta^\star\|^q + q\beta \mathfrak{B}\E\|\theta - \theta^\star\|^{q-1} + \beta^q 2^{q-1} \mathfrak{C}^q \label{eq:chkpt_proof_q12}
    \end{align}
    where the second inequality uses Lemma~\ref{lem:contraction} and~\eqref{eq:special_ineq2} twice, the third inequality rearranges according to powers of $\|\theta - \theta^\star\|$, the fourth one applies Lemma~\ref{lem:clipping_bias} and~\eqref{eq:prf_q12_step_a} and the last inequality defines the factors $\mathfrak{A}, \mathfrak{B}$ and $\mathfrak{C}.$ Since $p\leq 1-\eta,$ these satisfy:
    \begin{align*}
        \mathfrak{A} \leq 1 - \beta\kappa' + \beta^q 2^{q-1}(16)(L + A_q)^q, \quad \mathfrak{B} \leq 2B_q \quad \text{and} \quad
        \mathfrak{C} \leq 2^{4/q}B_q
    \end{align*}
    Applying~\eqref{eq:special_ineq4}, we have for all $\varepsilon > 0$ that 
    \begin{equation}
        \mathfrak{B}\E\|\theta - \theta^\star\|^{q-1} \leq \frac{\varepsilon^{\frac{q}{q-1}}\E\|\theta - \theta^\star\|^q}{q/(q-1)} + \frac{\mathfrak{B}/\varepsilon)^q}{q}
        \leq \frac{\kappa'\E\|\theta - \theta^\star\|^q}{8q} + \frac{\mathfrak{B}^q}{q}\Big(\frac{\kappa'}{8(q-1)}\Big)^{1-q},\label{eq:special_peter_paul_q12}
    \end{equation}
    where the second inequality corresponds to the choice $\varepsilon = \Big(\frac{\kappa'}{8(q-1)}\Big)^{\frac{q-1}{q}}$. Plugging back into~\eqref{eq:chkpt_proof_q12} and using condition~\eqref{eq:step_condition_q12} ensures that $\mathfrak{A} + \beta\kappa'/8 < 1.$ Defining $V(\theta) = 1 + \|\theta - \theta^\star\|^q,$ we get that satisfies the contraction
    \begin{equation*}
        P_{\beta, p}V(\theta) \leq \underbrace{(\mathfrak{A} + \beta \kappa'/8)}_{=:\widetilde{\lambda} < 1} V(\theta) + \underbrace{1 - (\mathfrak{A} +\beta \kappa'/8) + \beta\mathfrak{B}^q \Big(\frac{\kappa'}{8(q-1)}\Big)^{1-q} + \beta^q 2^{q-1} \mathfrak{C}^q}_{=:\widetilde{b}}.
    \end{equation*}
    From here, we use the same argument as the previous proof of Theorem~\ref{thm:geo_ergodicity} to deduce that the Markov chain converges to a unique invariant distribution $\pi_{\beta, p}.$  
    Now, since convergence occurs to a limit distribution $\pi_{\beta, p},$ we have, as before, for $\theta\sim \pi_{\beta, p}$ the identity 
    \begin{equation*}
        \E\|\theta - \theta^\star\|^q = \E \|\theta - \beta \alpha_{\theta}G(\theta)- \theta^\star\|^q.
    \end{equation*}
    By setting $p = 1-\eta,$ we can show that in fact $\mathfrak{B} \leq 2\eta^{1-1/q}B_q.$ Plugging back into~\eqref{eq:chkpt_proof_q12} and using~\eqref{eq:special_peter_paul_q12} and~\eqref{eq:step_condition_q12}, we find
    \begin{align*}
        \E \|\theta - \theta^\star\|^q &\leq \big(1 - (7/8)\beta\kappa' + 32\beta^q (L + A_q)^q\big)\E\|\theta - \theta^\star\|^q + \beta\big(2\eta^{1-1/q}B_q\big)^q \Big(\frac{\kappa'}{8(q-1)}\Big)^{1-q} + 32\beta^q B_q^q\\
        &\leq \big(1 - (3/8)\beta\kappa'\big)\E\|\theta - \theta^\star\|^q + \beta\big(2\eta^{1-1/q}B_q\big)^q \Big(\frac{\kappa'}{8(q-1)}\Big)^{1-q} + 32\beta^q B_q^q
    \end{align*}
    Now using the condition $\beta \leq \eta /\kappa'$ and rearranging the inequality, we finally arrive at
    \begin{align*}
        \E \|\theta - \theta^\star\|^q &\leq \frac{8}{3}\Big(\frac{\eta^{1-1/q}B_q}{\kappa'}\Big)^q\Big(2(8(q-1))^{q-1} + 32\Big)\\
        &\leq 128\Big(\frac{\eta^{1-1/q}B_q}{\kappa'}\Big)^q,
    \end{align*}
    which is the desired result.
\end{proof}

\subsection{Proof of Proposition~\ref{prop:subgauss_subexp}}\label{sec:proof_subgauss_subexp}

We use the invariance of $\pi_{\beta, p}$ by the transition kernel $P_{\beta, p}.$ For real $\lambda,$ this implies the equality
\begin{equation}\label{eq:prf_clip_subgauss1}
    \E\exp\big(\lambda^2 \|\theta - \theta^\star\|^2\big) = \E\exp\big(\lambda^2 \|\theta - \alpha_{\theta} \beta G(\theta) - \theta^\star\|^2\big).
\end{equation}
We then write:
\begin{align*}
    \big\|\theta - \alpha_{\theta} \beta G(\theta) - \theta^\star\big\|^2 &= \big\|\theta - \overline{\alpha}_{\theta} \beta \nabla\cL(\theta) - \theta^\star\big\|^2 + \beta^2 \big\|\alpha_{\theta} G(\theta) - \overline{\alpha}_{\theta} \nabla \cL(\theta)\big\|^2 \\
    - 2\beta &\big\langle \theta - \overline{\alpha}_{\theta} \beta \nabla\cL(\theta) - \theta^\star, \alpha_{\theta} G(\theta) - \E[\alpha_{\theta} G(\theta)] + \E[\alpha_{\theta} G(\theta)] - \overline{\alpha}_{\theta} \nabla \cL(\theta)\big\rangle.
\end{align*}
We also have the inequality
\begin{equation*}
    \big\|\alpha_{\theta} G(\theta) - \overline{\alpha}_{\theta} \nabla \cL(\theta)\big\|^2 \leq 2\big\|\alpha_{\theta} G(\theta) - \E[\alpha_{\theta} G(\theta)]\big\|^2 + 2\big\|\E[\alpha_{\theta} G(\theta)] - \overline{\alpha}_{\theta} \nabla \cL(\theta)\big\|^2.
\end{equation*}
Conditioning upon $\theta,$ we have that $\|\alpha_{\theta} G(\theta) - \E[\alpha_{\theta} G(\theta)]\| \leq 2\tau_{\theta}\leq 2\overline{\tau}.$ Moreover, the vector $\alpha_{\theta} G(\theta) - \E[\alpha_{\theta} G(\theta)]$ is centered, therefore it is sub-Gaussian with constant $2\overline{\tau}$ (see~\cite[Proposition 2.5.2]{vershynin2018high}). Still conditioning on $\theta,$ using these two properties and a Cauchy-Schwarz inequality, we find:
\begin{align*}
    \E\exp\big(\lambda^2 \big(2\beta^2\big\|\alpha_{\theta} G(\theta) - \E[\alpha_{\theta} G(\theta)]\big\|^2 &- 2\beta\big\langle \theta - \overline{\alpha}_{\theta} \beta \nabla\cL(\theta) - \theta^\star, \alpha_{\theta} G(\theta) - \E[\alpha_{\theta} G(\theta)]\big\rangle\big)\big)\\
    &\leq \exp\big(8\lambda^2 \beta^2\overline{\tau}^2 + 16\lambda^4 \beta^2\overline{\tau}^2\big\|\theta - \overline{\alpha}_{\theta} \beta \nabla\cL(\theta) - \theta^\star\big\|^2\big).
\end{align*}
Putting everything together in~\eqref{eq:prf_clip_subgauss1}, we get:
\begin{align*}
    \E&\exp(\lambda^2 \|\theta - \theta^\star\|^2) \leq \E\exp\big(\lambda^2 \big((1 + 16\lambda^2 \beta^2\overline{\tau}^2)\big\|\theta - \overline{\alpha}_{\theta} \beta \nabla\cL(\theta) - \theta^\star\big\|^2 + 8\beta^2\overline{\tau}^2 \\
    &\quad +2\beta^2\big\|\E[\alpha_{\theta} G(\theta)] - \overline{\alpha}_{\theta} \nabla \cL(\theta)\big\|^2 + 2\beta\big\|\theta - \overline{\alpha}_{\theta} \beta \nabla\cL(\theta) - \theta^\star\big\| \big\|\E[\alpha_{\theta}G(\theta)] - \overline{\alpha}_{\theta}\nabla\cL(\theta)\big\| \big)\big)\\
    &\leq \E\exp\big(\lambda^2 \big((1 + 16\lambda^2 \beta^2\overline{\tau}^2 + \epsilon)\big\|\theta - \overline{\alpha}_{\theta} \beta \nabla\cL(\theta) - \theta^\star\big\|^2 + 8\beta^2\overline{\tau}^2 \\
    &\quad +2\beta^2(1 + 1/(2\epsilon))\big\|\E[\alpha_{\theta} G(\theta)] - \overline{\alpha}_{\theta} \nabla \cL(\theta)\big\|^2 \big)\big),
\end{align*}
where we used Fact~\ref{fact:peter_paul}. Now, recalling that $\overline{\alpha}_{\theta} \geq p$, we set $\epsilon= \overline{\alpha}_{\theta}\beta\mu/2$ and restrict $\lambda$ to $\lambda \leq (4 \overline{\tau}\sqrt{2\beta/p\mu})^{-1}$ to find:
    \begin{align*}
    \E&\exp\big(\lambda^2 \|\theta - \theta^\star\|^2\big) \leq \E\exp\big(\lambda^2 \big((1 + 16\lambda^2 \beta^2\overline{\tau}^2 + \epsilon)\|\theta - \overline{\alpha}_{\theta} \beta \nabla\cL(\theta) - \theta^\star\|^2 + 8\beta^2\overline{\tau}^2 \\
    &\quad +2\beta^2(1 + 1/(2\epsilon))\|\E[\alpha_{\theta} G(\theta)] - \overline{\alpha}_{\theta} \nabla \cL(\theta)\|^2 \big)\big)\\
    &\leq \E\exp\Big(\lambda^2 \Big((1 \!+\! \overline{\alpha}_{\theta}\beta \mu)\big\|\theta \!-\! \overline{\alpha}_{\theta} \beta \nabla\cL(\theta) \!-\! \theta^\star\big\|^2 + 8\beta^2\overline{\tau}^2 +\frac{4\beta}{\overline{\alpha}_{\theta}\mu}\big\|\E[\alpha_{\theta} G(\theta)] - \overline{\alpha} \nabla \cL(\theta)\big\|^2 \Big)\Big)\\
    &\leq \E\exp\Big(\lambda^2 \Big((1 + \overline{\alpha}_{\theta}\beta \mu)(1 - \overline{\alpha}_{\theta}\beta \mu)^2\|\theta - \theta^\star\|^2 + 8\beta^2\overline{\tau}^2 +\frac{4\beta}{\overline{\alpha}_{\theta}\mu}\big((1-p)^{1-1/q}\overline{B}_q\big)^2 \Big)\Big)\\
    &\leq \E\exp\big(\lambda^2 \big((1 - \overline{\alpha}_{\theta}\beta \mu)(1 - (\overline{\alpha}_{\theta}\beta \mu)^2)\|\theta - \theta^\star\|^2 + 4\beta^2(2\overline{\tau}^2 + \overline{B}_q^2/p) \big)\big),
\end{align*}
where we used Lemma~\ref{lem:contraction}, inequality~\eqref{eq:clipping_bias1} from Lemma~\ref{lem:clipping_bias} (recall that $\eta=0$ in this context) and the imposed bound relating $\beta$ and $p.$

Finally, using Jensen's inequality and the fact that $\overline{\alpha}_{\theta} \geq p \geq 1/2$ we get:
\begin{equation*}
    \E\exp\big(\lambda^2 \|\theta - \theta^\star\|^2\big) \leq \exp\bigg( \frac{8\lambda^2 \beta^2\big(\overline{\tau}^2 + \overline{B}_q^2\big)}{\overline{\alpha}\beta \mu + (\overline{\alpha}\beta \mu)^2 - (\overline{\alpha}\beta \mu)^3}\bigg) \leq \exp\bigg( \frac{8\lambda^2 \beta\big(\overline{\tau}^2 + \overline{B}_q^2\big)}{p \mu}\bigg),
\end{equation*}
which concludes the first part of the proof. We now consider the corrupted case $\eta > 0$. Let $\lambda > 0$ and write :
\begin{align*}
    \E\big[ \exp\big(\lambda \|\theta - \theta^\star\|\big)\big] &= \E\big[ \exp\big(\lambda \|\theta -\alpha_{\theta}\beta G(\theta)- \theta^\star\|\big)\big] \\
    &\leq \eta \E\big[\exp\big(\lambda \|\theta \!-\! \alpha_{\theta}\beta \widecheck{G}(\theta) \!-\!\theta^\star \|\big)\big] + (1\!-\!\eta)\E\big[\exp\big(\lambda \|\theta \!-\! \alpha_{\theta} \beta \widetilde{G}(\theta) \!-\!\theta^\star \|\big)\big]\\
    &\leq \eta \E\big[\exp\big(\lambda (\|\theta  -\theta^\star \| + \beta \tau_{\theta})\big)\big] + (1-\eta)\E\big[\exp\big(\lambda \big\|\theta - \overline{\alpha}_{\theta} \beta \nabla\cL(\theta) -\theta^\star \big\| \\
    \quad& + \lambda\beta\big( \big\|\alpha_{\theta} \widetilde{G}(\theta) - \E[\alpha_{\theta} \widetilde{G}(\theta)]\big\| + \big\|\E[\alpha_{\theta} \widetilde{G}(\theta)] - \overline{\alpha}_{\theta} \nabla\cL(\theta)\big\|\big)\big)\big] \\
    &\leq \eta \E\big[\exp\big(\lambda \|\theta -\theta^\star \|\big)\big]e^{\lambda \beta \overline{\tau}} \\
    \quad & + (1-\eta)\E\big[\exp\big(\lambda (1-p\beta\mu)\|\theta -\theta^\star \|\big)\big]\exp\big(\lambda\beta(2\overline{\tau} + (1-p)^{1-1/q}\overline{B}_q)\big),
\end{align*}
where we used Lemma~\ref{lem:contraction}, the inequality $\alpha_{\theta} \geq p,$ the inequality $\big\|\alpha_{\theta} \widetilde{G}(\theta) - \E[\alpha_{\theta} \widetilde{G}(\theta)]\big\| \leq 2\tau_{\theta}\leq 2\overline{\tau}$ and~\eqref{eq:clipping_bias1} from Lemma~\ref{lem:clipping_bias}. Using Hölder's inequality, this leads to :
\begin{equation*}
     \E\big[ \exp\big(\lambda \|\theta - \theta^\star \|\big)\big] \leq \Big(\frac{1-\eta}{1-\eta e^{\lambda \beta \overline{\tau}}}\Big)^{1/(p\beta\mu)} \exp\big(\lambda (2\overline{\tau} + (1-p)^{1-1/q}\overline{B}_q)/(p\mu)\big)
\end{equation*}
Now we use the inequality $\log\big(\frac{1-\eta}{1-\eta e^{\lambda \beta \overline{\tau}}}\big) \leq \frac{\beta \overline{\tau} \lambda/\log(1/\eta)^2}{1 - \beta \overline{\tau} \lambda /\log(1/\eta)}$ valid for $\lambda \geq 0$ which leads to:
\begin{equation*}
    \Big(\frac{1-\eta}{1-\eta e^{\lambda \beta \overline{\tau}}}\Big)^{1/(p\beta\mu)} \leq \exp\Big( \frac{2\lambda \overline{\tau}}{p\mu\log(1/\eta)^2}\Big) \quad \text{ for }\quad 0\leq \lambda \leq \frac{\log(1/\eta)}{2\beta \overline{\tau}}.
\end{equation*}
Using that $\eta < 1/2,$ we find that for $0\leq \lambda \leq \frac{\log(1/\eta)}{2\beta \overline{\tau}}$, the following inequality holds :
\begin{align*}
    \E[ \exp(\lambda \|\theta - \theta^\star \|)] &\leq \exp\Big(\frac{\lambda}{p\mu} \Big(2\overline{\tau}\Big(1+\frac{1}{\log(1/\eta)^2}\Big) + (1-p)^{1-1/q}\overline{B}_q\Big)\Big) \\
    &\leq \exp\Big(\frac{\lambda}{p\mu} \big(7\overline{\tau} + (1-p)^{1-1/q}\overline{B}_q\big)\Big).
\end{align*}
Noticing that $\beta \leq \frac{1}{\mu}$ allows to finish the proof.

\subsection{Unimprovability of the sub-exponential property for \texorpdfstring{$\eta > 0$}\phantom{ } }\label{sec:subexp_not_subgauss}
We consider the Markov chain 
\begin{align*}
    X_{t+1} = \begin{cases} \alpha X_t + \xi & \text{w.p.}\quad 1-\eta \\
    X_t + \tau & \text{w.p.}\quad \eta 
    \end{cases}
\end{align*}
Assuming that the distribution of the noise $\xi$ has a density, one can show that the chain is aperiodic and satisfies a minorization property as in the proof of Theorem~\ref{thm:geo_ergodicity} (see Appendix~\ref{sec:proof_thm_geo_ergodicity}).

Defining $V(x) = 1 + x,$ we can show that $V$ verifies a geometric drift property similar to~\eqref{eq:geo_drift}. Consequently, Theorem 15.0.1 of~\citep{meyn2012markov} applies to the chain $(X_t)_{t\geq 0}$ and implies that it converges geometrically to a limit distribution $\pi$ analogously to the claim of Theorem~\ref{thm:geo_ergodicity}.

We denote $M_k^k = \E|X|^k$ the absolute moments of $X$ for $k \geq 1$ and show that $M_k = \Omega(k)$ (we merely provide a sketch and do not attempt to explicitly compute the involved constants). For $X\sim \pi$ following the invariant measure, using the recursion defining $X_t$ and the positivity of $\xi,$ it is easy to establish the inequality for $k \geq 1$
\begin{equation*}
    (1-\eta)(1-\alpha^k)M_k^k \geq \eta \sum_{j=1}^k \binom{k}{j} \tau^j M^{k-j}_{k-j}
\end{equation*}
where one may use the convention that $M_0 = 1.$ We now postulate the induction hypothesis $M_j \geq Cj$ up to $j = k-1$ for some $k > 1$ and $C > 0.$ Using Stirling's formula, we find:
\begin{align*}
    \frac{(1-\eta)(1-\alpha^{k})}{\eta}M_{k}^{k} &\geq  \sum_{j=1}^k \binom{k}{j} \tau^j \big(C(k-j)\big)^{k-j} = \sum_{j=1}^k \frac{k!}{j!(k-j)!}\tau^j \big(C(k-j)\big)^{k-j}\\
    &\gtrsim  \sum_{j=1}^k\sqrt{\frac{k}{j(k-j)}} \frac{k^k \tau^j \big(C(k-j)\big)^{k-j}}{j^j(k-j)^{k-j}} = \sum_{j=1}^k\sqrt{\frac{k}{j(k-j)}} \Big(\frac{\tau}{j}\Big)^j k^k C^{k-j}\\
    &\geq \frac{\tau}{C}(Ck)^k
\end{align*}
where $\gtrsim$ denotes an inequality up to a universal constant and we took the term $j=1$ in the last step. It is only left to set $C$ small enough such that $\frac{\tau\eta}{C(1-\eta)(1-\alpha)} \geq 1$ in order to finish the induction. It follows that $M_k = \Omega(k)$ implying that $\pi$ may be sub-exponential but cannot be sub-Gaussian since that would require $M_k = \cO(\sqrt{k})$ (see~\cite[Chapter 2]{vershynin2018high} for a reference).

\subsection{Proof of Corollary~\ref{cor:subgauss_no_corrupt}}\label{sec:prf_cor_subgauss_no_corrupt}
We need the following lemma.
\begin{lemma}\label{lem:chernoff_subgauss}
    Let $X$ be a real sub-Gaussian random variable with constant $K$ then, with probability at least $\delta,$ we have :
    \begin{equation*}
        |X| \leq K\sqrt{\log(e/\delta)}
    \end{equation*}
\end{lemma}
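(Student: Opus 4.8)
The plan is to establish the tail bound by a standard Chernoff argument built directly on the exponential-moment control afforded by Definition~\ref{def:subgauss_subexp}. First I would fix a threshold $t > 0$ and an exponent $\lambda$ satisfying $0 < \lambda \leq 1/K$, and apply Markov's inequality to the nonnegative variable $\exp(\lambda^2 X^2)$. Since the event $\{|X| > t\}$ coincides with $\{\exp(\lambda^2 X^2) > \exp(\lambda^2 t^2)\}$, this gives
\[
    \Proba(|X| > t) \leq \frac{\E\exp(\lambda^2 X^2)}{\exp(\lambda^2 t^2)} \leq \exp\big(\lambda^2 (K^2 - t^2)\big),
\]
where the second inequality is exactly the sub-Gaussian bound $\E\exp(\lambda^2 X^2) \leq \exp(\lambda^2 K^2)$ from the definition.

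Next I would optimize the free parameter $\lambda$ over its admissible range $(0, 1/K]$. For $t \geq K$ the quantity $K^2 - t^2$ is non-positive, so the exponent is minimized by taking $\lambda$ as large as possible, namely the boundary choice $\lambda = 1/K$; this yields the clean bound
\[
    \Proba(|X| > t) \leq \exp\big(1 - t^2/K^2\big).
\]
It then remains to calibrate $t$ so that the right-hand side equals $\delta$: solving $1 - t^2/K^2 = \log \delta$ and using $1 - \log\delta = \log(e/\delta)$ gives $t = K\sqrt{\log(e/\delta)}$, whence $\Proba\big(|X| > K\sqrt{\log(e/\delta)}\big) \leq \delta$, i.e.\ the stated inequality holds with probability at least $1 - \delta$.

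There is no genuine obstacle here; the argument is entirely routine. The only points deserving a line of care are that the optimal exponent $\lambda = 1/K$ sits exactly at the boundary of the interval allowed in Definition~\ref{def:subgauss_subexp}, so that the exponential-moment bound is indeed applicable at that value, and that the derivation is informative only once $t \geq K$ (otherwise the bound is worse than the trivial $1$). The latter is automatic in our regime, since $\log(e/\delta) = 1 + \log(1/\delta) \geq 1$ for any $\delta \in (0,1]$, and hence $t = K\sqrt{\log(e/\delta)} \geq K$.
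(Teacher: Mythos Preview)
Your argument is correct and essentially identical to the paper's own proof: both apply Markov's inequality to $\exp(\lambda^2 X^2)$, invoke the sub-Gaussian definition to obtain $\Proba(|X|>t)\le \exp(\lambda^2(K^2-t^2))$, set $\lambda = 1/K$, and then solve $\exp(1 - t^2/K^2) = \delta$ for $t$. Your additional remarks (that $\lambda=1/K$ is admissible at the boundary and that $t\ge K$ is automatic since $\log(e/\delta)\ge 1$) are minor clarifications the paper omits but that change nothing substantive.
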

\begin{proof}
    Using Chernoff's method, we find for $t > 0$ and $\lambda > 0$:
    \begin{align*}
        \Proba\big(|X| > t\big) &= \Proba\big(\lambda^2 X^2 > \lambda^2 t^2\big) = \Proba\big(\exp(\lambda^2 X^2) > \exp(\lambda^2 t^2)\big) \\
        &\leq \E \exp(\lambda^2 X^2) e^{-\lambda^2 t^2} \leq \exp\big(\lambda^2(K^2 - t^2)\big).
    \end{align*}
    Choosing $\lambda \!=\! 1/K,$ we have $\exp\big(1 \!-\! (t/K)^2\big) \!\leq\! \delta \!\!\iff\!\! t\!\geq\! K\sqrt{\log(e/\delta)}$ and the result follows.
\end{proof}
By Theorem~\ref{thm:geo_ergodicity}, the Markov chain $(\theta_t)_{t\geq 0}$ is geometrically converging to the invariant distribution $\pi_{\beta, p}$ w.r.t. the Total Variation distance so that for any event $\mathcal{E}\in \mathcal{B}(\R^d),$ we have:
\begin{equation}\label{eq:tv_proba}
    \big|\Proba\big( \theta_T \in \mathcal{E} \big) - \Proba_{\theta\sim \pi_{\beta, p}}\big( \theta \in \mathcal{E} \big)\big| \leq M\rho^T V(\theta_0).
\end{equation}
Proposition~\ref{prop:subgauss_subexp} states that, in the absence of corruption, for $\theta \sim \pi_{\beta, p}$, the variable $\|\theta - \theta^\star\|$ is sub-Gaussian with constant $K = 4\sqrt{\frac{2\beta(B_q^2 + \overline{\tau}^2)}{p\mu}}$. It is only left to combine this conclusion with Lemma~\ref{lem:chernoff_subgauss} in order to obtain the claimed bound.

\subsection{Proof of Corollary~\ref{cor:high_conf_corrupt}}\label{sec:prf_cor_high_conf_corrupt}
We assume without loss of generality that $T$ is a multiple of $N.$ Note that according to the assumptions, the estimators $\theta_T^{(n)}$ for $n\in \setint{N}$ are independent and for each $n$. For positive $\epsilon < 1$ define the events $E_{n} := \big\{\|\theta_T^{(n)} - \theta^\star\| \leq \frac{\eta^{1-\frac{1}{q}}B_q \sqrt{20}}{\kappa \epsilon}\big\}$. We first assume that $\sum_{n=1}^N \ind{E_n} > N/2$ then there exists $i' \in \setint{N}$  such that 
\begin{equation*}
    r^{(\widehat{i})}_{N/2}  = \big\|\theta_T^{(\widehat{i})} - \theta_T^{(i')}\big\| \leq \big\|\theta_T^{(\widehat{i})} - \theta^\star\big\| + \big\|\theta_T^{(i')} - \theta^\star\big\| \leq 2 \frac{\eta^{1-\frac{1}{q}}B_q \sqrt{20}}{\kappa \epsilon}.
\end{equation*}
Moreover, among the $N/2$ estimators closest to $\theta_T^{(\widehat{i})}$, at least one of them $\theta_T^{(i'')}$ satisfies $\|\theta_T^{(i'')} - \theta^\star\| \leq \frac{\eta^{1-\frac{1}{q}}B_q \sqrt{20}}{\kappa \epsilon}$ thus we find :
\begin{align}
    \big\|\widehat{\theta} - \theta^\star \big\|  = \big\|\theta_T^{(\widehat{i})} - \theta^\star\big\|  &\leq \big\|\theta_T^{(\widehat{i})} - \theta_T^{(i'')}\big\|  + \big\|\theta_T^{(i'')} - \theta^\star\big\| \nonumber \\
    &\leq r^{(\widehat{i})}_{N/2} + \frac{\eta^{1-\frac{1}{q}}B_q \sqrt{20}}{\kappa \epsilon} \leq 3 \frac{\eta^{1-\frac{1}{q}}B_q \sqrt{20}}{\kappa \epsilon}\label{eq:high_conf}.
\end{align}
Notice that setting $\epsilon = 1/2$ immediately yields~\eqref{eq:high_conf_corrupt}. We now show that $\sum_{n=1}^N \ind{E_n} > N/2$ happens with high probability. Thanks to Theorem~\ref{thm:geo_ergodicity} and Proposition~\ref{prop:corrupted_variance}, we have:
\begin{equation*}
    \Proba(\overline{E}_n) \leq \epsilon^2 + \underbrace{M\rho^{T/N}\big(1+\|\theta_0 - \theta^\star\|^2\big)}_{\epsilon'}.
\end{equation*}
Consequently, the variables $\ind{\overline{E}_n}$ are stochastically dominated by Bernoulli variables with parameter $\epsilon^2 + \epsilon'$ so that their sum is stochastically dominated by a Binomial random variable $S := \binomial (N, \epsilon^2 + \epsilon')$. We compute :
\begin{align*}
    \Proba\Big(\sum_{n=1}^N \ind{E_n} < N/2\Big) &= \Proba\Big(\sum_{n=1}^N \ind{\overline{E}_n} > N/2\Big) \leq \Proba\big(S - \E S > N/2 -(\epsilon^2 + \epsilon')N\big) \\ 
    &\leq \exp \big(-2N(1/2 - \epsilon^2 - \epsilon')^2\big) \\
    &\leq \exp \big(-2N(1/2 - 1/4 - M\rho^{T/N}(1 + \|\theta_0 - \theta^\star\|^2))^2\big) \\
    &\leq \exp \big(-2N(1/4 - 1/15)^2\big) \leq \exp \big(-\log(1/\delta)\big)  = \delta
\end{align*}
where we used Hoeffding's inequality, the choice $\epsilon = 1/2$ and the fact that our condition on $T$ implies $M\rho^{T/N}(1 + \|\theta_0 - \theta^\star\|^2))^2 \leq 1/15.$ The last inequalities result from our condition on $N$.

\subsection{Proof of Theorem~\ref{thm:sublin_ergodicity}}\label{sec:proof_thm_sublin_ergodicity}
As previously done in the proof of Theorem~\ref{thm:geo_ergodicity}, we show that the Markov chain is aperiodic. Note that since $\cL$ has finite lower bound $\inf_{\theta}\cL,$ we can replace it with $1 + \cL(\theta) - \inf_{\theta}\cL$ and assume it is positive in the rest of the proof without loss of generality. We will now show that it satisfies a drift property. 
Let $\theta\in \Theta$ be fixed, using Assumptions~\ref{asm:lipsmooth} and~\ref{asm:corruption}, we have :
\begin{align*}
    \E \big[\cL\big(\theta - \alpha_{\theta} \beta G(\theta)\big)\big] - \cL(\theta) &\leq \E\Big[- \beta\big\langle \nabla\cL(\theta), \alpha_{\theta} G(\theta) \big\rangle  + \frac{L\beta^2}{2}\big\|\alpha_{\theta} G(\theta)\big\|^2 \Big] \\
    &\leq \E\Big[\eta \Big(-\beta\big\langle \nabla \cL(\theta), \alpha_{\theta} \widecheck{G}(\theta) \big\rangle + \frac{L\beta^2}{2}\big\|\alpha_{\theta}\widecheck{G}(\theta)\big\|^2\Big) + \\
    &\quad (1-\eta)\Big(-\beta\big\langle \nabla \cL(\theta), \alpha_{\theta} \widetilde{G}(\theta) \big\rangle + \frac{L\beta^2}{2}\big\|\alpha_{\theta}\widetilde{G}(\theta)\big\|^2\Big)\Big] \\
\end{align*}
Note that we have $\|\alpha_\theta \widetilde{G}(\theta)\|, \|\alpha_{\theta}\widecheck{G}(\theta)\| \leq \tau_{\theta},$ therefore
\begin{align*}
    \E \big[\cL\big(\theta - \alpha_{\theta} \beta G(\theta)\big)\big] - \cL(\theta) &\leq \E\Big[\eta \Big(-\beta\big\langle \nabla \cL(\theta), \alpha_{\theta} \widecheck{G}(\theta) \big\rangle \Big) + \\
    &\quad (1-\eta)\Big(-\beta\big\langle \nabla \cL(\theta), \alpha_{\theta} \widetilde{G}(\theta) \big\rangle \Big)\Big]  + \frac{L\beta^2\tau_{\theta}^2}{2}.
\end{align*}
Further, we write
\begin{align*}
    \E \big[\big\langle \nabla \cL(\theta), \alpha_{\theta} \widetilde{G}(\theta) \big\rangle \big] &= \big\langle \nabla \cL(\theta), \E[\alpha_{\theta} \widetilde{G}(\theta)] - \overline{\alpha}_{\theta}\nabla\cL(\theta) + \overline{\alpha}_{\theta}\nabla\cL(\theta) \big\rangle \\
    &= \overline{\alpha}_{\theta}\big\|\nabla\cL(\theta)\big\|^2 + \big\langle \nabla \cL(\theta), \E[\alpha_{\theta} \widetilde{G}(\theta)] - \overline{\alpha}_{\theta}\nabla\cL(\theta)\big\rangle.
\end{align*}
Plugging back and using Cauchy-Schwartz and the inequality $\|\alpha_{\theta}\widecheck{G}(\theta)\| \leq \tau_{\theta}$ leads to
\begin{align*}
    \E \big[\cL\big(\theta - \alpha_{\theta} \beta G(\theta)\big)\big] - \cL(\theta) &\leq - \eta \beta\big\langle \nabla \cL(\theta), \E[\alpha_{\theta} \widecheck{G}(\theta)] \big\rangle - \beta(1-\eta)\overline{\alpha}_{\theta}\big\|\nabla\cL(\theta)\big\|^2 -\\
    &\quad \beta(1-\eta) \big\langle \nabla \cL(\theta), \E[\alpha_{\theta} \widetilde{G}(\theta)] - \overline{\alpha}_{\theta}\nabla\cL(\theta)\big\rangle +\frac{L\beta^2\tau_{\theta}^2}{2} \\
    &\leq \eta \beta\tau_{\theta}\|\nabla \cL(\theta)\| - \beta(1-\eta)\overline{\alpha}_{\theta}\|\nabla\cL(\theta)\|^2 +\\
    &\quad \beta(1-\eta) \| \nabla \cL(\theta)\| \big\| \E[\alpha_{\theta} \widetilde{G}(\theta)] - \overline{\alpha}_{\theta}\nabla\cL(\theta)\big\| + \frac{L\beta^2\tau_{\theta}^2}{2}.
\end{align*}
We now use the inequalities $\tau_\theta \leq \|\nabla\cL(\theta)\| + Q_p(\|\varepsilon_{\theta}\|)$ (see Lemma~\ref{lem:clipping_bias}) and $(a+b)^2 \leq 2a^2 + 2b^2,$ to find that
\begin{align*}
    \E \big[\cL\big(\theta - \alpha_{\theta} \beta G(\theta)\big)\big] - \cL(\theta) &\leq - \beta\|\nabla\cL(\theta)\|^2\big((1-\eta)\overline{\alpha}_{\theta} -L\beta - \eta\big) + L\beta^2 Q_p(\|\varepsilon_{\theta}\|)^2 +\\
    &\quad \beta \|\nabla\cL(\theta)\| \big(\eta Q_p(\|\varepsilon_{\theta}\|) +  (1-\eta)\big\| \E[\alpha_{\theta} \widetilde{G}(\theta)] - \overline{\alpha}_{\theta}\nabla\cL(\theta)\big\|\big) .
\end{align*}
Next, letting $\epsilon > 0,$ and using Fact~\ref{fact:peter_paul} gives 
\begin{align*}
    \|\nabla\cL(\theta)\| \big(\eta Q_p(\|\varepsilon_{\theta}\|) +  (1-\eta)\big\| &\E[\alpha_{\theta} \widetilde{G}(\theta)] - \overline{\alpha}_{\theta}\nabla\cL(\theta)\big\|\big) \leq \epsilon\|\nabla\cL(\theta)\|^2/2 +\\
    & \big(\eta Q_p(\|\varepsilon_{\theta}\|) +  (1-\eta)\big\| \E[\alpha_{\theta} \widetilde{G}(\theta)] - \overline{\alpha}_{\theta}\nabla\cL(\theta)\big\|\big)^2/(2\epsilon).
\end{align*}
We now plug back with the choice $\epsilon = p(1-\eta)/2$ and use that $\overline{\alpha}_{\theta} \geq p$ to find
\begin{align*}
    \E \big[\cL\big(\theta - \alpha_{\theta} \beta G(\theta)\big)\big] - \cL(\theta) &\leq - \beta\|\nabla\cL(\theta)\|^2\big(3p(1-\eta)/4 -L\beta - \eta \big) + L\beta^2 Q_p(\|\varepsilon_{\theta}\|)^2 +\\
    &\quad \frac{\beta \big(\eta Q_p(\|\varepsilon_{\theta}\|) +  (1-\eta)\big\| \E[\alpha_{\theta} \widetilde{G}(\theta)] - \overline{\alpha}_{\theta}\nabla\cL(\theta)\big\|\big)^2}{p(1 - \eta)}.
\end{align*}
Finally, we use Lemma~\ref{lem:clipping_bias} to bound the terms $Q_p(\|\varepsilon_{\theta}\|)$ and $\big\| \E[\alpha_{\theta} \widetilde{G}(\theta)] - \overline{\alpha}_{\theta}\nabla\cL(\theta)\big\|$ leading to
\begin{align}        
    \E \big[\cL\big(\theta - \alpha_{\theta} \beta G(\theta)\big)\big] - \cL(\theta) &\leq - \beta\|\nabla\cL(\theta)\|^2\big(3p(1-\eta)/4 -L\beta - \eta \big) + L\beta^2 (1-p)^{-\frac{2}{q}}B_q^2 +\nonumber \\
    &\quad \frac{\beta B_q^2 \big(\eta (1-p)^{-\frac{1}{q}} + (1-\eta)\eta^{1-\frac{1}{q}}\big)^2}{p(1-\eta)}\nonumber  \\
    &\leq - \beta\|\nabla\cL(\theta)\|^2\big(3p(1-\eta)/4 -L\beta - \eta \big) +\nonumber \\
    &\quad \frac{\beta B_q^2 \big( (1-p)^{-\frac{2}{q}}(L\beta + 2\eta^2) + 2\eta^{2-\frac{2}{q}}\big)}{p(1-\eta)}. \label{eq:ineq_thm2}
\end{align}
By assumption, we have $3p(1-\eta)/4 -L\beta - \eta > 0.$ Define the quantity $\xi = \frac{ B_q^2 \big( (1-p)^{-\frac{2}{q}}(L\beta + 2\eta^2) + 2\eta^{2-\frac{2}{q}}\big)}{p(1-\eta)}$ and the set $\mathcal{C} := \Big\{\theta \in \R^d : \frac{1}{2}\|\nabla\cL(\theta)\|^2 \leq \frac{\xi}{3p(1-\eta)/4 -L\beta - \eta} \Big\}.$ By assumption, $\mathcal{C}$ is bounded and it is clear that the right hand side in~\eqref{eq:ineq_thm2} is negative outside $\mathcal{C}.$ Define the function $V(\theta) = \cL(\theta)/(\beta\xi),$ which is positive and satisfies:
\begin{equation}\label{eq:lin_drift}
    \Delta V(\theta) \leq -1 + 2\cdot\ind{\theta\in\mathcal{C}}.
\end{equation}
In addition, we show similarly to Theorem~\ref{thm:geo_ergodicity} that the set $\mathcal{C}$ is \emph{small} and, therefore, also \emph{petite} according to the definitions of~\cite[Chapter 5]{meyn2012markov}. Since $V$ is everywhere finite and bounded on $\mathcal{C}$ (because the latter is compact), the conditions of~\cite[Theorem 11.3.4]{meyn2012markov} are fulfilled implying that the chain is Harris recurrent.

We have shown that the Markov chain verifies the fourth condition of \cite[Theorem 13.0.1]{meyn2012markov}. This allows us to conclude that the Markov chain is ergodic i.e. we have for any initial measure $\lambda$ that $\|\lambda P^t - \pi_{\beta, p}\|_{\tv} \to 0$ and the following sum is finite
\begin{equation*}
    \sum_t \big\|\lambda P_{\beta, p}^t - \pi_{\beta, p}\big\|_{\tv} < \infty.
\end{equation*}
In addition, by~\cite[Proposition 13.3.2]{meyn2012markov} the terms in the above sum are non-increasing which implies that $\big\|\lambda P_{\beta, p}^t - \pi_{\beta, p}\big\|_{\tv} = \cO(t^{-1})$ and the result follows.

 \subsection{Proof of Proposition~\ref{prop:corrupted_variance_smooth_only}}\label{sec:prf_corrupted_variance_smooth_only}
By Theorem~\ref{thm:sublin_ergodicity}, the assumptions imply that the Markov chain $(\theta_t)_{t\geq 0}$ converges to an invariant distribution $\pi_{\beta, p}.$ For $\theta \sim \pi_{\beta, 1-\eta},$ by invariance of $\pi_{\beta, 1-\eta},$ we have that the variables $\cL(\theta - \alpha_{\theta}\beta G(\theta))$ and $\cL(\theta)$ are identically distributed. Taking the expectation w.r.t. $\theta,$ this implies the identity $\E [\cL(\theta - \alpha_{\theta}\beta G(\theta))] = \E[\cL(\theta)].$ Plugging into Inequality~\eqref{eq:ineq_thm2} from the proof of Theorem~\ref{thm:sublin_ergodicity}, we find
\begin{align*}
    \E\big[\|\nabla\cL(\theta)\|^2\big] &\leq \frac{ B_q^2 \big( (1-p)^{-\frac{2}{q}}(L\beta + 2\eta^2) + 2\eta^{2-\frac{2}{q}}\big)}{p(1-\eta)\big(3p(1-\eta)/4 -L\beta - \eta \big)}\\
    &\leq \frac{ B_q^2 \big( 3(1-p)^{-\frac{2}{q}}\eta^2 + 2\eta^{2-\frac{2}{q}}\big)}{p(1-\eta)\big(3p(1-\eta)/4 -L\beta - \eta \big)}\\
    &\leq \frac{ 5B_q^2 \eta^{2-\frac{2}{q}}}{p(1-\eta)\big(3p(1-\eta)/4 -L\beta - \eta \big)}
\end{align*}
where we used the choices $\beta \leq \frac{\eta^2}{L}$ and $p = 1-\eta.$

\end{document}